\newcommand{\R}{\mathbb{R}}
\newtheorem{definition}{Definition}
\newtheorem{assumption}{Assumption}
\newtheorem{theorem}{Theorem}
\newtheorem{lemma}{Lemma}
\newtheorem{remark}{Remark}
\begin{document}
	
	\title{Topology-Independent Robustness of the Weighted Mean under \\ Label Poisoning Attacks in Heterogeneous Decentralized Learning}
	
	\author{Jie~Peng,~
		Weiyu~Li,~
		Stefan~Vlaski,~
		Qing~Ling~
		\thanks{Jie Peng and Qing Ling are both with School of Computer Science
			and Engineering, Sun Yat-Sen University, Guangzhou, Guangdong 510006, China (Corresponding E-mail: lingqing556@mail.sysu.edu.cn). Weiyu Li is with School of Engineering and Applied Science, Harvard University, Cambridge, MA 02138, USA. Stefan Vlaski is with Department of Electrical and Electronic Engineering, Imperial College London, London, SW7 2BT, UK.}
\thanks{Part of this work was conducted while Jie Peng was a visiting PhD student at Imperial College London with support from the China Scholarship Council. Qing Ling (corresponding author) is supported by the National Key R\&D Program of China under Grant 2024YFA1014002 and the NSF China under Grant 62373388.}
	}

	
	
	
	\maketitle
	
	\begin{abstract}
		Robustness to malicious attacks is crucial for practical decentralized signal processing and machine learning systems. A typical example of such attacks is label poisoning, meaning that some agents possess corrupted local labels and share models trained on these poisoned data. To defend against malicious attacks, existing works often focus on designing robust aggregators; meanwhile, the weighted mean aggregator is typically considered a simple, vulnerable baseline. This paper analyzes the robustness of decentralized gradient descent under label poisoning attacks, considering both robust and weighted mean aggregators. Theoretical results reveal that the learning errors of robust aggregators depend on the network topology, whereas the performance of weighted mean aggregator is topology-independent. Remarkably, the weighted mean aggregator, although often considered vulnerable, can outperform robust aggregators under sufficient heterogeneity, particularly when: (i) the global contamination rate (i.e., the fraction of poisoned agents for the entire network) is smaller than the local contamination rate (i.e., the maximal fraction of poisoned neighbors for the regular agents); (ii) the network of regular agents is disconnected; or (iii) the network of regular agents is sparse and the local contamination rate is high. Empirical results support our theoretical findings, highlighting the important role of network topology in the robustness to label poisoning attacks.
	\end{abstract}
	
	\begin{IEEEkeywords}
		Robust aggregator, weighted mean aggregator, label poisoning attacks, decentralized signal processing, decentralized machine learning
	\end{IEEEkeywords}
	
	\section{Introduction}
	With the rise of large-scale foundation models, distributed signal processing and machine learning has become a dominant paradigm for efficient training on massive data using dispersed computation devices \cite{liu2024survey, liang2024communication, xu2025resource}. This paradigm includes two major categories: centralized learning (also known as federated learning) with a server-worker architecture, as well as decentralized learning operating under a peer-to-peer archi- tecture. This paper focuses on the latter, which offers higher flexibility in network topology and more balanced overhead in communication.
	
	In decentralized learning, multiple computation devices (referred to as agents hereafter) form a connected network \cite{tu2012diffusion,nedich2015convergence,lu2021optimal, cao2025on, huang2025dual}. Each agent possesses its own data and maintains its own model, updating the local model with the local data in each iteration and exchanging the updates with its neighbors. Then, each agent aggregates the received messages using a specified aggregator to update the local model. Such an approach has already been successfully applied in signal processing \cite{chang2020distributed, pu2020asymptotic, yemini2025Resilient} and deep learning \cite{tang2018d, yuan2022revisiting}.
	
	
	However, without a central authority, decentralized learning is vulnerable to malicious attacks. Under data poisoning or cyber attacks, some agents might behave abnormally and send incorrect messages to their neighbors, thereby disturbing the training process. For instance, in decentralized training of content moderation models on social media platforms, some users may attempt to manipulate the models by intentionally mislabeling harmful or inappropriate content as safe, leading to poisoned local labels and finally corrupting the learned models. Similar risks also exist in spam detection and crowd-sourcing applications \cite{jha2023label, jebreel2024lfighter}.
	These types of malicious behaviors are commonly modeled as label poisoning attacks, where the local labels of some agents are poisoned.

	While label poisoning attacks have been investigated in centralized learning, its impact on decentralized systems remains unexplored. For the latter, surprisingly we can observe from empirical results that: \textbf{the performance of robust aggregators is affected by the network topology; however that of the weighted mean aggregator is not}. Thus, in certain application scenarios, the weighted mean aggregator, often considered vulnerable, can outperform robust aggregators. Building upon this observation, we make the following theoretical contributions.
	
	\textbf{C1)} This is the \textit{first} work to investigate the robustness of weighted mean aggregator for decentralized learning.
	Our findings reveal a crucial insight: robust aggregators cannot always outperform the weighted mean aggregator under label poisoning attacks. This insight may encourage us to reconsider what are appropriate application scenarios for deploying different aggregators in decentralized learning.
	
	\textbf{C2)} We derive upper and lower bounds on the learning errors under label poisoning attacks for decentralized first-order algorithms. Theoretical results reveal that the learning errors of robust aggregators depend on the network topology, whereas the performance of weighted mean aggregator is topology-independent.

	\textbf{C3)} We provide both theoretical and empirical evidence that the weighted mean aggregator can outperform robust aggregators with sufficient heterogeneity, specifically when: (i) the global contamination rate (i.e., the fraction of poisoned agents for the entire network) is smaller than the local contamination rate (i.e., the maximal fraction of poisoned neighbors for the regular agents); (ii) the network of regular agents is disconnected; or (iii) the network of regular agents is sparse and the local contamination rate is high.
	
	\subsection{Related Works}
	
	Malicious attacks in decentralized learning can be broadly classified as targeted or untargeted. In this paper, we mainly focus on untargeted attacks, under which the goal of poisoned agents is to degrade the performance of trained models. Untargeted attacks fall into two main categories: model poisoning and data poisoning \cite{kairouz2021advances}.
	
	In model poisoning attacks, poisoned agents are able to send arbitrarily malicious local models to their neighbors. To defend against such attacks, many existing works combine the decentralized gradient descent algorithm (also known as the diffusion algorithm \cite{sayed2014adaptation}) with robust aggregators, such as trimmed mean (TriMean) \cite{yin2018byzantine, fang2022bridge}, FABA \cite{xia2019faba}, centered clipping (CC) \cite{karimireddy2021learning}, clipped gossip (CG) \cite{he2022byzantine, han2025byzantine}, IOS \cite{wu2023byzantine}, RFA \cite{pillutla2022robust}, Distance-MinMax Filtering \cite{kuwaranancharoen2024scalable}, M-FTM \cite{cao2025resilient}, to name a few. These aggregators aim to produce models bounded in distance from the global average. To address the issue of high data heterogeneity, which degrades the performance under model poisoning attacks, \cite{peng2024byzantine} introduce a total variation norm penalty into the original optimization problem, achieving an order-optimal learning error. Recent works also establish a unified framework for robust decentralized momentum methods \cite{farhadkhani2023robust} and determine the optimal breakdown points for robust aggregators \cite{gaucher2024unified}. 
	
	In data poisoning attacks, poisoned agents possess fabricated local data, but otherwise adhere to the prescribed learning protocol \cite{alber2025medical,cina2024machine,zhao2025data}. To defend against data poisoning attacks, a number of approaches utilize attack detection and adversarial training for recommendation systems, as well as sybil defense and data aggregation for crowd-sourcing platforms \cite{yuan2017sybil,zhao2021data}. Other methods involve data sanitization \cite{li2019dpif}, which removes the data that significantly deviates from the clean data, and data augmentation \cite{borgnia2021strong}, which adds additional data to regularize the decision boundary. For further coverage, see \cite{kairouz2021advances, fan2022survey}.
	
	It is important to note that general model poisoning and data poisoning attacks allow poisoned agents to arbitrarily corrupt their local models or data, potentially causing unbounded disturbance to the training process. However, real-world attacks are often not necessarily as malicious as the ones described above. For example, works such as \cite{ paudice2019label, tavallali2022adversarial, jha2023label, hallaji2023label} focus on label poisoning attacks, where some agents have poisoned local labels and update their local models with these poisoned data. Other studies address label flipping attacks \cite{xu2022rethinking, jebreel2024lfighter}, a specific form of label poisoning attacks in which  the poisoned agents flip their local labels from source classes to target classes. It is worth mentioning that although label poisoning attacks are special cases of data poisoning attacks, they are far from the worst-case attacks, as the poisoned agents only corrupt their local data at the label level.
	
	Robust aggregators have shown effectiveness against label poisoning attacks \cite{ karimireddy2021learning, he2022byzantine, gorbunov2023variance}, and some have been specifi- cally developed for this purpose. For example, LFighter \cite{jebreel2024lfighter} uses gradient clustering to filter out malicious updates. However, LFighter is designed for centralized learning with low heterogeneity \cite{peng2025mean}, and performs poorly when the data is highly non-i.i.d. and the regular network is disconnected, as shown in our numerical experiments. Beyond robust aggregators, other defenses also exist. For instance, \cite{hallaji2023label} proposes adversarial training with locally generated label-poisoned data, but limited to centralized learning with low heterogeneity too, making it less suitable for decentralized systems \cite{li2025threats}.
	
	Recent studies \cite{shejwalkar2022back, peng2025mean}, similar to our findings, demonstrate the robustness of mean aggregator against label poisoning attacks in centralized learning. However, they focus on the server-worker architecture and do not account for the network topology, an important factor influencing the performance of decentralized algorithms. In contrast, our work considers the more general and challenging decentralized architecture, which subsumes centralized learning as a special case.
	
	In summary, this is the first work to study the robustness of weighted mean aggregator in decentralized learning. Our results demonstrate that robust aggregators may not consistently surpass the weighted mean aggregator under label poisoning attacks, highlighting the need of revisiting the appropriate application scenarios for various aggregators.
	

	\subsection{A Motivating Example}
	Before introducing the technical details, we first illustrate a surprising empirical observation with a motivating example.
	
	We train a softmax regression model using the MNIST dataset through decentralized learning with 10 agents. The dataset has 10 classes of digits, and each agent contains the data from only one class. Among the agents, one of them is poisoned with label flipping attacks that reverses the local labels from $b$ to $9 - b$. We compare the classification accuracies for two network topologies: a two-castle graph and a fan graph. Note that for both network topologies, the regular networks, defined as the networks of regular agents, also differ and have less connectivity than the full networks.

	\begin{figure}[H]
		\centering
		\includegraphics[scale=0.22]{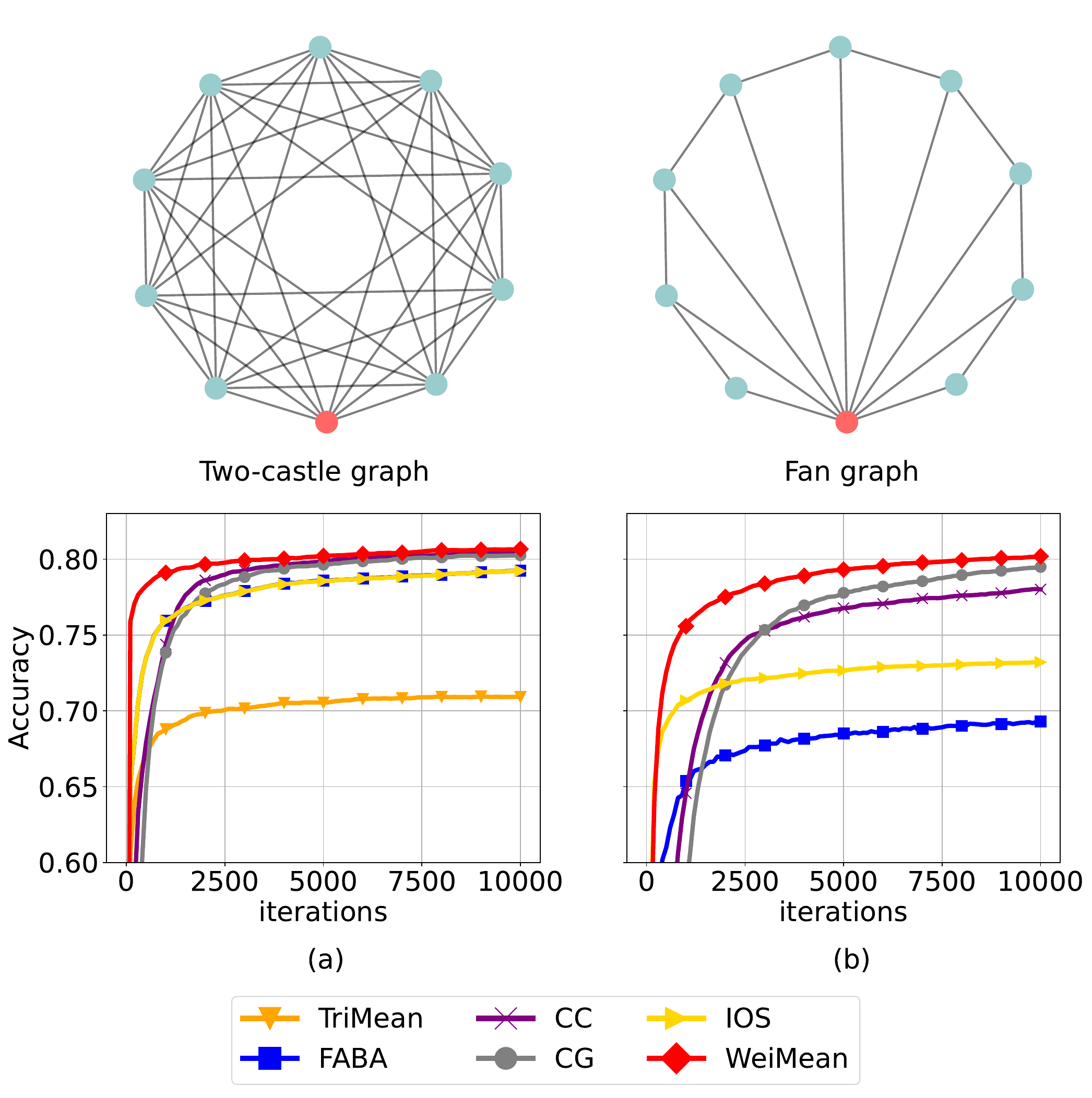}
		\caption{Classification accuracies of the softmax regression model trained on the non-i.i.d. MNIST dataset under label flipping attacks for: (a) the two-castle graph, (b) the fan graph. The blue and red points represent the regular and poisoned agents, respectively. The classification accuracy is in terms of the average model of all regular agents. WeiMean stands for the weighted mean, with the weights constructed using the Metropolis-Hastings rule \cite{he2022byzantine}.}
		\label{fig:SVM_digits_label_flipping_toyexample}
	\end{figure}
	
	Figure \ref{fig:SVM_digits_label_flipping_toyexample} compares the weighted mean aggregator against 5 state-of-the-art robust aggregators (TriMean, FABA, CC, CG and IOS). Surprisingly, the weighted mean remains robust across both network topologies, while the performance of the robust aggregators degrades significantly and falls behind that of weighted mean on the fan graph. This observation suggests that the suitability of different aggregators strongly depends on the network topology, and motivates our central question:
	
	\begin{center}
		\it{\textbf{Under what network topology can the weighted mean aggregator provably outperform robust aggregators?}}
	\end{center}

	\section{Problem Formulation}
	\label{problem formulation}
	
	We consider an undirected connected network $\mathcal{G} = (\mathcal{W}, \mathcal{E})$ of $W=|\mathcal{W}|$ agents, where $\mathcal{W}$ is the set of all agents and $\mathcal{E}$ is the set of all edges. An edge $(w, v) \in \mathcal{E}$ means that agents $w$ and $v$ are neighbors and can communicate with each other. The edge set $\mathcal{E}$ also contains self-loops $(w , w)$ for all agents $w \in \mathcal{W}$. Let $\mathcal{R} \subseteq \mathcal{W}$ be the set of regular agents, with $|\mathcal{R}| = R$. Our goal is to find the minimizer of the following optimization problem
	\begin{align}\label{problem: 1}
		\min_{x \in \R^D} f(x) \triangleq \frac{1}{R}  \sum_{w \in \mathcal{R}} f_w(x).
	\end{align}
	Here, $x\in \R^D$ is the optimization variable (also called as model), $f_w(x)$ is the local cost of regular agent $w \in \mathcal{R}$, $f(x)$ is the global cost that averages the local costs of all regular agents. Denote $\mathcal{R}_w$ and $\mathcal{N}_w$ as the set of regular neighbors and all neighbors of agent $w$ respectively, with $R_w = |\mathcal{R}_w|$ and $N_w = |\mathcal{N}_w |$. Further, let $\bar{\mathcal{R}}_w \triangleq \mathcal{R}_w \cup \{w\}$ and $\bar{\mathcal{N}}_w \triangleq \mathcal{N}_w \cup \{w\}$ denote the sets consisting of agent $w$ with its regular neighbors and with all neighbors, respectively, with $\bar{R}_w = |\bar{\mathcal{R}}_w|$ and $ \bar{N}_w = |\bar{\mathcal{N}}_w|$.
	
	
	With the aim of solving \eqref{problem: 1}, we characterize poisoned agents as those under label poisoning attacks: they follow the algorithmic protocol but possess poisoned local labels. Thus, the only difference between the regular and poisoned agents lies in the corrupted labels, which can affect model updates and harm the learned model.
	
	
	To solve \eqref{problem: 1} in the presence of poisoned agents, most existing works adopt the decentralized gradient descent algorithm with a robust aggregator. Generally, the algorithm operates in three steps: (i) local update; (ii) model exchange; and (iii) model aggregation. In this algorithm, each agent has its own local model, which is updated in every iteration using its local data. Specifically, in the local update step, each agent computes its local gradient and updates its model accordingly. For agent $w \in \mathcal{W}$, we denote $\hat{x}_w^k$ and $\nabla \hat{f}_w(\hat{x}_w^k)$ as its local model and local gradient at iteration $k$, respectively. Then the local update for  agent $w \in \mathcal{W}$ is
	\begin{align}\label{eq: local update}
		\hat{x}_w^{k+\frac12} = \hat{x}_w^k - \gamma^k \cdot \nabla \hat{f}_w(\hat{x}_w^k).
	\end{align}
	where $\gamma^k > 0$ is the step size. In the model exchange step, each agent $w \in \mathcal{W}$ sends the updated local model $\hat{x}_w^{k+\frac12}$ to its neighbors and receives the local models from all of its neighbors. In the model aggregation step, after receiving $\{\hat{x}_v^{k+\frac12} \! : \! v \in \mathcal{N}_w\}$ from its neighbors, each agent $w \in \mathcal{W}$ aggregates them with a robust aggregator $\text{RAgg}(\cdot)$, as
	\begin{align}\label{eq: RAgg}
		\hat{x}_w^{k+1} = \text{RAgg}\left( \{\hat{x}_v^{k+\frac{1}{2}}: v \in \bar{\mathcal{N}_w}\}\right).
	\end{align}
	Robust aggregators such as TriMean, FABA, CC, CG and IOS are commonly used in this step.
	
	A simple alternative is using the weighted mean aggregator, in which each agent averages its neighbors' models using a doubly stochastic mixing matrix $E \in \R^{W \times W}$. With the weighted mean aggregator, \eqref{eq: RAgg} becomes
	\begin{align}\label{eq: WeiMean}
		\hat{x}_w^{k+1} = & \text{WeiMean} \left( \{\hat{x}_v^{k+\frac{1}{2}}: v \in {\bar{\mathcal{N}}_w}\}\right) \\
		= & \sum_{v \in {\bar{\mathcal{N}}_w}} E_{wv} \hat{x}_v^{k+\frac{1}{2}}. \notag
	\end{align}
	The mixing matrix $E$ satisfies $E\bm{1}_W = \bm{1}_W$ and $\bm{1}_W^{\top}E = \bm{1}_W^{\top}$ with $\bm{1}_W \in \R^W$ being the $W$-dimensional all-one vector, and $E_{wv} \neq 0$ if and only if $(w ,v) \in \mathcal{E}$.

	We outline the decentralized gradient descent algorithm with  the robust aggregator $\text{RAgg}(\cdot)$ or the weighted mean aggregator $\text{WeiMean}(\cdot)$ in Algorithm 1.
	
	\begin{algorithm}[tb]
		\caption{Decentralized Gradient Descent with Robust and Weighted Mean Aggregators}
		\label{algorithm:1}
		\textbf{Input}:  Initialization $\hat{x}_w^0 = x^0 \in \R^D$ for all $w\in \mathcal{W}$; step size $\{\gamma^k, k = 0, 1, \cdots\}$; number of overall iterations $K$
		\begin{algorithmic}[1]
			\For{$k = 0, 1, \dots, K-1$}
			\State  Each agent $w$ computes the gradient $\nabla \hat{f}_w(\hat{x}_w^k)$ using
			\State  \quad its local data, and updates its local model via \eqref{eq: local update}.
			\State  Each agent $w$ sends $\hat{x}_w^{k+\frac{1}{2}}$ to its neighbors, and re-
			\State  \quad ceives $\{\hat{x}_v^{k+\frac{1}{2}} : v \in \mathcal{N}_w\}$.
			\State  Each agent $w$ updates its model $\hat{x}_w^{k+1}$ via \eqref{eq: RAgg} or \eqref{eq: WeiMean}.
			\EndFor
		\end{algorithmic}
	\end{algorithm}

	\section{Convergence Analysis}
	\label{sec: Convergence Analysis}
	
	Next, we analyze the convergence of Algorithm 1 with the robust and the weighted mean aggregators under label poisoning attacks. We first make the following assumptions. For regular agent $w \in \mathcal{R}$ (or poisoned agent $w \in \mathcal{W} \setminus \mathcal{R}$), we denote $x_w$ and $\nabla f_w$ (or $\tilde{x}_w$ and $\nabla \tilde{f}_w$) as its local model and local gradient, respectively.
	
	\begin{assumption}[\textbf{Lower boundedness}]\label{assump: lower boundedness}
		The global cost $f(\cdot)$ is lower bounded by $f^*$, i.e., $f(x) \geq f^*$ for any $x \in \R^D$.
	\end{assumption}
	
	\begin{assumption}[\textbf{Lipschitz continuous gradients}]\label{assump: L-smooth}
		The local cost $f_w(\cdot)$ of regular agent $w \in \mathcal{R}$ or $\tilde{f}_w(\cdot)$ of poisoned agent $w \in \mathcal{W} \setminus \mathcal{R}$ has $L$-Lipschitz continuous gradients, i.e., for any $x, y \in \R^D$, it holds that
		\begin{align}
			\|\nabla f_w(x) - \nabla f_w(y)\| &\leq L\|x - y\|, \ \forall w \in \mathcal{R}, \\
			\|\nabla \tilde{f}_w(x) - \nabla \tilde{f}_w(y)\| &\leq L\|x - y\|, \ \forall w \in \mathcal{W} \setminus \mathcal{R}.
		\end{align}
	\end{assumption}
	
	\begin{assumption}[\textbf{Bounded heterogeneity}]\label{assump: bounded heterogeneity}
		The maximal distance between the local gradients of regular agents $w \in \mathcal{R}$ and the global gradient is upper bounded by $\xi$, i.e., for any $x \in \R^D$, it holds that
		\begin{align}
			\max_{w \in \mathcal{R}} \|\nabla f_w(x) - \nabla f(x)\| \leq \xi.
		\end{align}
	\end{assumption}
	
	\begin{assumption}[\textbf{Bounded disturbances of poisoned local gradients}] \label{assump: bounded disturbances}
		The maximal distance between the poisoned local gradients of poisoned agents $w \in \mathcal{W} \setminus \mathcal{R}$ and the global gradient is upper bounded by $A$, i.e., for any $x \in \R^D$, it holds that
		\begin{align}\label{eq: bounded disturances}
			\max_{w \in \mathcal{W} \setminus \mathcal{R}} \|\nabla \tilde{f}_w(x) - \nabla f(x)\| \leq A.
		\end{align}
	\end{assumption}

	Assumptions \ref{assump: lower boundedness} and \ref{assump: bounded heterogeneity} are both standard and widely used in the analysis of decentralized first-order methods \cite{lian2017can, farhadkhani2023robust,wu2023byzantine}. Assumptions \ref{assump: L-smooth} and \ref{assump: bounded disturbances} involve poisoned agents, and do not hold under general model and data poisoning attacks, since the poisoned agents in such settings can arbitrarily manipulate their local models or data, leading to potentially non-Lipschitz local gradients and unbounded deviations from the global gradient. However, these assumptions can hold under weaker attacks, such as label poisoning that we are interested in. For example, as proved in \cite{peng2025mean}, for softmax regression, local labels do not affect the Hessian matrix of $\hat{f}_w$, and thus Assumption \ref{assump: L-smooth} is satisfied. Further, it is also proved that the local gradient $\nabla \hat{f}_w$ is bounded and Assumption \ref{assump: bounded disturbances} is satisfied.
	
	With particular note, \cite{peng2025mean} demonstrate that for softmax regression on non-i.i.d. data, the constant $A$ is of the same order as the constant $\xi$, i.e., $A = \Theta(\xi)$. The same conclusion holds for deep learning, meaning that $A$ and $\xi$ can be of the same order given sufficiently heterogeneous data distributions. We will use this observation in our subsequent analysis.

	\subsection{Main Results}
	To establish the convergence analysis of Algorithm \ref{algorithm:1} with robust aggregators, we need to formally characterize the robust aggregators. As discussed in \cite{vlaski2022robust}, the output of a robust aggregator should approximate a convex combination of the regular inputs. Based on this consideration, we define the robust aggregators as follows.

	\begin{definition}[\textbf{$(\rho, M)$-robust aggregator}]\label{def: rho-robust aggregator}
		Consider a virtual mixing matrix $M \! \in\!  \mathbb{R}^{R \times R}$ defined over the regular network, which consists of the regular agents and the edges connecting them.
		The $(w, v)$-th entry of $M$ satisfies $M_{wv} \in [0, 1]$ if $v \in {\bar{\mathcal{R}}_w}$, while $M_{wv} = 0$ otherwise. Further, $\sum_{v \in {\bar{\mathcal{R}}_w}} M_{wv} = 1$ for all $w \in \mathcal{R}$. For any regular agent $w \in \mathcal{R}$, consider any ${\bar{N}_w}$ vectors $\{y_v: v \in {\bar{\mathcal{N}}_w}\}$ among which {$\bar{R}_w$} vectors are regular. An aggregator $\text{RAgg}(\cdot)$ is said to be a $(\rho, M)$-robust aggregator if there exists a contraction constant $\rho \geq 0$ such that
		\begin{align}\label{eq: rho-robust aggregartor}
			& \left\|{\rm{RAgg}} \left(\{y_v:v \in {\bar{\mathcal{N}}_w}\}\right) - \bar{y}_w \right\| \\ \leq& \rho \cdot \max_{v \in {\bar{\mathcal{R}}_w}} \|y_v - \bar{y}_w\|, \nonumber
		\end{align}
		where $\bar{y}_w = \sum_{v \in {\bar{\mathcal{R}}_w}} M_{wv} y_v$ is a convex combination of the regular inputs.
	\end{definition}

	Definition \ref{def: rho-robust aggregator} characterizes the ``approximation abilities'' of the robust aggregators to a convex combination of regular inputs, and aligns with the definitions in {\cite{wu2023byzantine, ye2023tradeoff, yang2024byzantine, wang2024dual}. Many state-of-the-art robust aggregators, including but not limited to TriMean, FABA, CC, CG and IOS, are $(\rho, M)$-robust aggregators. Defining the \textbf{local contamination rate} $ \delta_{\max} \triangleq$ $\max_{w \in \mathcal{R}} (1 - {\frac{\bar{R}_w}{\bar{N}_w}}) $, which represents the maximal fraction of poisoned neighbors for the regular agents, the contraction constants $ \rho $ and the stochasticity of virtual mixing matrices $ M $ for these robust aggregators are presented in Table \ref{table: rho}.

	\begin{table}[!h]
		\centering
		\small 
		\setlength{\tabcolsep}{6pt} 
		\begin{tabular}{c c c c}
			\toprule
			& Aggregator & $\rho$ & $M$ \\
			\midrule
			& TriMean & $O\left(\frac{\delta_{\max}}{1 - 2\delta_{\max}}\right)$ & row-stochastic \\
			& FABA & $O\left(\frac{\delta_{\max}}{1 - 3\delta_{\max}}\right)$ & row-stochastic \\
			& CC & $O\left( \sqrt{\delta_{\max}} \right)$ & row-stochastic \\
			& CG & $O\left( \sqrt{\delta_{\max} (1 - \delta_{\max})} \right)$ & doubly stochastic \\
			& IOS & $O\left(\frac{\delta_{\max}}{1 - 3\delta_{\max}}\right)$ & doubly stochastic \\
			\bottomrule
		\end{tabular}
		\vspace{1.5mm}
		\caption{The contraction constants $ \rho $ and the stochasticity of virtual mixing matrices $ M $ of TriMean, FABA, CC, CG and IOS. The proofs for TriMean, FABA and CC are given in \cite{peng2025mean}, while the proofs for CG and IOS are given in \cite{wu2023byzantine, ye2023tradeoff}. }
		\label{table: rho}
	\end{table}

	
	

	Applying the contraction property in Definition \ref{def: rho-robust aggregator}, below we establish the convergence of Algorithm \ref{algorithm:1} with a  $(\rho, M)$-robust aggregator.
	
	\begin{theorem}\label{thm: RAgg}
		Consider Algorithm \ref{algorithm:1} with a  $(\rho, M)$-robust aggregator $\text{RAgg}(\cdot)$ to solve \eqref{problem: 1} under label poisoning attacks. Denote $p \in \R^R$ as the Perron vector of the virtual mixing matrix $M$ (i.e., $p^{\top}M = p^{\top}$ and $p^{\top} \bm{1}_R = 1$). Also define $\lambda \triangleq \|M - \bm{1}_{R} p^{\top}\|_{\infty}$ and $\beta \triangleq \|\bm{1}_{R}p^{\top} - \frac{1}{R} \bm{1}_{R}\bm{1}_{R}^{\top}\|_{\infty}$. Let $\gamma^k = \gamma = \frac{1}{\sqrt{K}}$, if $\lambda < 1$, $\rho < \frac{1 - \lambda}{4}$, and Assumptions \ref{assump: lower boundedness}, \ref{assump: L-smooth} and \ref{assump: bounded heterogeneity} are satisfied, when $K \geq \max\{\frac{16(\lambda + 4\rho)^2L^2}{(1 - \lambda-4\rho)^2}, L^2\}$, the consensus error of regular agents satisfies
		\begin{align}
			\max_{w \in \mathcal{R}} \|x_w^k - \bar{x}^k\|^2  = \ O\left(\frac{(\lambda + \rho)^2\xi^2}{K(1 - \lambda - 4\rho)^2}\right).
		\end{align}
		In addition, the average model of regular agents satisfies
		\begin{align}\label{eq: convergence for RAgg}
			&\frac{1}{K} \sum_{k=0}^{K-1} \|\nabla f(\bar{x}^k)\|^2 = \ O \left(\frac{f(x^0) - f^*}{\sqrt{K}}\right) \\ & + O\left(\frac{(\rho+1)^2L^2   (\lambda+\rho)^2\xi^2}{K(1-\lambda-4\rho)^2}\right) \nonumber \\&  + O\left(\left(\frac{\rho^2}{(1-\lambda-4\rho)^2} + \beta^2 \right)\xi^2\right). \nonumber
		\end{align}
		Here, $\bar{x}^k \triangleq \frac{1}{R} \sum_{w \in \mathcal{R}} x_w^k$.
	\end{theorem}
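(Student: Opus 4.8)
\emph{Proof proposal.} The plan is to run the classical two-track analysis of decentralized gradient descent -- a \textbf{disagreement (consensus) recursion} for the regular agents together with a \textbf{descent inequality} for their weighted average -- but carried out relative to the Perron vector $p$ of the virtual mixing matrix $M$, which is exactly what forces the topology-dependent quantities $\lambda$ and $\beta$ into the final bound. Stack the regular models into $\mathbf{x}^k\in\R^{R\times D}$ with rows $x_w^k$, let $\nabla F(\mathbf{x}^k)$ stack $\nabla f_w(x_w^k)$, write $\|\mathbf{y}\|_\infty:=\max_{w\in\mathcal{R}}\|y_w\|$, and set $\mathbf{x}_\perp^k:=\mathbf{x}^k-\bm{1}_Rp^\top\mathbf{x}^k$, $\pi^k:=p^\top\mathbf{x}^k$. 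By \eqref{eq: local update} and Definition \ref{def: rho-robust aggregator}, $\mathbf{x}^{k+1}=M(\mathbf{x}^k-\gamma\nabla F(\mathbf{x}^k))+\mathbf{e}^{k+1}$, where the aggregation error obeys $\|e_w^{k+1}\|\le\rho\max_{v\in\bar{\mathcal{R}}_w}\|\hat{x}_v^{k+1/2}-\bar{y}_w^{k+1/2}\|$. Since each $\bar{y}_w^{k+1/2}$ is a convex combination of $\{\hat{x}_v^{k+1/2}:v\in\bar{\mathcal{R}}_w\}$, Assumptions \ref{assump: L-smooth} and \ref{assump: bounded heterogeneity} give $\|\hat{x}_u^{k+1/2}-\hat{x}_v^{k+1/2}\|\le(2+2\gamma L)\|\mathbf{x}_\perp^k\|_\infty+2\gamma\xi$ for regular $u,v$ with a common regular neighbor, hence (after a careful triangle-inequality accounting that produces the constant $4$) $\|\mathbf{e}^{k+1}\|_\infty\le O(\rho)\bigl((1+\gamma L)\|\mathbf{x}_\perp^k\|_\infty+\gamma\xi\bigr)$.

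\textbf{Step 1 (consensus recursion).} Multiplying the update by $I-\bm{1}_Rp^\top$ and using $(M-\bm{1}_Rp^\top)\bm{1}_Rp^\top=0$, $\|M-\bm{1}_Rp^\top\|_\infty=\lambda$, $\|\bm{1}_Rp^\top\|_\infty=1$, together with $\|\nabla F(\mathbf{x}^k)\|_\infty\le\xi+L(1+\beta)\|\mathbf{x}_\perp^k\|_\infty+\|\nabla f(\bar{x}^k)\|$ (Assumptions \ref{assump: L-smooth}, \ref{assump: bounded heterogeneity}, and $\|\pi^k-\bar{x}^k\|\le\beta\|\mathbf{x}_\perp^k\|_\infty$ since $\|p-\tfrac1R\bm{1}_R\|_1=\beta$) and the bound on $\|\mathbf{e}^{k+1}\|_\infty$, one obtains
\[
\|\mathbf{x}_\perp^{k+1}\|_\infty\le\bigl(\lambda+O(\rho)+O(\gamma L)\bigr)\|\mathbf{x}_\perp^k\|_\infty+O\bigl((\lambda+\rho)\gamma\xi\bigr)+O(\gamma)\|\nabla f(\bar{x}^k)\|.
\]
Under $\rho<(1-\lambda)/4$ and $K\ge\frac{16(\lambda+4\rho)^2L^2}{(1-\lambda-4\rho)^2}$ the multiplicative factor is some $\eta<1$ with $1-\eta=\Theta(1-\lambda-4\rho)$; unrolling from $\mathbf{x}_\perp^0=0$ and using $\gamma=1/\sqrt K$ with Cauchy--Schwarz gives a pointwise bound of the claimed shape and $\frac1K\sum_{k}\|\mathbf{x}_\perp^k\|_\infty^2=O\!\bigl(\frac{(\lambda+\rho)^2\xi^2}{K(1-\lambda-4\rho)^2}\bigr)+O\!\bigl(\frac1{K(1-\lambda-4\rho)^2}\bigr)\frac1K\sum_k\|\nabla f(\bar{x}^k)\|^2$. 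The $\|\nabla f\|^2$ feedback is shown $O(1/K)$ in Step 2, which then yields the stated consensus-error bound (converting $\pi^k$ to $\bar{x}^k$ costs a factor $1+\beta$, absorbed).

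\textbf{Step 2 (descent and telescoping).} Since $p^\top M=p^\top$, $\pi^{k+1}=\pi^k-\gamma g^k+p^\top\mathbf{e}^{k+1}$ with $g^k:=\sum_w p_w\nabla f_w(x_w^k)$. Decompose $g^k=\nabla f(\bar{x}^k)+r^k$, where $r^k=\sum_w(p_w-\tfrac1R)(\nabla f_w(x_w^k)-\nabla f(\bar{x}^k))+\tfrac1R\sum_w(\nabla f_w(x_w^k)-\nabla f_w(\bar{x}^k))$ satisfies $\|r^k\|=O(\beta\xi)+O\bigl((1+\beta)L\|\mathbf{x}_\perp^k\|_\infty\bigr)$ by Assumption \ref{assump: bounded heterogeneity}. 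Apply $L$-smoothness of $f$ at $\pi^k$ and Young's inequality to the cross terms $-\gamma\langle\nabla f(\pi^k),r^k\rangle$ (constant Young parameter: the $\gamma$ already in front makes this harmless) and $\langle\nabla f(\pi^k),p^\top\mathbf{e}^{k+1}\rangle$ ($\gamma$-scaled parameter, hence a $1/\gamma^2$ weight on $\|\mathbf{e}^{k+1}\|_\infty^2$); with $\gamma^2L\le1$ (from $K\ge L^2$) the coefficient of $\|\nabla f(\bar{x}^k)\|^2$ stays $\le-\gamma/4$. Summing over $k$, using $f(\pi^0)=f(x^0)$, $f(\pi^K)\ge f^*$, dividing by $\gamma K/4$, and substituting the Step-1 bounds and $\|\mathbf{e}^{k+1}\|_\infty\le O(\rho)((1+\gamma L)\|\mathbf{x}_\perp^k\|_\infty+\gamma\xi)$: the $\frac1K\sum_k\|\nabla f(\bar{x}^k)\|^2$ that re-appears on the right carries coefficient $O\!\bigl(\frac{\rho^2}{(1-\lambda-4\rho)^2}\bigr)+O(\gamma^2L^2)<1$ and is absorbed into the left side. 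What remains is $O\!\bigl(\frac{f(x^0)-f^*}{\sqrt K}\bigr)$, an $O(1/K)$ term inheriting the consensus-error bound scaled by the smoothness/aggregator factor $(\rho+1)^2L^2$, and a $K$-independent floor $O\!\bigl((\frac{\rho^2}{(1-\lambda-4\rho)^2}+\beta^2)\xi^2\bigr)$: the $\rho^2/(1-\lambda-4\rho)^2$ from $\frac1{\gamma^2}\|\mathbf{e}^{k+1}\|_\infty^2$ with $\|\mathbf{e}^{k+1}\|_\infty=O(\gamma\rho\xi/(1-\lambda-4\rho))$, and the $\beta^2$ from $\|r^k\|^2$ (here using $(\lambda+\rho)^2<1$, valid since $\rho<(1-\lambda)/4$, to drop an extra factor $(\lambda+\rho)^2$). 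This is \eqref{eq: convergence for RAgg}.

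\textbf{Main obstacle.} The analysis is genuinely circular: the disagreement recursion is driven by $\|\nabla f(\bar{x}^k)\|$, while the descent error is driven by $\|\mathbf{x}_\perp^k\|^2$ through the aggregation error $\mathbf{e}^{k+1}$, so the two tracks feed each other. Closing the loop requires (i) the contraction condition $\rho<(1-\lambda)/4$ \emph{and} the two lower bounds on $K$, so that $\lambda+4\rho+O(\gamma L)<1$ makes the disagreement recursion a genuine contraction, and (ii) a careful constant-level accounting so that the fed-back $\frac1K\sum_k\|\nabla f(\bar{x}^k)\|^2$ ends up with coefficient strictly below $1$. The second delicate point -- and the conceptual heart of the topology dependence -- is that the two $K$-independent floor terms $\frac{\rho^2}{(1-\lambda-4\rho)^2}\xi^2$ and $\beta^2\xi^2$ cannot be driven to zero: the first is intrinsic to the aggregator output being only $O(\rho)$-close to a convex combination of the regular inputs, and the second is the price of $M$ being merely row-stochastic rather than doubly stochastic, i.e.\ the gap between the Perron-weighted and the uniform averages.
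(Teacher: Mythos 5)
Your overall architecture (descent inequality for the Perron-weighted centroid plus a disagreement recursion, with $\lambda$ and $\beta$ entering through $M-\bm{1}_Rp^{\top}$ and $\bm{1}_Rp^{\top}-\frac1R\bm{1}_R\bm{1}_R^{\top}$) is the same as the paper's, but the execution has a genuine gap at the point you yourself flag as the "main obstacle." You let the disagreement recursion be driven by $\|\nabla f(\bar{x}^k)\|$ (via the crude bound $\|\nabla F(\mathbf{x}^k)\|_\infty\le\xi+L(1+\beta)\|\mathbf{x}_\perp^k\|_\infty+\|\nabla f(\bar{x}^k)\|$), which makes the two tracks circular, and you then close the loop by asserting that the recycled term $\frac1K\sum_k\|\nabla f(\bar{x}^k)\|^2$ reappears with coefficient $O\bigl(\frac{\rho^2}{(1-\lambda-4\rho)^2}\bigr)+O(\gamma^2L^2)<1$. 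That inequality does not follow from the hypotheses: the theorem only assumes $\rho<\frac{1-\lambda}{4}$, under which $\frac{\rho^2}{(1-\lambda-4\rho)^2}$ can be arbitrarily large (e.g.\ $\lambda=0$, $\rho=0.24$ gives ratio $36$), so the absorption step fails and with it both the learning-error bound and the stated consensus bound (which, in your scheme, also inherits a $\|\nabla f\|$-dependent term that the theorem does not contain). No choice of $K$ repairs this, since the offending coefficient is $\gamma$-free.

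The fix is exactly the cancellation the paper exploits and your $(I-\bm{1}_Rp^{\top})$ projection already makes available: the full-gradient component never needs to enter the disagreement dynamics. In the paper's step \eqref{proof:thm1-10}, the deviation $\nabla f_w(x_w^k)-\sum_{v}p_v\nabla f_v(x_v^k)$ is centered by adding and subtracting $\nabla f$ at the local iterates and at $\tilde{x}^k$; because the weights satisfy $(1-p_w)-\sum_{v\neq w}p_v=0$, the $\nabla f(\tilde{x}^k)$ terms cancel exactly and the deviation is bounded by $2\xi+2L\max_w\|x_w^k-\tilde{x}^k\|$ alone. (Equivalently, in your notation, use $(M-\bm{1}_Rp^{\top})\bm{1}_Rc^{\top}=0$ to replace $\nabla F(\mathbf{x}^k)$ by $\nabla F(\mathbf{x}^k)-\bm{1}_R\nabla f(\tilde{x}^k)^{\top}$ before taking norms.) The disagreement recursion then contracts with rate $1-\omega=\lambda+4\rho$ and driving term $O(\gamma\xi)$ only, yielding the uniform, feedback-free bound $\max_w\|x_w^k-\tilde{x}^k\|\le\frac{4(1-\omega)\gamma\xi}{\omega}$ of \eqref{proof:thm1-23}; plugging this into the descent inequality gives the theorem directly, with no self-referential term to absorb and hence no need for any condition beyond $\rho<\frac{1-\lambda}{4}$ and the stated lower bounds on $K$. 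Your identification of the two irreducible floor terms ($\rho^2/(1-\lambda-4\rho)^2$ from the aggregation error, $\beta^2$ from row- versus doubly-stochastic $M$) is correct, but as written the proof does not go through.
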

	
	From Theorem \ref{thm: RAgg}, we conclude that when the number of overall iterations $K$ is sufficiently large, the local models of regular agents will reach consensus, and the learning error of Algorithm \ref{algorithm:1} with a $(\rho, M)$-robust aggregator is in the order of $O((\frac{\rho^2 }{(1-\lambda-4\rho)^2} + \beta^2)\xi^2)$. When the virtual mixing matrix $M$ is doubly stochastic, which holds for CG and IOS as shown in Table \ref{table: rho}, we have $\beta = 0$ and the learning error becomes $O(\frac{\rho^2 \xi^2}{(1 - \lambda - 4\rho)^2})$. To be specific, when $\delta_{\max}$ and $\lambda$ are sufficiently small, the learning errors of CG and IOS are $O(\delta_{\max} \xi^2)$ and $O(\delta_{\max}^2 \xi^2)$, respectively. Further, when there are no poisoned agents, $\delta_{\max}=0$ so that $\rho = 0$, and the learning error becomes zero, aligning with our intuition.
	
	We are also able to prove the convergence of Algorithm \ref{algorithm:1} with the weighted mean aggregator under label poisoning attacks. Define the \textbf{global contamination rate} $ \delta \triangleq 1 - \frac{R}{W} $, which represents the fraction of poisoned agents for the entire network. We have the following result.

	\begin{theorem}\label{thm: WeiMean}
		Consider Algorithm \ref{algorithm:1} with the weighted mean aggregator $\text{WeiMean}(\cdot)$ to solve \eqref{problem: 1} under label poisoning attacks. Denote $\lambda' = \|E - \frac{1}{W} \bm{1}_W\bm{1}_W^{\top}\|$. Let $\gamma^k = \gamma = \frac{1}{\sqrt{K}}$, if Assumptions \ref{assump: lower boundedness}, \ref{assump: L-smooth}, \ref{assump: bounded heterogeneity} and \ref{assump: bounded disturbances} are satisfied, when $K \geq \max\{\frac{24\lambda'^2
			L^2}{(1 - \lambda')^2}, L^2\}$, the
		consensus error of regular agents satisfies
		\begin{align}
			& \max_{w \in \mathcal{R}} \|x_w^k - \bar{x}^k\|^2 \\
			& = O\left(\frac{(\lambda')^2(\delta^2 A^2 + \max\{\xi^2, A^2\})}{K(1 - \lambda')^2}\right). \notag
		\end{align}
		In addition, the average model of regular agents satisfies
		\begin{align}
			&\frac{1}{K} \sum_{k=0}^{K- 1} \|\nabla f(\bar{x}^k)\|  = O\left(\frac{f(x^0) - f^*}{\sqrt{K}}\right) \label{eq: convergence for WeiMean}\\& + O\left(\frac{L^2(\lambda')^2(\delta^2 A^2 + \max\{\xi^2, A^2\})}{K(1 - \lambda')^2}\right) + O(\delta^2A^2). \nonumber
		\end{align}
		Here, $\bar{x}^k\triangleq \frac{1}{R} \sum_{w \in \mathcal{R}} x_w^k$.
	\end{theorem}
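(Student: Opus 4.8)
Here is the plan I would follow.

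The key structural observation is that Algorithm~\ref{algorithm:1} with the weighted mean aggregator is \emph{exactly} decentralized gradient descent run over the \emph{entire} network of $W$ agents with the doubly stochastic matrix $E$: every agent $w\in\mathcal{W}$, regular or poisoned, performs the local step \eqref{eq: local update} followed by the linear averaging \eqref{eq: WeiMean}, the only asymmetry being that a poisoned agent feeds in $\nabla\tilde f_w$ instead of $\nabla f_w$. Stacking the iterates in $\hat x^k\in\R^{W\times D}$ and the gradients in $G^k$ gives $\hat x^{k+1}=E(\hat x^k-\gamma^k G^k)$, and the natural Lyapunov function is the \emph{full} objective $\hat f(x)\triangleq\frac1W\sum_{w\in\mathcal W}\hat f_w(x)$, not $f$. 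I would first relate $\hat f$ to $f$: writing $\nabla\hat f=\tfrac RW\nabla f+\tfrac{W-R}{W}\bar g_{\mathrm{poison}}$ with $\bar g_{\mathrm{poison}}$ the average poisoned gradient, Assumption~\ref{assump: bounded disturbances} gives $\|\nabla\hat f(x)-\nabla f(x)\|\le\tfrac{W-R}{W}A=\delta A$ for all $x$; moreover the per-agent heterogeneity relative to $\hat f$ satisfies $\|\nabla\hat f_w(x)-\nabla\hat f(x)\|\le\delta A+\xi$ for regular $w$ (Assumption~\ref{assump: bounded heterogeneity}) and $\le\delta A+A$ for poisoned $w$, so the effective heterogeneity of the augmented system is $\zeta\triangleq\delta A+\max\{\xi,A\}$, whose square is exactly the $\delta^2A^2+\max\{\xi^2,A^2\}$ appearing in the statement. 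I would also record $\|\bar x^k-\bar z^k\|\le\max_{w\in\mathcal R}\|x_w^k-\bar z^k\|$, where $\bar z^k\triangleq\tfrac1W\bm1_W^\top\hat x^k$ is the full average, so that any bound on quantities centered at $\bar z^k$ transfers to $\bar x^k$ up to the consensus radius.

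The second step is the textbook two-part decentralized gradient descent argument (parallel to the proof of Theorem~\ref{thm: RAgg}, but simpler since $E$ mixes exactly rather than up to a $\rho$-contraction), now applied to the augmented system. For the consensus error $C^k\triangleq\|\hat x^k-\bm1_W(\bar z^k)^\top\|$, double stochasticity of $E$ yields $\hat x^{k+1}-\bm1_W(\bar z^{k+1})^\top=(E-\tfrac1W\bm1_W\bm1_W^\top)\big(\hat x^k-\bm1_W(\bar z^k)^\top-\gamma^k G^k\big)$; bounding the row-centered gradient matrix by $L\,C^k+\sqrt W\,\zeta$ through $L$-smoothness (Assumption~\ref{assump: L-smooth}) and the heterogeneity estimate above, and using $\|E-\tfrac1W\bm1_W\bm1_W^\top\|=\lambda'$, gives a recursion $C^{k+1}\le\lambda'(1+c\gamma L)C^k+c'\gamma\lambda'\sqrt W\,\zeta$. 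The hypothesis $K\ge 24\lambda'^2L^2/(1-\lambda')^2$ is precisely what forces the contraction factor $\lambda'(1+c\gamma L)\le\tfrac{1+\lambda'}2<1$, and since $\hat x_w^0=x^0$ for all $w$ so that $C^0=0$, a geometric-series bound gives $C^k=O\big(\gamma\lambda'\sqrt W\,\zeta/(1-\lambda')\big)$, which with $\gamma=1/\sqrt K$ is exactly the stated consensus bound $\max_{w\in\mathcal R}\|x_w^k-\bar x^k\|^2=O\big((\lambda')^2(\delta^2A^2+\max\{\xi^2,A^2\})/(K(1-\lambda')^2)\big)$. For the descent part I would apply the $L$-smoothness descent lemma to $\hat f$ along $\bar z^{k+1}=\bar z^k-\gamma^k\overline{G^k}$, control the inexactness $\|\overline{G^k}-\nabla\hat f(\bar z^k)\|\le\tfrac{L}{\sqrt W}C^k$ by the consensus bound just proved, absorb the $O(\gamma^2L)$ term using $\gamma L\le1$ (the $K\ge L^2$ hypothesis), and telescope over $k=0,\dots,K-1$; with $\gamma=1/\sqrt K$ and Assumption~\ref{assump: lower boundedness} this produces $\tfrac1K\sum_k\|\nabla\hat f(\bar z^k)\|^2=O\big(\tfrac{f(x^0)-f^*}{\sqrt K}\big)+O\big(\tfrac{L^2(\lambda')^2(\delta^2A^2+\max\{\xi^2,A^2\})}{K(1-\lambda')^2}\big)$ with \emph{no} constant residual, because $\hat f$ is precisely the minimized objective.

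The last step converts this into \eqref{eq: convergence for WeiMean}. From $\|\nabla f(\bar x^k)\|\le\|\nabla\hat f(\bar z^k)\|+\|\nabla\hat f(\bar z^k)-\nabla f(\bar z^k)\|+L\|\bar x^k-\bar z^k\|\le\|\nabla\hat f(\bar z^k)\|+\delta A+L\|\bar x^k-\bar z^k\|$, averaging over $k$, using the consensus bound to absorb $\|\bar x^k-\bar z^k\|$, and applying Cauchy--Schwarz/Jensen to pass from $\tfrac1K\sum\|\nabla\hat f(\bar z^k)\|^2$ to $\tfrac1K\sum\|\nabla\hat f(\bar z^k)\|$, one obtains the claimed bound, in which the irreducible term $O(\delta^2A^2)$ (equivalently an $O(\delta A)$ floor on $\|\nabla f\|$) is exactly the bias of the full average converging to a stationary point of $\hat f$ rather than of $f$: there $\nabla\hat f=0$ forces $\nabla f=-\tfrac{W-R}{R}\bar g_{\mathrm{poison}}$, so Assumption~\ref{assump: bounded disturbances} gives $\|\nabla f\|\le\delta A$. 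The part requiring the most care is this translation layer: the dynamics only control the \emph{full}-network average $\bar z^k$ and potential $\hat f$, whereas the statement concerns the \emph{regular}-agent average $\bar x^k$ and objective $f$, so the $\hat f\!\to\!f$ gap ($\delta A$), the $\bar z^k\!\to\!\bar x^k$ gap (the consensus radius), and the augmented heterogeneity $\zeta=\delta A+\max\{\xi,A\}$ must be propagated consistently and kept at the sharp order $\delta A$ rather than the crude $A$; the remaining delicacy is that the constant step size leaves a non-vanishing consensus radius, which is why both estimates are normalized by $1/\sqrt K$ and combined only at the end.
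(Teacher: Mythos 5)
Most of your plan coincides with the paper's proof: you treat the weighted-mean algorithm as exact DGD over the full $W$-agent network with the doubly stochastic $E$, run the standard consensus recursion for $\|\hat{X}^k-\bm{1}_W(\bar z^k)^{\top}\|_F$ with contraction factor $\lambda'$, identify the effective heterogeneity $\zeta^2=\Theta(\delta^2A^2+\max\{\xi^2,A^2\})$ via the decomposition into deviation-from-regular-average plus the $\delta A$ bias, and note that $K\geq 24\lambda'^2L^2/(1-\lambda')^2$ with $\gamma=1/\sqrt K$ and $C^0=0$ gives exactly the stated consensus bound. All of that is the paper's argument.

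The gap is in your Lyapunov choice. You propose to run the descent lemma on $\hat f\triangleq\frac1W\sum_{w\in\mathcal W}\hat f_w$ along $\bar z^k$, telescope, and claim $\frac1K\sum_k\|\nabla\hat f(\bar z^k)\|^2=O\bigl(\frac{f(x^0)-f^*}{\sqrt K}\bigr)+O\bigl(\frac{L^2(\lambda')^2\zeta^2}{K(1-\lambda')^2}\bigr)$ ``with no constant residual,'' deferring the $\delta A$ bias to the final conversion $\|\nabla f\|\leq\|\nabla\hat f\|+\delta A+\dots$. This step fails under the stated assumptions: Assumption~\ref{assump: lower boundedness} bounds only $f$ below, and $\hat f$ need not be bounded below, nor is $\hat f(x^0)-\inf\hat f$ controlled by $f(x^0)-f^*$. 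Concretely, take $f_w\equiv 0$ for all regular agents and $\tilde f_w(x)=A\,v^{\top}x$ for the poisoned ones: Assumptions~\ref{assump: lower boundedness}--\ref{assump: bounded disturbances} all hold, yet $\hat f$ is linear with $\|\nabla\hat f(\bar z^k)\|=\delta A$ for every $k$, so your intermediate claim (which would force this average to vanish as $K\to\infty$ since $f(x^0)-f^*=0$) is false, even though the theorem itself holds trivially here. The fix is exactly the paper's route: apply the descent lemma to $f$ (evaluated at the full-network average $\bar x_W^k$) and absorb the mismatch $\|\nabla f(\bar x_W^k)-\frac1W\sum_{w\in\mathcal W}\nabla\hat f_w(\hat x_w^k)\|\leq \delta A+\frac{L}{\sqrt W}\|\hat X^k-\bar X_W^k\|_F$ into the per-iteration error, which produces the $O(\delta^2A^2)$ floor directly and lets you telescope against $f(x^0)-f^*$ legitimately; your consensus and conversion steps then go through unchanged.
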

	
	From Theorem \ref{thm: WeiMean}, we know that when the number of overall iterations $K$ is sufficiently large, the local models of regular agents will reach consensus, and the learning error of Algorithm \ref{algorithm:1} with the weighted mean aggregator $\text{WeiMean}(\cdot)$ is $O(\delta^2A^2)$ and topology-independent. When there are no poisoned agents, we have $\delta = 0$ and $A = 0$, and our results in Theorem \ref{thm: WeiMean} can recover the ones for attack-free decentralized optimization \cite{yuan2020influence}.

	
	\begin{remark}
		Note that our results in Theorems \ref{thm: RAgg} and \ref{thm: WeiMean} recover those in centralized learning \cite{peng2025mean}. In centralized learning, the central authority is in charge of the aggregation steps, equivalent to that the regular network is fully connected. In this case, $M = \frac{1}{R} \bm{1}_R \bm{1}_R^{\top}$ for all $(\rho, M)$-robust aggregators, and hence $\lambda = \beta = 0$. Therefore, the learning error in Theorem \ref{thm: RAgg} becomes $O(\frac{\rho^2 \xi^2}{(1 - 4\rho)^2})$, which recovers the learning error $O(\rho^2 \xi^2)$ in centralized learning when $\rho$ is small. For Theorem \ref{thm: WeiMean}, the learning error $O(\delta^2 A^2)$ exactly matches the result in centralized learning.
	\end{remark}
	
	
	\subsection{Lower Bounds of Learning Errors}

	To precisely compare the learning errors in Theorems \ref{thm: RAgg} and \ref{thm: WeiMean}, we should first establish corresponding lower bounds to demonstrate their tightness. To establish a meaningful lower bound for $(\rho, M)$-robust aggregator, we require it to be further \textit{majority-dominant}, formally defined as follows.

		\begin{definition}[\textbf{Majority-dominant aggregator}]\label{def: majority-dominant}
			An aggregator $\text{RAgg}(\cdot)$ is said to be a majority-dominant aggregator, if, for any agent $w\in\cal{W}$, whenever all regular inputs share the same vector $z$ and form a strict majority of its inputs, i.e., 
			$y_v = z, \forall v \in \bar{\mathcal{R}}_w \cap \mathcal{R}$
			and $|\bar{\mathcal{R}}_w \cap \mathcal{R}| > \frac{\bar{N}_w}{2}$, the aggregator outputs
			\begin{align}\label{eq: majority-dominant}
				{\rm{RAgg}}(\{y_v: v \in \bar{\mathcal{N}}_w\}) = z
			\end{align}
		\end{definition}
		
		Definition \ref{def: majority-dominant} requires the robust aggregator to output the same vector whenever the regular inputs are identical to one vector and form a majority. This definition is consistent with the ones given in \cite{guerraoui2024byzantine, zheng2025can}. Note that the majority-dominance property \eqref{eq: majority-dominant} is required to hold for any agent $w \in \mathcal{W}$, including both regular and poisoned agents, which is reasonable since poisoned agents also follow the algorithmic protocol. In the supplementary material, we show that TriMean, FABA, and IOS are majority-dominant, whereas CC and CG are not, as they fail to mimic the majority-vote scheme when the regular inputs are unanimous and form a majority.
		Combining with Table \ref{table: rho}, we conclude that TriMean, FABA and IOS are majority-dominant $(\rho, M)$-robust aggregators.

	Leveraging the majority-dominance property in Definition \ref{def: majority-dominant}, we provide the following lower bound for the learning error of Algorithm \ref{algorithm:1} with a {majority-dominant} $(\rho, M)$-robust aggregator under label poisoning attacks. The lower bound for the weighted mean aggregator is also provided as follows.
	
	\begin{theorem}\label{thm: lower bound}
		Consider Algorithm \ref{algorithm:1} running for $K$ iterations to solve \eqref{problem: 1} under label poisoning attacks. Given a {majority-dominant} $(\rho, M)$-robust aggregator, there exist $R$ regular local functions $\{f_w(x): w\in \mathcal{R}\}$ and $W - R$ poisoned local functions $\{\tilde{f}_w(x): w \in \mathcal{W} \setminus \mathcal{R}\}$ satisfying Assumptions \ref{assump: lower boundedness}, \ref{assump: L-smooth}, \ref{assump: bounded heterogeneity} and \ref{assump: bounded disturbances}, and a network topology in which {each agent $w \in \mathcal{W}$} has $N_w$ neighbor agents of which $R_w$ are regular, such that 
		\begin{align}\label{eq: lower bound for RAgg}
			\frac{1}{K} \sum_{k=1}^{K} \|\nabla f(\bar{x}^k)\|^2 = \Omega(\delta^2_{\max} \min\{A^2, \xi^2\}).
		\end{align}
		With the weighted mean aggregator, there also exist $R$ regular local functions $\{f_w(x): w\in \mathcal{R}\}$ and $W - R$ poisoned local functions $\{\tilde{f}_w(x): w \in \mathcal{W} \setminus \mathcal{R}\}$ satisfying Assumptions \ref{assump: lower boundedness}, \ref{assump: L-smooth}, \ref{assump: bounded heterogeneity} and \ref{assump: bounded disturbances}, and a network topology in which {each agent $w \in \mathcal{W}$} has $N_w$ neighbor agents of which $R_w$ are regular, such that 
		\begin{align}\label{eq: lower bound for WeiMean}
			\frac{1}{K} \sum_{k=1}^{K} \|\nabla f(\bar{x}^k)\|^2 = \Omega(\delta^2 \min\{A^2, \xi^2\}).
		\end{align}
	\end{theorem}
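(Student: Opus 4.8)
The plan is to prove both lower bounds constructively: for each aggregator type I would exhibit a concrete hard instance — a choice of regular and poisoned local costs together with a network topology realizing the prescribed per‑agent neighbor counts — on which Algorithm~\ref{algorithm:1} provably stalls at a point whose gradient has the claimed size. Throughout I would use one‑dimensional quadratic costs, $f_w(x)=\frac{L}{2}(x-a_w)^2$ for regular $w$ and $\tilde f_w(x)=\frac{L}{2}(x-b_w)^2$ for poisoned $w$. This choice makes Assumptions~\ref{assump: lower boundedness} and \ref{assump: L-smooth} automatic, linearizes the entire dynamics, and reduces Assumptions~\ref{assump: bounded heterogeneity} and \ref{assump: bounded disturbances} to scalar conditions: with $\bar a=\frac1R\sum_{w\in\mathcal R}a_w$ they read $\max_{w\in\mathcal R}|a_w-\bar a|\le\xi$ and $\max_{w\notin\mathcal R}|b_w-\bar a|\le A$. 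Since $\nabla f(x)=L(x-\bar a)$, the minimizer of $f$ is $\bar a$, so it suffices to show $\bar x^k$ is driven to within $O(1/\sqrt K)$ of a value that differs from $\bar a$ by $\Omega(\delta\min\{A,\xi\})$, resp.\ $\Omega(\delta_{\max}\min\{A,\xi\})$.

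For \eqref{eq: lower bound for WeiMean} the instance is essentially explicit: take $a_w\equiv0$ (which trivially respects $\xi$) and all poisoned centers equal to a common $c_0$ with $|c_0|=\min\{A,\xi\}$, coordinated on one side; any valid doubly stochastic $E$ consistent with the prescribed topology works. Because the local gradients are affine, the stacked state obeys $\mathbf x^{k+1}=(1-\gamma L)E\mathbf x^k+\gamma L E\mathbf c$, a contraction for $\gamma=1/\sqrt K\le1/L$, whose iterates converge geometrically to the fixed point $\mathbf x^\star$. Using $\bm{1}_W^{\top} E=\bm{1}_W^{\top}$ one checks $\bm{1}_W^{\top}\mathbf x^\star=\bm{1}_W^{\top}\mathbf c$, i.e.\ the all‑agent mean of $\mathbf x^\star$ equals $\frac1W\big(\sum_{w\in\mathcal R}a_w+\sum_{w\notin\mathcal R}b_w\big)=\delta c_0$; moreover $\mathbf x^\star$ is $O(\gamma)$‑close to consensus, so $\bar x^k=\delta c_0+O(1/\sqrt K)$ and $\|\nabla f(\bar x^k)\|=L\delta|c_0|+O(1/\sqrt K)$. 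Summing over $k$ (the short transient contributes nonnegatively) gives $\frac1K\sum_{k=1}^K\|\nabla f(\bar x^k)\|^2=\Omega(\delta^2\min\{A^2,\xi^2\})$.

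For \eqref{eq: lower bound for RAgg} the skeleton is the same — build an instance, identify the forced fixed point, read off $\|\nabla f\|$ — but now the aggregation outputs must be pinned using only the two available facts about $\text{RAgg}$: the $(\rho,M)$‑contraction \eqref{eq: rho-robust aggregartor} and majority‑dominance \eqref{eq: majority-dominant}. The decisive structural point is that \eqref{eq: rho-robust aggregartor} leaves a regular agent's output free only within $\rho$ times the spread of its \emph{regular} neighbors' post‑step models; hence genuine heterogeneity of order $\xi$ among regular agents is indispensable, and the induced displacement of the consensus from $\bar a$ is of order $\rho\xi$. The construction must therefore (i) realize the degree data so that the worst local contamination rate is exactly $\delta_{\max}$, whence $\rho=\Theta(\delta_{\max})$ by Table~\ref{table: rho}; (ii) lay out the regular sub‑topology and centers so that every regular agent sees a heterogeneous regular neighborhood biased to one side, making the fixed‑point system $x^\star=(1-\gamma L)x^\star+\gamma L\hat a_w+e_w$ (with $\hat a_w=\sum_{v\in\bar{\mathcal R}_w}M_{wv}a_v$ and $|e_w|\le\rho\gamma L\cdot(\text{regular spread at }w)$) consistent with a \emph{single} biased consensus $x^\star$ rather than splitting into sub‑consensuses that average back to $\bar a$; and (iii) invoke \eqref{eq: majority-dominant} at the \emph{poisoned} agents — where \eqref{eq: rho-robust aggregartor} says nothing — so that poisoned agents surrounded by a unanimous regular majority faithfully relay that majority value, while the remaining poisoned agents, connected so as not to see a regular majority, are free to inject a one‑sided perturbation of magnitude $\min\{A,\xi\}$ (small enough to exploit the $\rho$‑slack rather than be filtered out). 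Solving the fixed‑point equation then yields $|x^\star-\bar a|=\Omega(\delta_{\max}\min\{A,\xi\})$, and the convergence/averaging step is identical to the weighted‑mean case since the dynamics are again linear.

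The main obstacle is items (ii)–(iii): designing one topology and one center assignment that matches an arbitrary admissible neighbor‑count profile, forces the regular network to a common biased consensus (precisely where the decentralized case is harder than the centralized one of \cite{peng2025mean}, in which $M=\frac1R\bm{1}_R\bm{1}_R^{\top}$ and every regular agent sees the same neighborhood), keeps the poisoned perturbation in the ``usable'' regime, and still meets Assumptions~\ref{assump: bounded heterogeneity} and \ref{assump: bounded disturbances} with the stated constants — which is exactly why the effective perturbation, and hence the bound, carries $\min\{A,\xi\}$ rather than $A$ alone. Once such an instance is fixed, what remains is the routine linear‑recursion bookkeeping already sketched.
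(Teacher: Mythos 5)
Your treatment of the weighted-mean bound \eqref{eq: lower bound for WeiMean} is sound: the one-dimensional quadratic construction with all regular centers at $0$ and all poisoned centers at $c_0$ with $|c_0|=\min\{A,\xi\}$ makes the dynamics affine, the fixed point's mean equals $\delta c_0$ and is $O(\gamma)$-close to consensus, and the transient bookkeeping is routine. This is a self-contained version of what the paper obtains by invoking the centralized lower bound of \cite{peng2025mean} on a complete graph, so that half is acceptable, if differently packaged.

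The robust-aggregator half \eqref{eq: lower bound for RAgg}, however, has a genuine gap, and it sits exactly at the step you defer to ("items (ii)--(iii)"). The theorem must hold for \emph{every} majority-dominant $(\rho,M)$-robust aggregator, and the only constraints you have on such an aggregator are an \emph{upper} bound \eqref{eq: rho-robust aggregartor} on its deviation from some convex combination of regular inputs, plus majority-dominance \eqref{eq: majority-dominant}. Your argument treats the $\rho$-slack as if the adversary could force the aggregator to use it ("exploit the $\rho$-slack"), but the slack is a permission, not a force: an aggregator that on your instance happens to output exactly $\bar y_w=\sum_{v\in\bar{\mathcal R}_w}M_{wv}y_v$ satisfies the definition with deviation zero, ignores the poisoned inputs entirely, and on any single "nice" instance drives the error to the attack-free level — falsifying the claimed $\Omega(\delta_{\max}^2\min\{A^2,\xi^2\})$ for that aggregator. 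Relatedly, your step (i) imports $\rho=\Theta(\delta_{\max})$ from Table \ref{table: rho}, but those are upper bounds for specific aggregators; the statement is for a generic given aggregator and the target bound does not involve $\rho$ at all, so $\rho$ cannot carry the $\delta_{\max}$ dependence. What is actually needed — and what the paper does — is an indistinguishability argument: construct \emph{two} label assignments over one topology (regular agents fully connected, each with two poisoned neighbors, so $\delta_{\max}=\frac{2}{R+2}$) such that at every iteration each agent's input \emph{multiset} is identical in the two instances; the contraction property pins the regular agents' outputs (their regular inputs are unanimous, so the right-hand side of \eqref{eq: rho-robust aggregartor} is zero), majority-dominance pins the poisoned agents' outputs, and by induction the trajectories coincide, while the two global gradients differ by $\frac{\delta_{\max}\min\{A,\xi\}}{\sqrt 2}$ uniformly in $x$, so at least one instance incurs the stated error. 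Without some device of this kind that binds \emph{all} admissible aggregators simultaneously, your single-instance fixed-point computation cannot yield the lower bound, so the proposal as written does not prove \eqref{eq: lower bound for RAgg}.
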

	
	Below we compare the lower bounds in Theorem \ref{thm: lower bound} and the upper bounds in Theorems \ref{thm: RAgg} and \ref{thm: WeiMean}. We focus on the case that the data distribution is sufficiently heterogeneous such that $A = \Theta(\xi)$; as discussed in \cite{peng2025mean}. In this case, the lower bound in \eqref{eq: lower bound for RAgg} is $\Omega(\delta_{\max}^2 \xi^2)$. According to the discussion below Theorem \ref{thm: RAgg}, we find that the learning error of IOS exactly matches the lower bound. In contrast, the learning errors of {TriMean and FABA} do not match the lower bound due to the non-doubly stochastic nature of their virtual mixing matrices $M$.
	Further, when $A = \Theta(\xi)$, the upper bound of weighted mean aggregator in Theorem \ref{thm: WeiMean} is $O(\delta^2 \xi^2)$, matching the lower bound $\Omega(\delta^2 \xi^2)$ in \eqref{eq: lower bound for WeiMean}. Therefore, the bounds for {majority-dominant} $(\rho, M)$-robust and weighted mean aggregators are all tight.
	
	We summarize these tight bounds in Table \ref{table: learning error}. We observe that the weighted mean aggregator outperforms the majority-dominant $(\rho, M)$-robust aggregators when \textbf{the global contamination rate is smaller than the local contamination rate} (i.e., $\delta < \delta_{\max}$), given that the data heterogeneity is sufficiently high (i.e., $A = \Theta(\xi)$).
	
	
	\begin{table}[!h]
		\centering
		\begin{tabular}{ccc}
			\toprule
			&Aggregator  & Learning error \\
			\midrule
			&RAgg   &  $\Theta(\delta_{\max}^2\xi^2)$  \vspace{1.5mm} \\
			&WeiMean & $\Theta(\delta^2 \xi^2)$  \\
			\bottomrule
		\end{tabular}
		\vspace{1.5mm}
		\caption{Learning errors of Algorithm \ref{algorithm:1} with the optimal {majority-dominant} $(\rho, M)$-robust aggregator $\text{RAgg}(\cdot)$ and the weighted mean aggregator $\text{WeiMean}(\cdot)$, given large heterogeneity such that $A = \Theta(\xi)$.
		}
		\label{table: learning error}
	\end{table}
	
	Looking deeper into the upper bounds, we can find more advantages of the weighted mean. Note that the upper bound of  $(\rho, M)$-robust aggregators relies on the conditions $\lambda < 1$ and $\rho < \frac{1 - \lambda}{4}$ in Theorem \ref{thm: RAgg}. Here, $\lambda$ characterizes the sparsity level of the regular network; smaller $\lambda$ indicates better connectivity. Thus, $\lambda < 1$ requires the regular network to be well-connected. Further, $\rho < \frac{1 - \lambda}{4}$ requires the contraction constant $\rho$ to be sufficiently small (and thus the local contamination rate $\delta_{\max}$ is small, according to Table \ref{table: rho}), even when the regular network is well-connected. These conditions do not necessarily hold. As a matter of fact, all $(\rho, M)$-robust aggregators mentioned in this paper have $\lambda > 1$ on the fan graph. In contrast, the weighted mean aggregator does not require such conditions. This observation suggests that the weighted mean aggregator may outperform the $(\rho, M)$-robust aggregators when these conditions are not satisfied, when for example, \textbf{the regular network is disconnected}, or \textbf{the regular network is sparse and the local contamination rate is large}.
	
%

	\begin{remark}
			In the theoretical analysis, we consider deterministic optimization where each agent computes full local gradients. In the ensuing numerical experiments, we study both deterministic and stochastic optimization, with agents computing either full or stochastic local gradients, and show that our results also hold empirically in the stochastic setting. The main challenge in analyzing stochastic optimization lies in deriving a tight upper bound, which typically requires variance reduction techniques to mitigate the effect of stochastic gradient noise. We will fill this gap in our future work.
		\end{remark}

	\section{Numerical Experiments}
	\label{sec: numerical experiments}
	
	In this section, we conduct numerical experiments to validate our theoretical results and demonstrate the performance of Algorithm \ref{algorithm:1} with robust and weighted mean aggregators. The experimental settings are as follows. The code is available at \url{https://github.com/pengj97/DLPA}.
	
	\textbf{Network topologies.} We consider three network topologies that correspond to the three scenarios that we are interested in: two-castle, line and fan, as shown in Figure \ref{fig:graph_topology} in the supplementary material. Each graph contains $W = 10$ agents, among which $R = 9$ are regular. We also vary the numbers of regular agents and all agents in the supplementary material.

	\textbf{Datasets and partitions.} We train a softmax regression model on the MNIST dataset and a ResNet18 model on the CIFAR100 dataset. We also train larger neural networks using larger datasets in the supplementary material. We consider three types of data partition across the agents: i.i.d., mild non-i.i.d. and non-i.i.d.. In the i.i.d. case, we evenly randomly distribute the training data to all agents. In the mild non-i.i.d. case, we divide the training data following the Dirichlet distribution with hyper-parameter $\alpha=1$ \cite{hsu2019measuring}. In the non-i.i.d. case, we assign ten unique classes to each agent.
	
	\textbf{Label poisoning attacks.} Following \cite{shejwalkar2022back,peng2025mean}, we consider the label flipping attacks, with which each poisoned agent flips its local sample labels from $b$ to $B - b$ where $B$ is the number of classes and label $b \in \{0, 1, \cdots, B - 1\}$. 
	
	\textbf{Aggregators.} We compare the weighted mean aggregator against several representative $(\rho, M)$-robust aggregators, including TriMean, FABA, CC, CG, IOS, {RFA}, and LFighter. The baseline is the weighted mean without any attacks and the mixing matrix is Metropolis-Hastings.

\textbf{Parameters.} While our theories are established for a small constant step size, we adopt a diminishing schedule in the numerical experiments to accelerate training, as the larger initial steps lead to faster progress in practice. We use $\gamma^k = \frac{0.1}{\sqrt{k}}$ and $\gamma^k = \frac{0.03}{\sqrt{k}}$ for softmax regression and neural network training, respectively. Besides, for softmax regression, we use full local gradients; for the more computation-intensive neural network training, we use stochastic local gradients with batch size being 32.

	\begin{figure}
		\centering
		\includegraphics[scale=0.105]{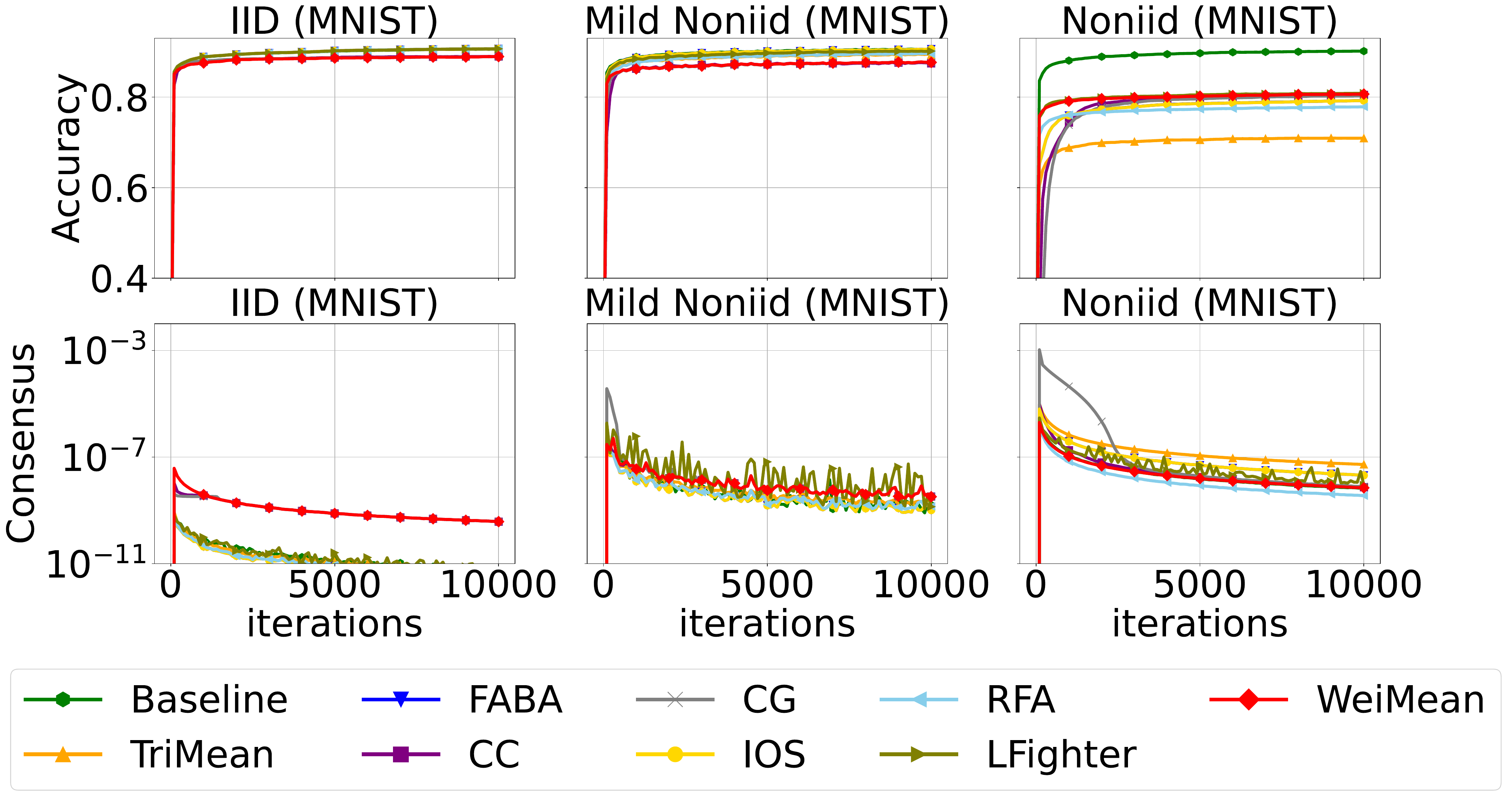}
		\caption{Classification accuracies and consensus errors of softmax regression model trained on MNIST in the two-castle graph.}
		\label{fig:SR_mnist_twocastle_label_flipping}
	\end{figure}
	
	\begin{figure}
		\centering
		\includegraphics[scale=0.105]{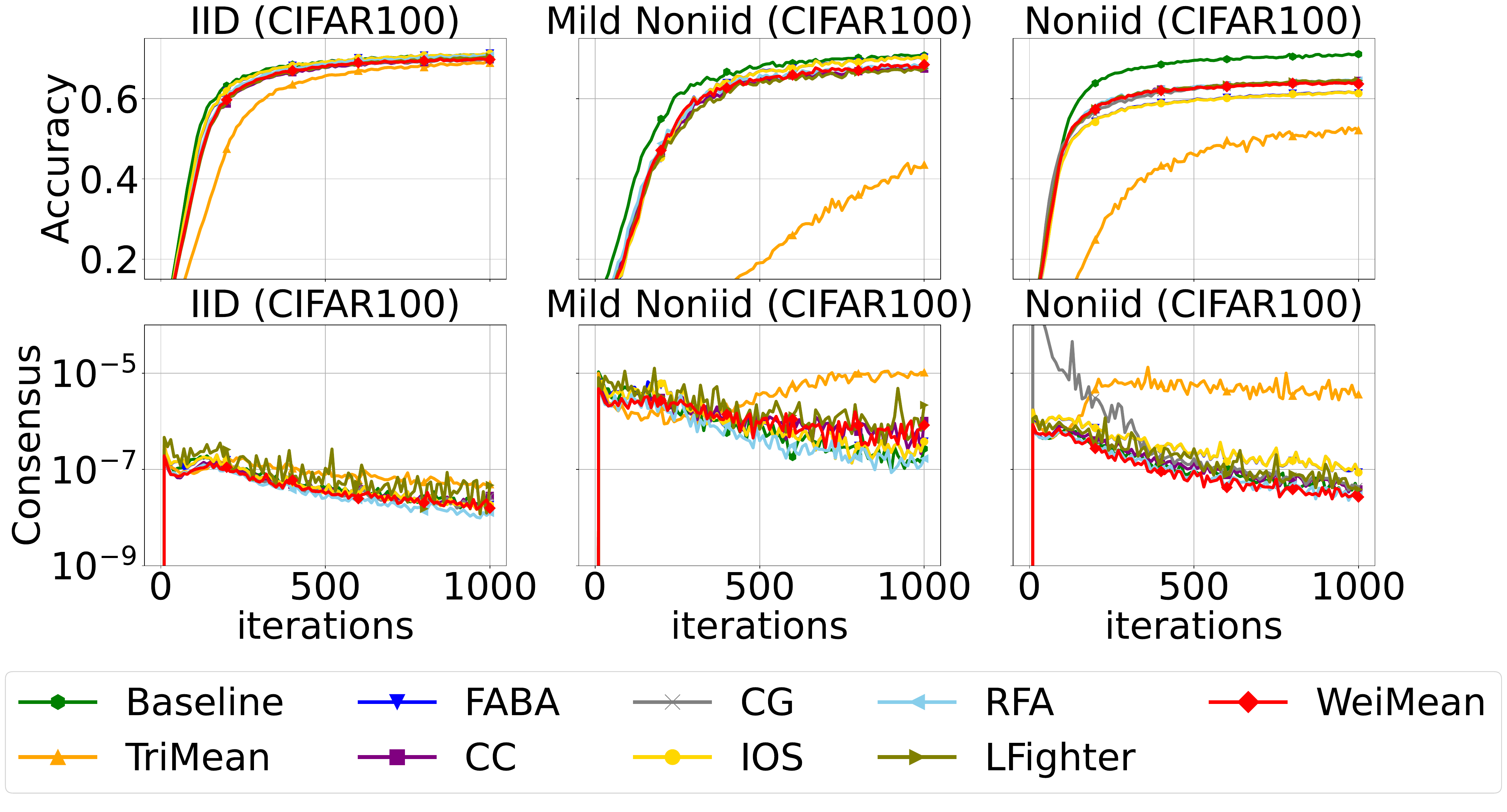}
		\caption{Classification accuracies and consensus errors of ResNet18 {trained} on CIFAR100 in the two-castle graph.}
		\label{fig:ResNet18_cifar100_twocastle_label_flipping}
	\end{figure}
	
		\begin{figure}
		\centering
		\includegraphics[scale=0.105]{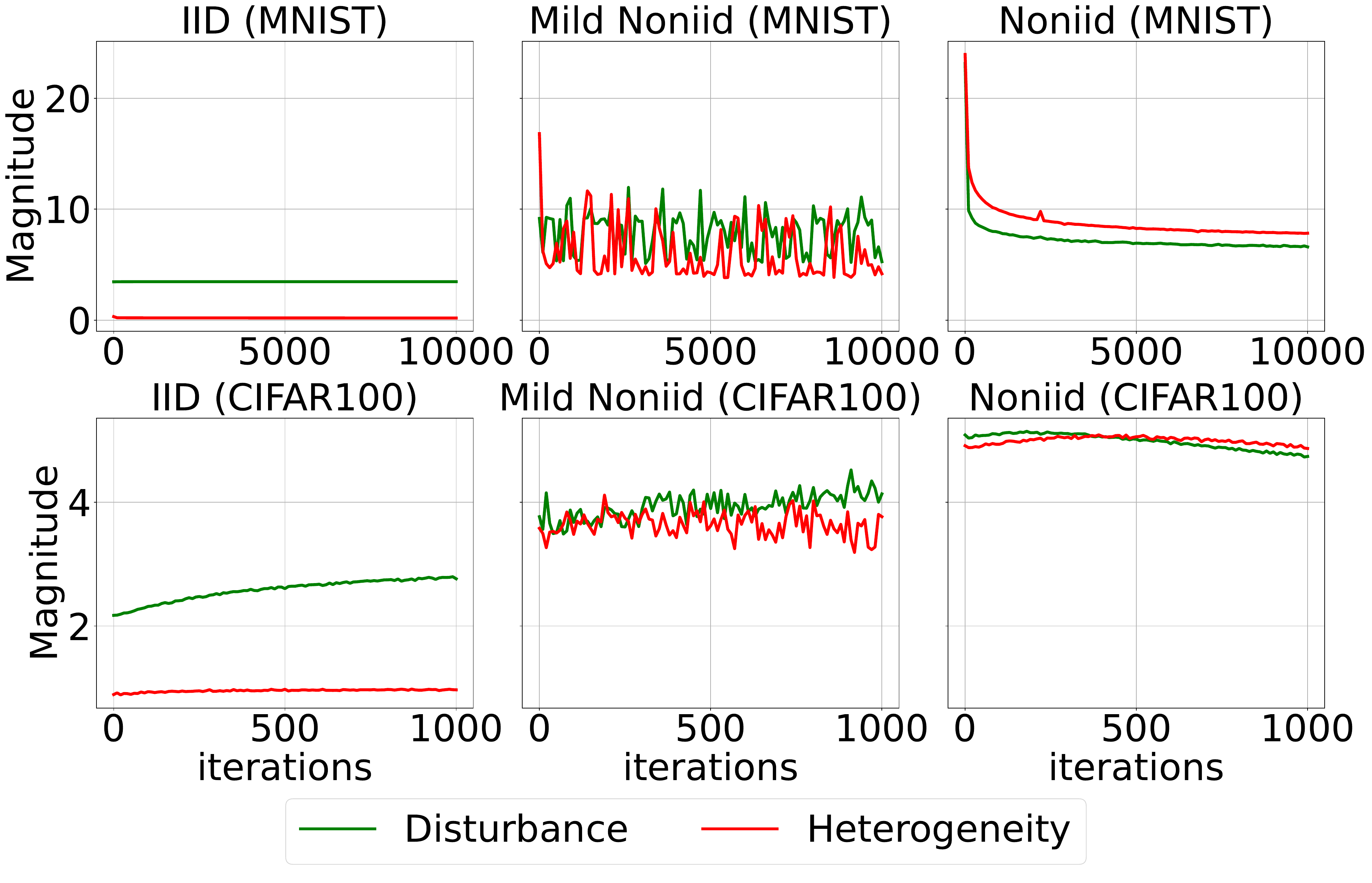}
		\caption{Heterogeneity of regular local gradients and disturbances of poisoned local gradients for softmax regression trained on MNIST and ResNet18 trained on CIFAR100 in the two-castle graph.}
		\label{fig:SR_MNIST_ResNet18_cifar100_twocastle_A_hetero}
	\end{figure}
	
	\textbf{Scenario (i): smaller global contamination rate.} In this experiment, we consider the two-castle graph, in which the global contamination rate $\delta = \frac{1}{10}$ is lower than the local contamination rate $\delta_{\max} = \frac{1}{9}$. The results are shown in Figures \ref{fig:SR_mnist_twocastle_label_flipping} and \ref{fig:ResNet18_cifar100_twocastle_label_flipping}. In the i.i.d. case, all methods achieve high classification accuracy, comparable to the baseline. In the mild non-i.i.d. case, FABA and IOS yield the highest accuracies among all aggregators.
	In the non-i.i.d. case, most aggregators suffer from performance degradation due to increased heterogeneity. However, the weighted mean aggregator maintains strong performance and ranks among the best, which aligns with our theoretical findings in Table \ref{table: learning error}. Regarding consensus, all aggregators achieve a final consensus error in the order of $10^{-5}$ across the i.i.d., mild non-i.i.d. and non-i.i.d. cases, indicating that regular models nearly reach consensus given sufficient iterations. This observation is consistent with our theoretical results in Theorems \ref{thm: RAgg} and \ref{thm: WeiMean}.

	To further investigate the results in Figures \ref{fig:SR_mnist_twocastle_label_flipping} and \ref{fig:ResNet18_cifar100_twocastle_label_flipping} and validate Assumptions \ref{assump: bounded heterogeneity} and \ref{assump: bounded disturbances}, we compute the heterogeneity of regular local gradients
	(i.e., $\max_{w \in \mathcal{R}}\|\nabla f_w(\bar{x}^k) -\nabla f(\bar{x}^k)\|$, $\forall k \in [K]$) and the disturbance of poisoned local gradients
	(i.e., $\max_{w \in \mathcal{W} \setminus \mathcal{R}}\|\nabla \tilde{f}_w(\bar{x}^k) - \nabla f(\bar{x}^k)\|$, $\forall k \in [K]$) on the two-castle graph. As shown in Figure \ref{fig:SR_MNIST_ResNet18_cifar100_twocastle_A_hetero}, both the heterogeneity $\xi$ and disturbance $A$ are bounded, which validate the practicality of Assumptions \ref{assump: bounded heterogeneity} and \ref{assump: bounded disturbances}. From i.i.d., mild non-i.i.d. to non-i.i.d. cases, the heterogeneity $\xi$ increases, and the heterogeneity $\xi$ is in the same order of the disturbance $A$ in the non-i.i.d. case. According to Table \ref{table: learning error}, when $\xi$ is in the same order of $A$ and the global contamination rate is smaller than the local contamination rate, the weighted mean aggregator outperforms the optimal majority-dominant $(\rho, M)$-robust aggregator, which explains the results in Figure \ref{fig:ResNet18_cifar100_twocastle_label_flipping}.

	\textbf{Scenario (ii): disconnected regular network.}
	In this experiment, we evaluate the line graph in which the regular network is disconnected. The results are depicted in Figures \ref{fig:SR_mnist_disconnected_label_flipping} and \ref{fig:ResNet18_cifar100_disconnected_label_flipping}.
	In the i.i.d. case, low heterogeneity across the regular local gradients enables each agent to learn effectively from its own data, and all aggregators achieve the accuracies close to the baseline. In the mild non-i.i.d. case, increased heterogeneity degrades every method. In the non-i.i.d. case, TriMean, FABA, IOS, RFA and LFighter suffer severe accuracy drops due to the combination of regular network disconnection and label flipping attacks, corroborating our theoretical findings. Recalled that according to Theorem \ref{thm: RAgg}, when $\lambda\ge1$, Algorithm \ref{algorithm:1} with any $(\rho,M)$-robust aggregator has no theoretical guarantee and may perform poorly. Actually, $\lambda\ge1$ holds for the line graph that we are investigating, explaining the observed performance degradation. In contrast, the weighted mean aggregator, whose learning error is topology-independent, remains the best in the non-i.i.d. case.

	\begin{figure}
		\centering	
		\includegraphics[scale=0.105]{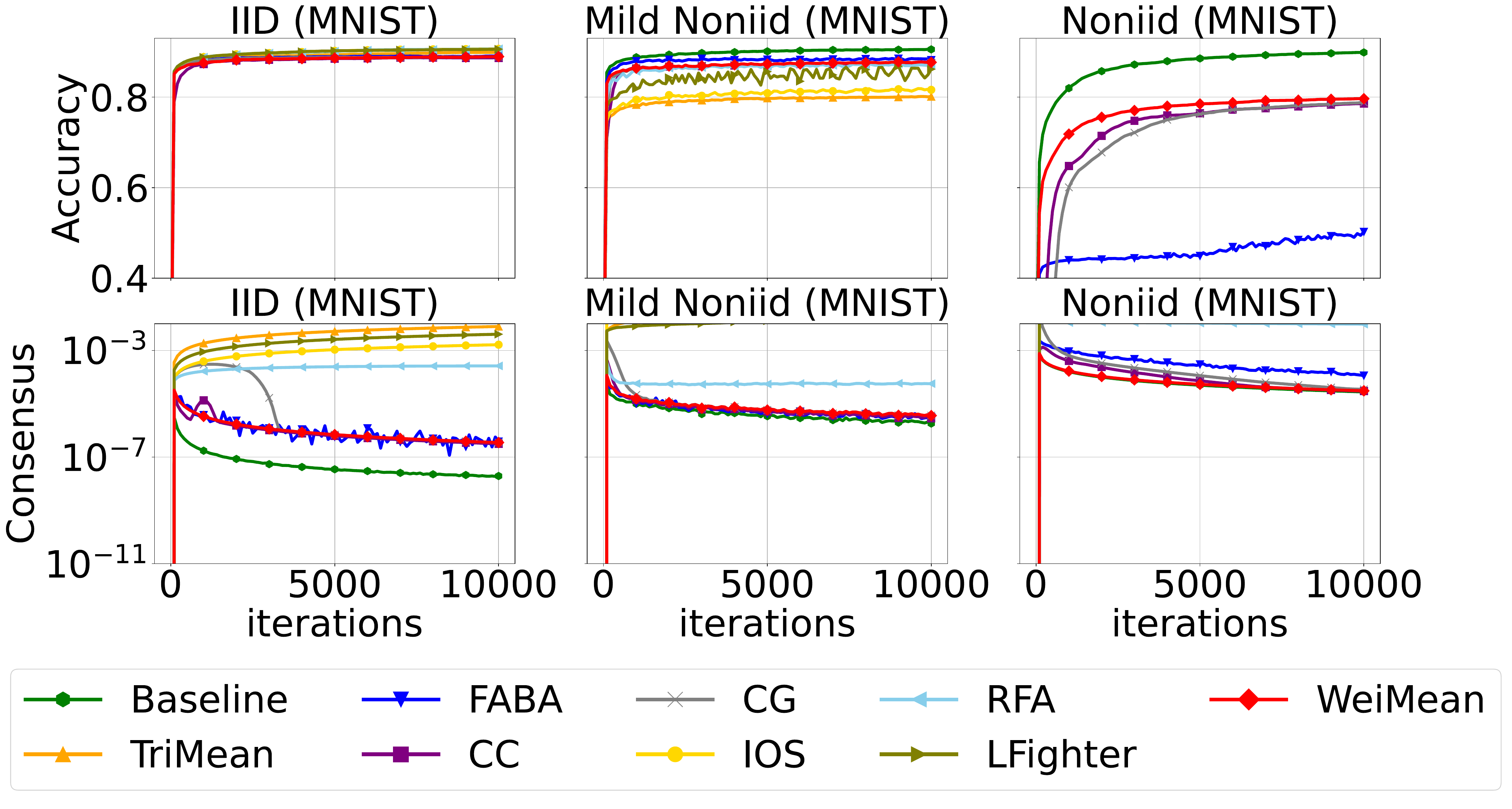}
		\caption{Classification accuracies and consensus errors of softmax regression trained on MNIST in the line graph.}
		\label{fig:SR_mnist_disconnected_label_flipping}
	\end{figure}
	
		\begin{figure}
		\centering
		\includegraphics[scale=0.105]{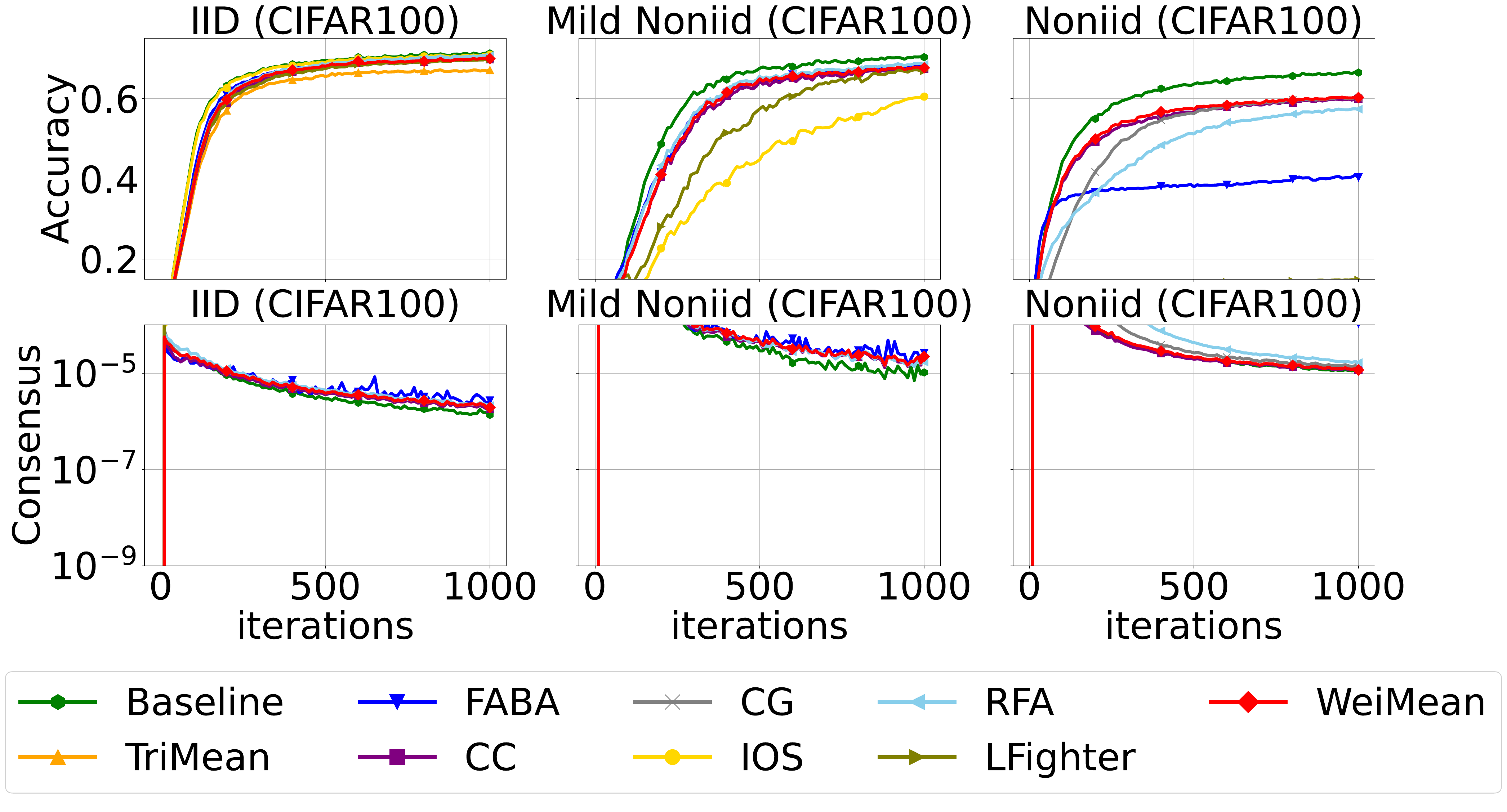}
		\caption{Classification accuracies and consensus errors of ResNet18 {trained} on CIFAR100 in the line graph.}
		\label{fig:ResNet18_cifar100_disconnected_label_flipping}
	\end{figure}

	\textbf{Scenario (iii): sparse regular network and high local contamination rate.} To illustrate the combined effect of a sparse regular network and high local contamination rate, we use the fan graph, where the regular network is a line graph and the local contamination rate $\delta_{\max} = \frac{1}{3}$. The results for CIFAR100 are shown in Figure As depicted in Figures \ref{fig:SR_mnist_fan_label_flipping} and \ref{fig:ResNet18_cifar100_fan_label_flipping}, in the i.i.d. case, minimal heterogeneity favors robust aggregators (e.g., IOS) over the weighted mean aggregator. In the mild non-i.i.d. case, as the heterogeneity increases, the performance of robust aggregators begins to degrade. In the non-i.i.d. case, sparse connectivity of regular network and high local contamination rate cause severe accuracy drops for TriMean, FABA and IOS.
	In contrast, the weighted mean consistently attains the best accuracy, further corroborating our theoretical results.

	\begin{figure}
		\centering
		\includegraphics[scale=0.105]{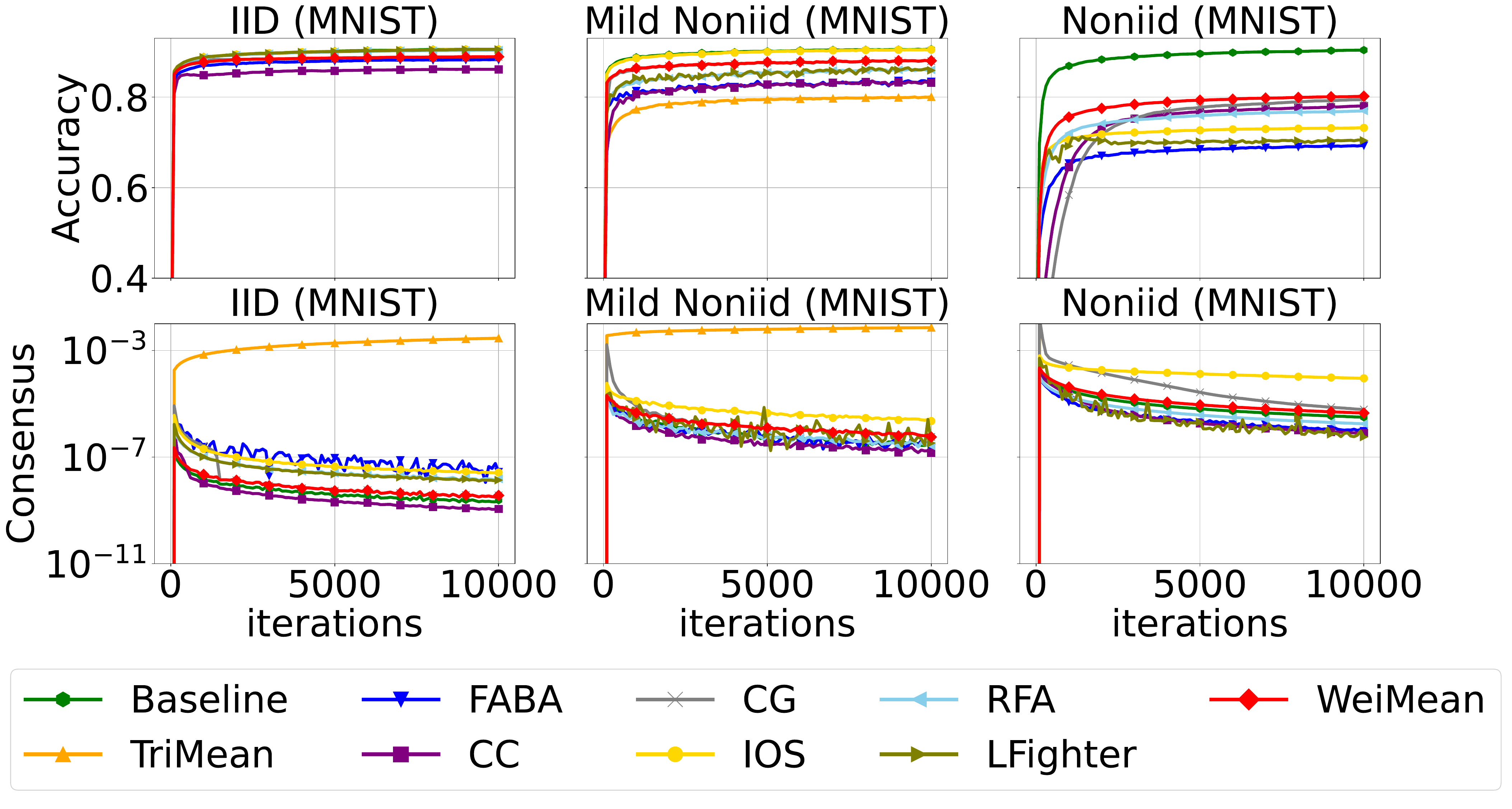}
		\caption{Classification accuracies and consensus errors of softmax regression trained on MNIST in the fan graph.}
		\label{fig:SR_mnist_fan_label_flipping}
	\end{figure}
	
		\begin{figure}
		\centering
		\includegraphics[scale=0.105]{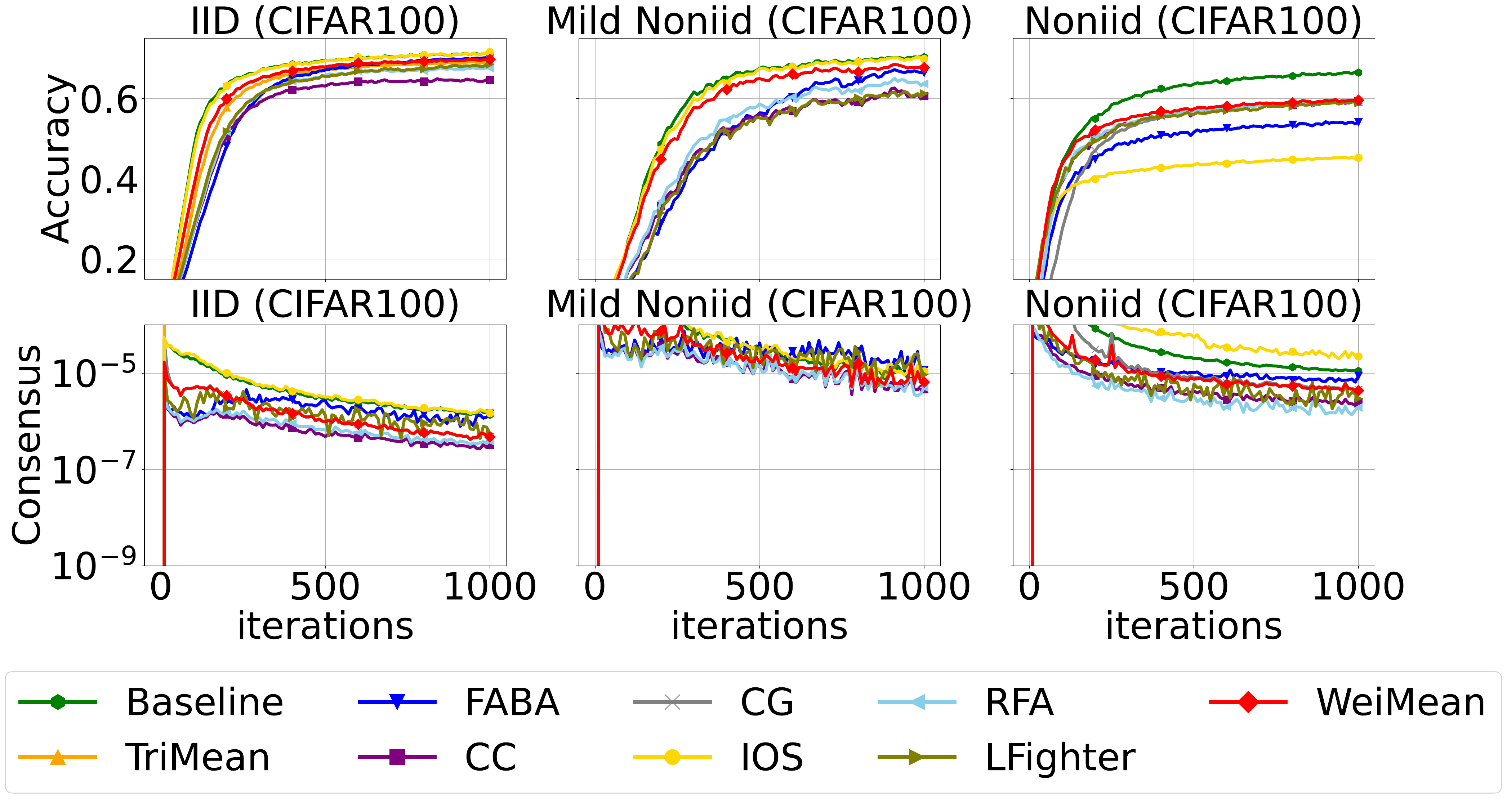}
		\caption{Classification accuracies and consensus errors of ResNet18 {trained} on CIFAR100 in the fan graph.}
		\label{fig:ResNet18_cifar100_fan_label_flipping}
	\end{figure}

	\section{Conclusion}
	In this paper, we consider label poisoning attacks, the weak yet practical attacks, where poisoned agents manipulate their local data at the label level and update their local models using these poisoned data. Under label poisoning attacks, we surprisingly find that the simple weighted mean aggregator can be more robust than the robust aggregators in highly heterogeneous cases, provided that: (i) the global contamination rate is smaller than the local contamination rate; (ii) the regular network is disconnected; or (iii) the regular network is sparse and the local contamination rate is high. Our findings highlight an important insight: robust aggregators do not always outperform the weighted mean aggregator under specific attacks and across all network topologies. This encourages us to reconsider the appropriate application scenarios for different aggregators. While our work focuses on the robustness, future research should also weigh computational complexity to identify aggregators that best balance the efficiency and the robustness.

	\appendices

	\section*{Proof of Theorem \ref{thm: WeiMean}}
	
	\begin{proof}
		For notational convenience, lete $\bar{x}_W = \frac{1}{W} \sum_{w \in \mathcal{W}} \hat{x}_w$ and denote three ${W \times D}$ matrices
		\begin{align*}
			\hspace{-1em} \hat{X} =  \begin{bmatrix} (\hat{x}_1)^{\top} \\ (\hat{x}_2)^{\top} \\ \vdots \\ (\hat{x}_W)^{\top}\end{bmatrix}\!, \bar{X}_W = \begin{bmatrix} (\bar{x}_W)^{\top} \\ (\bar{x}_W)^{\top}  \\ \vdots \\ (\bar{x}_W)^{\top} \end{bmatrix}\!, \nabla \mathbf{\hat{f}}(\hat{X}) = \begin{bmatrix} (\nabla \hat{f}_1(\hat{x}_1))^{\top} \\ \vdots \\ (\nabla \hat{f}_W(\hat{x}_W))^{\top}   \end{bmatrix}\!.
		\end{align*}
		
		With Assumption \ref{assump: L-smooth}, we have
		\begin{align}\label{proof:thm2-1}
			&f(\bar{x}_W^{k+1}) \\ \leq &f(\bar{x}_W^k) + \langle \nabla f(\bar{x}_W^k), \bar{x}_W^{k+1} - \bar{x}_W^k \rangle + \frac{L}{2} \|\bar{x}_W^{k+1} - \bar{x}_W^k\|^2. \nonumber
		\end{align}
		For the second term on the right-hand side of \eqref{proof:thm2-1}, we have
		\begin{align} \label{proof:thm2-2}
			&\langle \nabla f(\bar{x}_W^k), \bar{x}_W^{k+1} - \bar{x}_W^k \rangle \\ =& \gamma \langle \nabla f(\bar{x}_W^k), \frac{1}{\gamma}(\bar{x}_W^{k+1} - \bar{x}_W^k) \rangle \nonumber \\ =& \frac{\gamma}{2} \|\nabla f(\bar{x}_W^k) + \frac{1}{\gamma}(\bar{x}_W^{k+1} - \bar{x}_W^k)\|^2 - \frac{\gamma}{2} \|\nabla f(\bar{x}_W^k)\|^2 \nonumber \\&- \frac{\gamma}{2} \|\frac{1}{\gamma}(\bar{x}_W^{k+1} - \bar{x}_W^k)\|^2. \nonumber
		\end{align}
		Thus, with the step size $\gamma \leq \frac{1}{L}$, we have
		\begin{align}\label{proof:thm2-3}
			&f(\bar{x}_W^{k+1})\\
			\leq &f(\bar{x}_W^k) + \frac{\gamma}{2} \|\nabla f(\bar{x}_W^k) + \frac{1}{\gamma}(\bar{x}_W^{k+1} - \bar{x}_W^k)\|^2 - \frac{\gamma}{2} \|\nabla f(\bar{x}_W^k)\|^2 \nonumber\\+& (\frac{L\gamma^2}{2} - \frac{\gamma}{2}) \|\frac{1}{\gamma}(\bar{x}_W^{k+1} - \bar{x}_W^k)\|^2 \nonumber \\
			\leq &f(\bar{x}_W^k) + \frac{\gamma}{2} \|\nabla f(\bar{x}_W^k) + \frac{1}{\gamma}(\bar{x}_W^{k+1} - \bar{x}_W^k)\|^2 - \frac{\gamma}{2} \|\nabla f(\bar{x}_W^k)\|^2. \nonumber
		\end{align}
		
		Now we handle the second term on the right-hand side of \eqref{proof:thm2-3}. It holds that
		\begin{align}\label{proof:thm2-4}
			&\|\nabla f(\bar{x}_W^k) + \frac{1}{\gamma} (\bar{x}_W^{k+1} - \bar{x}_W^k)\|^2 \\
			=& \|\nabla f(\bar{x}_W^k) + \frac{1}{\gamma} (\frac1W\sum_{w \in \mathcal{W}}\sum_{v\in {\bar{\mathcal{N}}_w} }E_{wv}(\hat{x}_v^k - \gamma \cdot \nabla f_v(\hat{x}_v^k)) \nonumber \\& - \frac{1}{W} \sum_{w \in \mathcal{W}}\hat{x}_w^k)\|^2\nonumber \\ =& \|\nabla f(\bar{x}_W^k) -\frac1W\sum_{ w \in \mathcal{W}} \nabla \hat{f}_w(\hat{x}_w^k) \|^2\nonumber \\
			\leq& 2\|\frac{1}R \sum_{w \in \mathcal{R}} \nabla f_w(\bar{x}_W^k) - \frac{1}W \sum_{w \in \mathcal{W}} \nabla \hat{f}_w(\bar{x}_W^k)\|^2 \nonumber \\& +  2\|\frac{1}W \sum_{w \in \mathcal{W}} \nabla \hat{f}_w(\bar{x}_W^k) - \frac{1}W \sum_{w \in \mathcal{W}} \nabla \hat{f}_w(\hat{x}_w^k)\|^2 \nonumber \\
			\leq& 2\|\frac{1}R \sum_{w \in \mathcal{R}} \nabla f_w(\bar{x}_W^k) - \frac{1}W \sum_{w \in \mathcal{W}} \nabla \hat{f}_w(\bar{x}_W^k)\|^2 \nonumber\\& +  2 \frac{1}W \sum_{w \in \mathcal{W}}\| \nabla \hat{f}_w(\bar{x}_W^k) - \nabla \hat{f}_w(\hat{x}_w^k)\|^2 \nonumber \\
			\leq&2\delta^2 A^2 +\frac{2L^2}{W} \|\hat{X}^k - \bar{X}_W^k\|_F^2, \nonumber
		\end{align}
		where the second equality comes from the double stochasticity of the mixing matrix $E$, the second inequality is due to the Cauchy-Schwarz inequality and the last inequality comes from Assumptions \ref{assump: L-smooth} and \ref{assump: bounded disturbances}.
		Plugging \eqref{proof:thm2-4} back to \eqref{proof:thm2-3}, we have
		\begin{align} \label{proof:thm2-5}
			&\|\nabla f(\bar{x}_W^k)\|^2  \\\leq &\frac{2(f(\bar{x}_W^k) - f(\bar{x}_W^{k+1}))}{\gamma} + \frac{2L^2}{W} \|\hat{X}^k - \bar{X}_W^k\|_F^2 + 2\delta^2A^2. \nonumber
		\end{align}
		
		For \eqref{proof:thm2-5}, since
		\begin{align} \label{proof:thm2-6}
			&\|\nabla f(\bar{x}^k)\|^2 \\ \leq & 2\|\nabla f(\bar{x}_W^k)\|^2 + 2\|\nabla f(\bar{x}^k) - \nabla f(\bar{x}_W^k)\|^2 \nonumber \\ \leq&2\|\nabla f(\bar{x}_W^k)\|^2 +\frac{2}{R} \sum_{w\in\mathcal{R}} L^2\|x_w^k - \bar{x}_W^k\|^2 \nonumber\\ \leq & 2\|\nabla f(\bar{x}_W^k)\|^2 + \frac{2L^2}{R} \|\hat{X}^k - \bar{X}_W^k\|_F^2, \nonumber
		\end{align}
		where the second inequality comes from Assumption \ref{assump: L-smooth}, with the fact that $R \leq W$, we further have
		\begin{align} \label{proof:thm2-6-2}
			&\|\nabla f(\bar{x}^k)\|^2 \\ \le &\frac{4(f(\bar{x}_W^k) - f(\bar{x}_W^{k+1}))}{\gamma} + \frac{6L^2}{R} \|\hat{X}^k - \bar{X}_W^k\|_F^2 + 4\delta^2A^2. \nonumber
		\end{align}
		
		Now we know that $\|\nabla f(\bar{x}^k)\|$ is bounded by  $\|\hat{X}^k - \bar{X}_W^k\|_F^2$, in addition to other terms. Next, we recursively bound this term. Note that
		\begin{align} \label{proof:thm2-7}
			&\|\hat{X}^{k+1} - \bar{X}_W^{k+1} \|_F \\
			=& \|\hat{X}^{k+1} - \frac1W\bm{1}_W\bm{1}_W^{\top} \hat{X}^{k+1}\|_F \nonumber\\ =& \|E\hat{X}^{k+\frac12} - \frac1W\bm{1}_W\bm{1}_W^{\top} E\hat{X}^{k+\frac12}\|_F \nonumber\\ \leq& \|E - \frac{1}{W} \bm{1}_W\bm{1}_W^{\top}\| \|\hat{X}^{k+\frac12} - \frac1W\bm{1}_W\bm{1}_W^{\top} \hat{X}^{k+\frac12}\|_F \nonumber \\ =&\lambda' \cdot \|\hat{X}^{k+\frac12} - \frac1W\bm{1}_W\bm{1}_W^{\top} \hat{X}^{k+\frac12}\|_F, \nonumber
		\end{align}
		Let $\omega \triangleq 1 -  \lambda'$, since $0 \leq \lambda' <1$ (according to the Perron-Frobenius Theorem), we have $\omega \in (0, 1]$ and
		\begin{align} \label{proof:thm2-7-2}
			&\|\hat{X}^{k+1} - \bar{X}_W^{k+1} \|_F \\ \leq &(1 - \omega) \cdot \|\hat{X}^{k+\frac12} - \frac1W\bm{1}_W\bm{1}_W^{\top} \hat{X}^{k+\frac12}\|_F. \nonumber
		\end{align}
		For the term on the right-hand side of \eqref{proof:thm2-7-2}, we have
		\begin{align}\label{proof:thm2-8}
			&\|\hat{X}^{k+\frac12} - \frac1W\bm{1}_W\bm{1}_W^{\top} \hat{X}^{k+\frac12}\|_F \\=& \|\hat{X}^{k}  - \gamma \cdot \nabla\mathbf{\hat{f}}(\hat{X}^k)- \frac1W\bm{1}_W\bm{1}_W^{\top}(\hat{X}^{k}  - \gamma \cdot \nabla \mathbf{\hat{f}}(\hat{X}^k))\|_F \nonumber \\ =&\|\hat{X}^{k}  - \gamma \cdot \nabla\mathbf{\hat{f}}(\hat{X}^k)- (\bar{X}_W^{k}  - \gamma \cdot \frac1W\bm{1}_W\bm{1}_W^{\top}\nabla \mathbf{\hat{f}}(\hat{X}^k))\|_F \nonumber \\ =& \|\hat{X}^{k}  - \bar{X}_W^{k}  \|_F \nonumber + \gamma \cdot\|\nabla \mathbf{\hat{f}}(\hat{X}^k) - \frac1W\bm{1}_W\bm{1}_W^{\top}\nabla \mathbf{\hat{f}}(\hat{X}^k))\|_F.
		\end{align}
		For the last term on the right-hand side of \eqref{proof:thm2-8}, we have
		\begin{align}\label{proof:thm2-8-2}
			&\|\nabla \mathbf{\hat{f}}(\hat{X}^k) - \frac1W\bm{1}_W\bm{1}_W^{\top}\nabla \mathbf{\hat{f}}(\hat{X}^k))\|_F^2 \\
			=& \sum_{w \in \mathcal{W}} \|\nabla \hat{f}_w(\hat{x}_w^k) - \frac{1}{W} \sum_{w \in \mathcal{W}} \nabla \hat{f}_w(\hat{x}_w^k)\|^2 \nonumber \\
			\leq & 3\sum_{w \in \mathcal{W}}  \|\nabla \hat{f}_w(\hat{x}_w^k) - \nabla \hat{f}_w(\bar{x}_W^k)\|^2  \nonumber \\&+ 3\sum_{w \in \mathcal{W}}  \|\nabla \hat{f}_w(\bar{x}_W^k) - \frac{1}{W} \sum_{w \in \mathcal{W}} \nabla \hat{f}_w(\bar{x}_W^k)\|^2 \nonumber \\& +3\sum_{w \in \mathcal{W}}  \|\frac{1}{W} \sum_{w \in \mathcal{W}} \nabla \hat{f}_w(\bar{x}_W^k) - \frac{1}{W} \sum_{w \in \mathcal{W}} \nabla \hat{f}_w(\hat{x}_w^k)\|^2 \nonumber \\
			\leq & 6L^2 \|\hat{X}^k - \bar{X}_W^k\|_F^2 \nonumber\\& + 3\sum_{w \in \mathcal{W}}  \|\nabla \hat{f}_w(\bar{x}_W^k) - \frac{1}{W} \sum_{w \in \mathcal{W}} \nabla \hat{f}_w(\bar{x}_W^k)\|^2 \nonumber \\
			\leq & 6L^2 \|\hat{X}^k - \bar{X}_W^k\|_F^2 \nonumber \\&+ 6\sum_{w \in \mathcal{W}}  \Bigg(\|\nabla \hat{f}_w(\bar{x}_W^k) - \frac{1}{R} \sum_{w \in \mathcal{R}} \nabla \hat{f}_w(\bar{x}_W^k)\|^2  \nonumber \\&+ \|\frac{1}{R} \sum_{w \in \mathcal{R}} \nabla \hat{f}_w(\bar{x}_W^k) - \frac{1}{W} \sum_{w \in \mathcal{W}} \nabla \hat{f}_w(\bar{x}_W^k)\|^2\Bigg) \nonumber \\
			\leq&6L^2 \|\hat{X}^k - \bar{X}_W^k\|_F^2  + 6W(\delta^2 A^2 + \max\{\xi^2, A^2\}), \nonumber
		\end{align}
		where the second inequality is due to Assumption \ref{assump: L-smooth} and the last inequality is due to Assumptions \ref{assump: bounded heterogeneity} and \ref{assump: bounded disturbances}.
		
		Therefore, we have
		\begin{align}\label{proof:thm2-8-3}
			&\|\nabla \mathbf{\hat{f}}(\hat{X}^k) - \frac1W\bm{1}_W\bm{1}_W^{\top}\nabla \mathbf{\hat{f}}(\hat{X}^k))\|_F  \\ \leq&\sqrt{6}L \|\hat{X}^k - \bar{X}_W^k\|_F  + \sqrt{6W(\delta^2 A^2 + \max\{\xi^2, A^2\})}. \nonumber
		\end{align}
		Substituting \eqref{proof:thm2-8} and \eqref{proof:thm2-8-3} into \eqref{proof:thm2-7-2} yields
		\begin{align}\label{proof:thm2-9}
			&\|\hat{X}^{k+1} - \bar{X}_W^{k+1}\|_F \\\leq& (1 - \omega) (1 + \sqrt{6}L\gamma) \|\hat{X}^{k} - \bar{X}^{k}\|_F  \nonumber \\ &+ (1-\omega)\gamma\sqrt{6W(\delta^2 A^2 + \max\{\xi^2, A^2\})}\nonumber
		\end{align}
		Since $\gamma \leq \frac{\omega}{2\sqrt{6}L(1-\omega)}$, we obtain 1 + $\sqrt{6}L\gamma \leq  \frac{2 - \omega}{2(1-\omega)}$ and
		\begin{align}\label{proof:thm2-10}
			&\|\hat{X}^{k+1} - \bar{X}_W^{k+1}\|_F \\\leq& (1 - \frac{\omega}{2}) \|\hat{X}^{k} - \bar{X}_W^{k}\|_F \nonumber \\&+ (1-\omega)\gamma\sqrt{6W(\delta^2 A^2 + \max\{\xi^2, A^2\})}\nonumber\\\leq& (1 - \frac{\omega}{2})^{k+1} \|\hat{X}^{0} - \bar{X}_W^{0}\|_F \nonumber \\& + \frac{2(1-\omega)\gamma\sqrt{6W(\delta^2 A^2 + \max\{\xi^2, A^2\})}}{\omega}. \nonumber
		\end{align}
		Since $\hat{x}_w^0 = x^0$ for all $w \in \mathcal{W}$, we have $\|\hat{X}^0 - \bar{X}_W^0\|_F=0$ and thus
		\begin{align}\label{proof:thm2-11}
			&\|\hat{X}^{k+1} - \bar{X}_W^{k+1}\|_F^2 \\ \leq &\frac{24W(1-\omega)^2\gamma^2(\delta^2 A^2 + \max\{\xi^2, A^2\})}{\omega^2}. \nonumber
		\end{align}
		Substituting \eqref{proof:thm2-11} into \eqref{proof:thm2-6-2}, we have
		\begin{align} \label{proof:thm2-12}
			&\|\nabla f(\bar{x}^k)\|^2  \\ \le& \frac{4(f(\bar{x}_W^k) - f(\bar{x}_W^{k+1}))}{\gamma} \nonumber\\&+ \frac{144  L^2 W(1-\omega)^2\gamma^2(\delta^2 A^2 + \max\{\xi^2, A^2\})}{R\omega^2} + 4\delta^2A^2. \nonumber
		\end{align}
		Thus, with Assumption \ref{assump: lower boundedness}, we have
		\begin{align}\label{proof:thm2-13}
			&\frac{1}{K} \sum_{k=0}^{K-1}\|\nabla f(\bar{x}^k)\|^2 \\\leq&\frac{4(f(\bar{x}_W^0) - f^*)}{\gamma K} \nonumber \\&+ \frac{144L^2W(1-\omega)^2\gamma^2(\delta^2 A^2 + \max\{\xi^2, A^2\})}{R\omega^2} + 4\delta^2A^2 \nonumber \\
			=&\frac{4(f(x^0) - f^*)}{\sqrt{K}} \nonumber\\&+ \frac{144L^2W(1-\omega)^2(\delta^2 A^2 + \max\{\xi^2, A^2\})}{R\omega^2K} + 4\delta^2A^2 \nonumber \\
			=& O(\frac{f(x^0) - f^*}{\sqrt{K}}) + O(\frac{L^2(\lambda')^2(\delta^2 A^2 + \max\{\xi^2, A^2\})}{K(1 - \lambda')^2}) \nonumber\\& + O(\delta^2A^2), \nonumber
		\end{align}
		where the first equality is due to $\gamma = \frac{1}{\sqrt{K}}$.
		
		For the consensus error, we have
		\begin{align}
			&\max_{w \in \mathcal{R}}\|x_w^{k} - \bar{x}^{k}\|^2 \\
			\leq&\sum_{w \in \mathcal{R}}\|x_w^{k} - \bar{x}^{k}\|^2 \nonumber \\\leq & \sum_{w \in \mathcal{W}}\|\hat{x}_w^{k} - \bar{x}_{W}^{k} + \bar{x}_{W}^{k} -   \bar{x}^{k}\|^2 \nonumber\\ \leq
		 & \sum_{w \in \mathcal{W}}(2\|\hat{x}_w^{k} - \bar{x}_{W}^{k}\|^2 +2 \| \bar{x}_{W}^{k} -   \bar{x}^{k}\|^2)\nonumber \\
			\leq& 2 \|\hat{X}^k - \bar{X}_{W}^k\|_F^2 + 2W (\frac{1}{R} \sum_{w \in \mathcal{R}} \|x_w^k - \bar{x}_W^k\|^2) \nonumber\\
			\leq&  2(1+\frac{W}{R}) \|\hat{X}^k - \bar{X}_{W}^k\|_F^2 \nonumber\\ \leq &\frac{48W(1+\frac{W}{R})(1-\omega)^2(\delta^2 A^2 + \max\{\xi^2, A^2\})}{K\omega^2} \nonumber\\ =&  \frac{48W(1+\frac{W}{R})(\lambda')^2(\delta^2 A^2 + \max\{\xi^2, A^2\})}{K(1 - \lambda')^2} \nonumber \\ =& O(\frac{(\lambda')^2 (\delta^2A^2 + \max \{\xi^2, A^2\})}{K(1 - \lambda')^2}), \nonumber			
		\end{align}
		which completes the proof.
	\end{proof}

	\section*{Proof of Theorem \ref{thm: lower bound}}

	\begin{proof}
		Theorem \ref{thm: lower bound} consists of two parts. For any majority-dominant $(\rho, M)$-robust aggregator, we will establish \eqref{eq: lower bound for RAgg} as follows. For the lower bound \eqref{eq: lower bound for WeiMean} of the weighted mean aggregator, the proof is the same as that of Theorem 9 in \cite{peng2025mean} via setting the network topology $\mathcal{G}$ as a fully connected graph, and is thus omitted.
		
		The main idea of showing the lower bound of the majority-dominant $(\rho, M)$-robust aggregator is to construct two different problem instances over the same network topology, such that Algorithm \ref{algorithm:1} cannot distinguish between these two instances, leading to the same output. The difference between the two global costs of these two instances results in the lower bound of the learning error for any majority-dominant $(\rho, M)$-robust aggregator. The failure of the distinguishability arises from the fact that the majority-dominant $(\rho, M)$-robust aggregator is defined over a \textit{set} of neighboring vectors, and thus the order does not matter.
		
			

		Without loss of generality, let $\mathcal{W} = \{1, \cdots, W\}$ be the set of agents, among which $\mathcal{R} = \{1, \cdots, R\}$ is the set of regular agents. The samples of all agents have two possible labels, 1 or 2, which correspond to two different functions $f(x;1)$ and $f(x;2)$ with
		\begin{align}
			f(x;t) = \frac{(1 - \delta_{\max})c}{\sqrt{2}} [x]_t + \frac{L}{2} \|x\|^2, \ t \in \{1, 2\},
		\end{align}
		Here, $c \triangleq \min \{\xi, A\}$.
		
		For each agent $w \in \mathcal{W}$, there is only one sample with label $\hat{b}^{(w)}$. The two different sets of local costs share the same form of
		\begin{align}
			\hat{f}_w(x) = f(x; \hat{b}^{(w)}), \ \forall w \in \{1, \cdots, W\},
		\end{align}
		or equivalently
		\begin{align}
			f_w(x) &= f(x; b^{(w)}), \  \forall w \leq R, \\
			\tilde{f}_w(x) &= f(x; \tilde{b}^{(w)}), \ \forall w > R.
		\end{align}
		Now we construct two instances with different sets of labels.
		
		We first consider a typical case with $R=W-R=4$. The constructed network topology is shown in Figure \ref{fig:graph_lowerbound}. Note that all regular agents are fully connected and each has two poisoned neighbors. Therefore, the local contamination rate is $\delta_{\max}  = \max_{w \in \mathcal{R}} (1 - \frac{ {\bar{R}_w}}{ {\bar{N}_w}}) =  \frac{2}{R+2}$.
		
		\begin{figure}[ht!]
			\centering
			\includegraphics[scale=0.4]{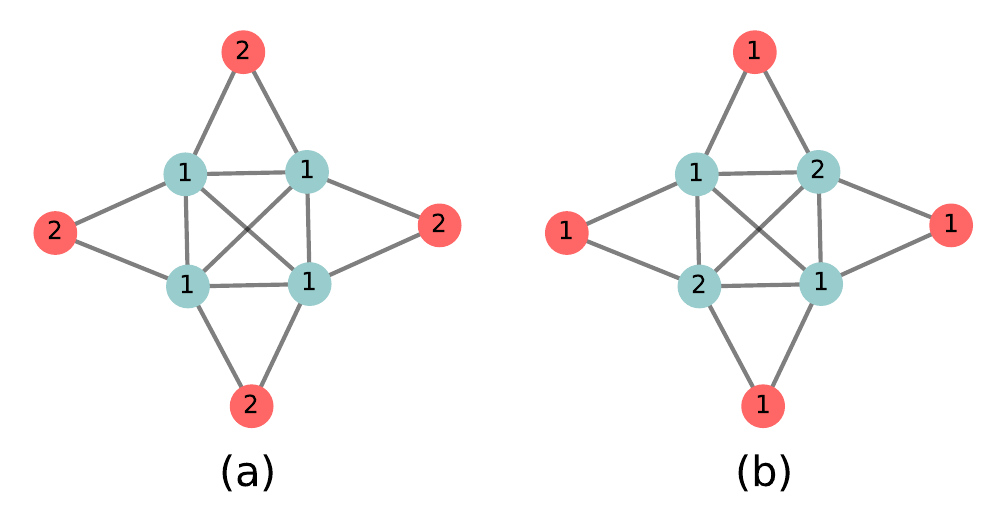}
			\caption{An 8-agent example of the constructed network topology where the regular agents are fully connected and each regular agent has two poisoned neighbors. The blues and reds represent the regular agents and the poisoned agents, respectively. (a) Instance 1: the local data of regular agents are with label 1 and the local data of poisoned agents are  with label 2; (b) Instance 2: the local data of regular agents $1$ and $3$ are with label 2 and the local data of other agents are with label 1.}
			\label{fig:graph_lowerbound}
		\end{figure}
		
		The first set of labels, denoted as $\{\hat{b}^{(w, 1)}, w \in \{1, \cdots, W\}\}$ is given by
		\begin{align}\label{proof:thm4-instance1}
			\hat{b}^{(w, 1)} = \left\{
			\begin{aligned}
				&1, &&w \leq R, \\
				&2, && w > R.
			\end{aligned}
			\right.
		\end{align}
		The second set of labels, denoted as $\{\hat{b}^{(w, 2)}, w \in \{1, \cdots, W\}\}$, is given by
		
		\begin{align}\label{proof:thm4-instance2}
			\hat{b}^{(w, 2)} = \left\{
			\begin{aligned}
				&1, &&w \neq 1 \text{ or }  3, \\
				&2, && w = 1 \text{ or } 3.
			\end{aligned}
			\right.
		\end{align}
		
		Denote $f^{(1)}(x) = \frac{1}{R} \sum_{w = 1}^R f(x; \hat{b}^{(w, 1)})$ and $f^{(2)}(x) = \frac{1}{R} \sum_{w = 1}^R f(x; \hat{b}^{(w, 2)})$ as the two global costs. We can check that all the assumptions are satisfied in these two instances. Since
		\begin{align}\label{proof:thm4-grad}
			\nabla f(x; t) = \frac{(1 - \delta_{\max})c}{\sqrt{2}} e_t + Lx,
		\end{align}
		where $e_t$ is the unit vector with the $t$-th element being 1. The gradients of $f^{(1)}$ and $f^{(2)}$ are respectively
		\begin{align}\label{proof:thm4-exp1-gradient}
			\nabla f^{(1)}(x) = \frac{(1 - \delta_{\max})c}{\sqrt{2}} e_1 + Lx,
		\end{align}
		\begin{align}\label{proof:thm4-exp2-gradient}
			\nabla f^{(2)} (x) = \frac{(1-2\delta_{\max})c}{\sqrt{2}} e_1 + \frac{\delta_{\max}c}{\sqrt{2}} e_2 + Lx.
		\end{align}
		As a result, we know their minima are achieved at $x^{*, (1)} = -\frac{(1 - \delta_{\max})c}{\sqrt{2}L} e_1$ and $x^{*, (2)} = - \frac{(1 - 2\delta_{\max})c}{\sqrt{2}L} e_1 - \frac{\delta_{\max}c}{\sqrt{2}L}e_2$, respectively, and there exists a uniform lower bound
		\begin{align}
			f^{(t)}(x) \geq f^* \triangleq - \frac{c^2}{2L},
		\end{align}
		satisfying Assumption \ref{assump: lower boundedness} for $t \in \{1, 2\}$.
		From \eqref{proof:thm4-exp1-gradient} and \eqref{proof:thm4-exp2-gradient}, the gradients are linear, Assumption \ref{assump: L-smooth} is satisfied and the constant is exactly $L$.
		Further, Assumption \ref{assump: bounded heterogeneity} is satisfied with constant $\xi$, as
		\begin{align}
			&\max_{w \leq R} \|\nabla f(x; \hat{b}^{(w, 1)}) - \nabla f^{(1)}(x)\| = 0, \\
			&\max_{w \leq R} \|\nabla f(x; \hat{b}^{(w, 2)}) - \nabla f^{(2)}(x)\| \\&= \max \{|1 - 2\delta_{\max}|, \delta_{\max}\} c \leq c \leq \xi, \nonumber
		\end{align}
		where the first inequality is due to $\delta_{\max} \in [0, 1]$ and the last inequality comes from $c \triangleq \min\{A, \xi\} \leq \xi$.
		Assumption \ref{assump: bounded disturbances} is satisfied with constant $A$, since
		\begin{align}
			\max_{w > R} \| \nabla f(x; \hat{b}^{(w, 1)}) - \nabla f^{(1)}(x)\| &= (1 - \delta_{\max})c \leq c\leq A, \\
			\max_{w > R} \| \nabla f(x; \hat{b}^{(w, 1)}) - \nabla f^{(1)}(x)\| &= \delta_{\max}c \leq c\leq A,
		\end{align}
		where the last inequalities comes from $c \triangleq \min\{A, \xi\} \leq A$.

		Next, we show the following claim: with the identical initialization $x^0$ over all agents, both instance will have the same local model after one iteration for both the regular and poisoned agents. For the first instance, with \eqref{proof:thm4-instance1} we have
		\begin{align}
			\hat{x}_w^{\frac{1}{2}, (1)} = \left\{
			\begin{aligned}
				&x^0 - \gamma \cdot \nabla f(x^0; 1), && w \leq R,\\
				&x^0 - \gamma \cdot \nabla f(x^0; 2), && w > R.
			\end{aligned}
			\right.
		\end{align}
		For notational convenience, denote $y_1 \triangleq x^0  - \gamma \cdot \nabla f(x^0; 1)$ and $y_2 \triangleq x^0 - \gamma \cdot \nabla f(x^0; 2)$. Thus, we have
		\begin{align}
			\hat{x}_w^{\frac{1}{2}, (1)} = \left\{
			\begin{aligned}
				&y_1, && w \leq R,\\
				&y_2, && w > R.
			\end{aligned}
			\right.
		\end{align}
		Further, since $\text{RAgg}(\cdot)$ is a $(\rho, M)$-robust aggregator, according to \eqref{eq: rho-robust aggregartor}, for regular agent $w \in \{1, \cdots, R\}$, we have
		\begin{align}\label{proof:thm4-ins1-regular-1}
			&\|\text{RAgg}(\{\hat{x}_v^{\frac{1}{2}, (1)}: v \in {\bar{\mathcal{N}}_w}\}) - \bar{x}_w^{\frac{1}{2}, (1)}  \| \\ \leq &
			\|\text{RAgg}(\{\underbrace{y_1, \cdots, y_1}_{R \text{ times}}, y_2, y_2\}) - \bar{x}_w^{\frac{1}{2}, (1)}  \| \nonumber \\ \leq
			&\rho \cdot \left[\max_{v \in {\bar{\mathcal{R}}_w}} \|\hat{x}_v^{\frac{1}{2}, (1)} -\bar{x}_w^{\frac{1}{2}, (1)} \|\right] \nonumber \\ = & 0, \nonumber
		\end{align}
		where the equality is due to $\bar{x}_w^{\frac{1}{2}, (1)} = \sum_{v^\prime \in {\bar{\mathcal{R}}_w}} M_{wv^\prime} \hat{x}_{v^\prime}^{\frac{1}{2}, (1)} = y_1 = \hat{x}_{v}^{\frac{1}{2}, (1)}, \forall w \in \{1, \cdots, R\},v \in {\bar{\mathcal{R}}_w}$. Thus, we have
		\begin{align}\label{proof:thm4-ins1-regular}
			\hat{x}_w^{1, (1)} = \text{RAgg}(\{\hat{x}_v^{\frac{1}{2}, (1)}: v \in {\bar{\mathcal{N}}_w}\}) =  \bar{x}_w^{\frac{1}{2}, (1)} = y_1.
		\end{align}
		
		
		Since $\text{RAgg}(\cdot)$ is also a majority-dominant aggregator, for poisoned agent $w \in \{R+1, \ldots, W\}$, we have
		\begin{align}\label{proof:thm4-ins1-poison}
			&\hat{x}_w^{1, (1)} = \text{RAgg}(\{\hat{x}_v^{\frac{1}{2}, (1)}: v \in {\bar{\mathcal{N}}_w}\})  \\&= \text{RAgg}(\{\{y_1: v \in \mathcal{R}_w \}\cup\{y_2\}\})= y_1, \nonumber
		\end{align}
		where the last equality is because that the local models of regular neighbors are the same and form a majority.

		For the second instance, with the same initialization at $x^0$, we have $\hat{x}_w^{\frac{1}{2}, (2)} = y_{\hat{b}^{(w, 2)}}.$ Then the set
		\begin{align}
			\{\hat{x}_w^{\frac{1}{2}, (2)}:w\in\mathcal{R}\} = \{y_2,y_2,\underbrace{y_1, \cdots, y_1}_{(R-2) \text{ times}}\}.
		\end{align}
		%
		Further, since $(\rho, M)$-robust aggregator is defined over the set of inputs, we can change the order of the inputs. From \eqref{proof:thm4-ins1-regular},
		for regular agent $w \in \{1, \cdots, R\}$, we have
		\begin{align}\label{proof:thm4-ins2-regular}
			\hat{x}_w^{1, (2)}
			=&\text{RAgg}(\{\hat{x}_v^{\frac{1}{2}, (2)}: v \in {\bar{\mathcal{N}}_w}\})  \\ = &
			\text{RAgg}(\{y_2,y_2,\underbrace{y_1, \cdots, y_1}_{(R-2) \text{ times}}, y_1, y_1\})  \nonumber\\ = &
			\text{RAgg}(\{\underbrace{y_1, \cdots, y_1}_{R \text{ times}}, y_2, y_2\})  \nonumber \\ =& y_1, \nonumber
		\end{align}
		where the last equality is because the inputs of $(\rho, M)$-robust aggregator $\text{RAgg}(\cdot)$ in \eqref{proof:thm4-ins1-regular} and \eqref{proof:thm4-ins2-regular} are the same ($R$ of them are $y_1$ and 2 of them are $y_2$), which leads to the same output.
		%
		%
		%
		
		Similarly, since for any poisoned agent $w \in \{R+1, \cdots, W\}$ the two instances has the same set of input messages, namely,
		\begin{align}
			\hspace{-1em}\{\hat{x}_v^{\frac{1}{2}, (2)}: v \in {\bar{\mathcal{N}}_w}\} = \{y_1,y_1,y_2\} = \{\hat{x}_v^{\frac{1}{2}, (1)}: v \in {\bar{\mathcal{N}}_w}\},
		\end{align}
		we have that
		\begin{align}\label{proof:thm4-ins2-poison}
			\hat{x}_w^{1, (2)} = \hat{x}_w^{1, (1)} = y_1.
		\end{align}
		
		
		Therefore, in the two constructed instances, the regular and poisoned agents will have the same local model at iteration $1$, given by
		\begin{align}
			\hat{x}_w^{1, (t)} = y_1, \quad  \forall w \in \mathcal{W}, \  t \in \{1, 2\}.
		\end{align}
		By induction, there must exist a sequence of vector $\{z_k:k\in\{1,\ldots,K\}\}$ such that
		\begin{align}
			\hat{x}_w^{k, (1)} = \hat{x}_w^{k, (2)} = z_k, \quad \forall w\in \mathcal{W},\ k \in \{1, \cdots, K\},
		\end{align}
		which means the Algorithm \ref{algorithm:1} with a $(\rho, M)$-robust aggregator $\text{RAgg}(\cdot)$ outputs the same average model of regular agents in these two constructed instances in any iteration, written as
		\begin{align}
			\bar{x}^{k, (1)} = \bar{x}^{k, (2)} = z_k, \quad \forall k \in \{1, \cdots, K\}.
		\end{align}
		
		
		However, according to \eqref{proof:thm4-exp1-gradient} and \eqref{proof:thm4-exp2-gradient}, the gradients of the two global costs in these two instances are different, which implies at least one of these two instances has high learning error with the same output. Below we investigate the lower bound of the learning error based on this observation.
		
		Specifically, we have
		\begin{align}
			&\max_{t \in \{1, 2\}} \|\nabla f^{(t)}(\bar{x}^k)\|\\
			=&\max_{t \in \{1, 2\}} \|\nabla f^{(t)}(z_k)\| \nonumber\\
			= & \max\{\|\nabla f^{(1)}(z_k)\|, \|\nabla f^{(2)}(z_k)\|\} \nonumber\\
			\geq & \frac{1}{2} \left(\|\nabla f^{(1)}(y_k)\| + \|\nabla f^{(2)}(z_k)\|\right) \nonumber \\
			\geq & \frac{1}{2} \|\nabla f^{(1)}(z_k) - \nabla f^{(2)}(z_k)\| \nonumber\\
			= & \frac{\delta_{\max}c}{2}. \nonumber
		\end{align}
		Further, since
		\begin{align}
			&2 \max_{t \in \{1, 2\}} \frac{1}{K} \sum_{k=1}^{K} \|\nabla f^{(t)} (\bar{x}^k)\|^2 \\
			\geq &\frac{1}{K} \sum_{k=1}^{K} \|\nabla f^{(1)} (\bar{x}^k)\|^2 + \frac{1}{K} \sum_{k=1}^{K} \|\nabla f^{(2)} (\bar{x}^k)\|^2 \nonumber \\
			\geq& \frac{1}{K} \sum_{k=1}^{K} (\max_{t \in \{1, 2\}} \|\nabla f^{(t)}(\bar{x}^k)\|^2) \nonumber \\
			\geq& \frac{\delta_{\max}^2 c^2}{4}, \nonumber
		\end{align}
		we have
		\begin{align}\label{proof:thm4-lower-bound}
			\max_{t \in \{1, 2\}} \frac{1}{K} \sum_{k=1}^K \|\nabla f^{(t)}(\bar{x}^k)\|^2  \geq
			\frac{\delta_{\max}^2c^2}{8}.
		\end{align}
		Choosing $R$ regular local costs $\{f_w(x) = f(x; \hat{b}^{(w, t)}): w \leq R\}$ and $W - R$ poisoned local costs $\{\tilde{f}_w(x) = f(x; \hat{b}^{(w, t)}): w > R\}$ where $t = \arg\max_{t \in \{1, 2\}}\frac{1}{K} \sum_{k=1}^K \|\nabla f^{(t)}(\bar{x}^k)\|^2$, \eqref{proof:thm4-lower-bound} demonstrates that
		\begin{align}
			\frac{1}{K} \sum_{k=1}^{K} \|\nabla f(\bar{x}^k)\| \geq \frac{\delta_{\max}^2 c^2}{8} = \Omega(\delta_{\max}^2 \min\{A^2, \xi^2\}),
		\end{align}
		which proves \eqref{eq: lower bound for RAgg}.
		
		In general, given any $R$ and $W$, the above arguments hold true, if:
		\begin{itemize}
			\item[(a)] All regular agents are fully connected, and each has two poisoned neighbors.
			\item[(b)] For the first instance, the local data of regular agents are with label 1 and the local data of poisoned agents are with label 2; For the second instance, the local data of two regular agents are with label 2 and the local data of other agents are with label 1. The two regular agents with label 2 do not share a common poisoned neighbor.
			\item[(c)] Similar to Lemma 5 in \cite{peng2025mean}, a $(\rho, M)$-robust aggregator can exist only if the fraction of poisoned inputs is less than $\frac{1}{2}$. Thus each poisoned agent has at least two regular neighbors.
			
		\end{itemize}
		Let $\{1,\ldots, R\}$ be the set of regular agents, where agents $\{1,R\}$ change their labels to 2 in the second instance.
		Suppose an integer feasible solution $\{p,q,n_i\in\mathbb{N}:i\in\{1,2,\ldots,p+q\}\}$ satisifes the following constraints:
		\begin{align*}
			\begin{cases}
				n_1+\ldots+n_p =n_{p+1}+\ldots+n_{p+q} =  R, \ p+q = W-R,\\
				p,q\ge 2,\quad n_i\ge 2,\ \forall i.
			\end{cases}
			\	\end{align*}
		Then the network topology is constructed as follows:
		\begin{itemize}
			\item All regular agents are fully connected.
			\item Poisoned agents does not connect with each other.
			\item Let $S_i= \sum_{j< i}n_j$ be the index pointers. Then the $i$-th poisoned agent is connected with regular agents $\{w: (S_i \mod R)+1\le w\le (S_{i+1}-1 \mod R)+1\}$.
		\end{itemize}
		It can be easily checked that all three conditions in the network topology are satisfied, and a feasible solution exists if $R \ge 4$. Further, if $R$ is even, there exists a feasible solution if and only if $R+4 \leq W \leq 2R$; if R is odd, the sufficient and necessary condition changes to $R+4 \leq W \leq 2R-1$.
		
		Otherwise, if $R \ge 4$ is an even number, and $W > 2R$, we first construct the network with $R$ regular agents and $R$ poisoned agents, and then add an additional $W - 2R$ ``dummy'' poisoned agents that have label 1 in both instances and whose neighbors are all regular agents; if $R \ge 4$ is an odd number, and $W > 2R - 1$, we first construct the network with $R$ regular agents and $R - 1$ poisoned agents, and then add an additional $W - 2R + 1$ ``dummy'' poisoned agents that have label 1 in both instances and whose neighbors are all regular agents. Then the proof of \eqref{eq: lower bound for RAgg} is still valid.

		For special cases excluded in the above construction, if $R\ge 4$ and $W\le 7$, one can show that there is no network satisfying (a)-(c) by pigeon-hole principle, but we can slightly change the construction: the regular agents are fully connected, and each poisoned agent is connected with all the regular agents (namely, $\mathcal{R}_w=\mathcal{R},\ \forall w\not\in\mathcal{R}$). For the first instance, all agents are with label 1 except for a poisoned agent; On the other hand, for the second instance, the only agent that has label 2 is a regular agent. Then slightly changing the analysis gives a different constant instead of 8 in \eqref{proof:thm4-lower-bound}, demonstrating the same order of the lower bound.
		The cases of $R=3$ and $R=2$ share the same construction as above.
		%
		%
		%
		If $R=1$, there is no poisoned neighbor due to the existence of $(\rho, M)$-robust aggregator
		mentioned in (c). Therefore, we have fully extended the result of $R=W-R=4$ to any feasible value of $R$ and $W.$
	\end{proof}

	\bibliographystyle{IEEEtran}
	\bibliography{refs}
	
	%
	%
	%
	%
	%
	%
	%
	%
	%
	
	\newpage
	\twocolumn[
	\begin{@twocolumnfalse}
		\section*{\centering{Supplementary Material for \\ \emph{Topology-Independent Robustness of  the Weighted Mean under Label Poisoning Attacks \\in Heterogeneous Decentralized Learning \\[15pt]}}}
	\end{@twocolumnfalse}
	]
	\setcounter{section}{0}
	\pagenumbering{arabic}

	\section*{Proof of Theorem \ref{thm: RAgg}}
	
	\begin{proof}
		Letting $\tilde{x}^k \triangleq \sum_{w \in \mathcal{R}} p_w x_w^k$, according to Assumption \ref{assump: L-smooth}, we have
		\begin{align}\label{proof:thm1-1}
			 & f(\tilde{x}^{k+1}) \\
        \leq & f(\tilde{x}^k) + \langle \nabla f (\tilde{x}^k), \tilde{x}^{k+1} - \tilde{x}^k\rangle + \frac{L}{2} \|\tilde{x}^{k+1} - \tilde{x}^k\|^2. \notag
		\end{align}
		For the second term on the right-hand side of \eqref{proof:thm1-1}, it holds that
		\begin{align} \label{proof:thm1-2}
			&\langle \nabla f(\tilde{x}^k), \tilde{x}^{k+1} - \tilde{x}^k \rangle \\ =& \gamma \langle \nabla f(\tilde{x}^k), \frac{1}{\gamma}(\tilde{x}^{k+1} - \tilde{x}^k) \rangle \nonumber \\ =& \frac{\gamma}{2} \|\nabla f(\tilde{x}^k) + \frac{1}{\gamma}(\tilde{x}^{k+1} - \tilde{x}^k)\|^2 - \frac{\gamma}{2} \|\nabla f(\tilde{x}^k)\|^2 \nonumber \\ &- \frac{\gamma}{2} \|\frac{1}{\gamma}(\tilde{x}^{k+1} - \tilde{x}^k)\|^2. \nonumber
		\end{align}
		With $\gamma \leq \frac{1}{L}$, combining \eqref{proof:thm1-1} and \eqref{proof:thm1-2} yields
		\begin{align} \label{proof:thm1-3}
			&f(\tilde{x}^{k+1}) \\ \leq & \ f(\tilde{x}^k) + \frac{\gamma}{2} \|\nabla f(\tilde{x}^k) + \frac{1}{\gamma}(\tilde{x}^{k+1} - \tilde{x}^k)\|^2 - \frac{\gamma}{2} \|\nabla f(\tilde{x}^k)\|^2 \nonumber\\ & + (\frac{L\gamma^2}{2} - \frac{\gamma}{2}) \|\frac{1}{\gamma}(\tilde{x}^{k+1} - \tilde{x}^k)\|^2 \nonumber \\\leq &\ f(\tilde{x}^k) + \frac{\gamma}{2} \|\nabla f(\tilde{x}^k) + \frac{1}{\gamma}(\tilde{x}^{k+1} - \tilde{x}^k)\|^2 - \frac{\gamma}{2} \|\nabla f(\tilde{x}^k)\|^2. \nonumber
		\end{align}
		
		Now we handle the second term on the right-hand side of \eqref{proof:thm1-3}. Note that
		\begin{align}\label{proof:thm1-4}
			&\|\nabla f(\tilde{x}^k) + \frac{1}{\gamma} (\tilde{x}^{k+1} - \tilde{x}^k)\| \\ \leq &
			\|\nabla f(\tilde{x}^k) - \sum_{w \in \mathcal{R}} p_w \nabla f_w(\tilde{x}^k)\| \nonumber\\ & + \|\sum_{w \in \mathcal{R}} p_w (\nabla f_w(\tilde{x}^k) - \nabla f_w(x_w^k))\| \nonumber  \\ &+ \|\sum_{w \in \mathcal{R}} p_w(\nabla f_w(x_w^k) + \frac{1}{\gamma} (x_w^{k+1} - x_w^k))\|. \nonumber
		\end{align}
		For the first term on the right-hand side of \eqref{proof:thm1-4}, with Assumption \ref{assump: bounded heterogeneity}, we have
		\begin{align}\label{proof:thm1-5}
			&\|\nabla f(\tilde{x}^k) - \sum_{w \in \mathcal{R}} p_w \nabla f_w(\tilde{x}^k)\| \\=& \|\sum_{w \in \mathcal{R}} (\frac{1}{R} - p_w) \nabla f_w(\tilde{x}^k)\|\nonumber \\=& \|\sum_{w \in \mathcal{R}} (\frac{1}{R} - p_w) (\nabla f_w(\tilde{x}^k) - \nabla f(\tilde{x}^k))\| \nonumber\\ \leq & \sum_{w \in \mathcal{R}} |\frac{1}{R} - p_w| \|\nabla f_w(\tilde{x}^k) - \nabla f(\tilde{x}^k)\| \leq  \beta \xi. \nonumber
		\end{align}
		For the second term on the right-hand side of \eqref{proof:thm1-4}, we have
		\begin{align}\label{proof:thm1-6}
			&\|\sum_{w \in \mathcal{R}} p_w (\nabla f_w(\tilde{x}^k) - \nabla f_w(x_w^k))\| \\ \leq& \sum_{w \in \mathcal{R}} p_w \|\nabla f_w(\tilde{x}^k) - \nabla f_w(x_w^k)\| \nonumber \\ \leq &
			\sum_{w \in \mathcal{R}} p_w L \|\tilde{x}^k - x_w^k\| \nonumber \\ \leq &
			L \max_{w \in \mathcal{R}}\|\tilde{x}^k - x_w^k\|. \nonumber
		\end{align}
        For the third term on the right-hand side of \eqref{proof:thm1-4}, we have
		\begin{align}\label{proof:thm1-7}
			&\|\sum_{w \in \mathcal{R}} p_w(\nabla f_w(x_w^k) + \frac{1}{\gamma} (x_w^{k+1} - x_w^k))\| \\
			\overset{\eqref{eq: local update}}{=} & \|\sum_{w \in \mathcal{R}} p_w (\frac{1}{\gamma} (x_w^{k+1} - x_w^{k+\frac{1}{2}}))\| \nonumber\\
			=  &\frac{1}{\gamma} \|\sum_{w \in \mathcal{R}} p_w  (x_w^{k+1} - \sum_{v \in {\bar{\mathcal{R}}_w}} M_{wv}x_{v}^{k+\frac{1}{2}} \nonumber \\&+ \sum_{v \in {\bar{\mathcal{R}}_w}} M_{wv}x_{v}^{k+\frac{1}{2}}- x_w^{k+\frac{1}{2}})\| \nonumber \\
			\leq   &\frac{1}{\gamma} \sum_{w \in \mathcal{R}} p_w\|  x_w^{k+1} - \sum_{v \in  {\bar{\mathcal{R}}_w}} M_{wv}x_{v}^{k+\frac{1}{2}}\| \nonumber \\& + \frac{1}{\gamma}\|\sum_{w \in \mathcal{R}} p_w\sum_{v \in {\bar{\mathcal{R}}_w}} M_{wv}x_{v}^{k+\frac{1}{2}}- \sum_{w \in \mathcal{R}} p_w x_w^{k+\frac{1}{2}})\| \nonumber
			\\ \overset{(a)}{=}& \frac{1}{\gamma} \sum_{w \in \mathcal{R}} p_w\|  x_w^{k+1} - \sum_{v \in {\bar{\mathcal{R}}_w}} M_{wv}x_{v}^{k+\frac{1}{2}}\| \nonumber
			\\ \overset{\eqref{eq: rho-robust aggregartor}}{\leq}& \frac{\rho}{\gamma} \max_{w \in \mathcal{R}} \max_{v \in {\bar{\mathcal{R}}_w}} \|x_v^{k+\frac{1}{2}} - \sum_{u \in {\bar{\mathcal{R}}_w}} M_{wu}x_{u}^{k+\frac{1}{2}}\| \nonumber \\
			\overset{(b)}{\leq}& \frac{2\rho}{\gamma} \max_{w \in \mathcal{R}}  \|x_w^{k+\frac{1}{2}} - \tilde{x}^{k+\frac{1}{2}}\| \nonumber,
		\end{align}
		within which (a) is due to $\sum_{w \in \mathcal{R}} p_w\sum_{v \in {\bar{\mathcal{R}}_w}} M_{wv}x_{v}^{k+\frac{1}{2}} =  \sum_{w \in \mathcal{R}} p_w x_w^{k+\frac{1}{2}}$ as $p^{\top} M = p^{\top}$, and (b) comes from
		\begin{align}\label{proof:thm1-8}
			&\|x_v^{k+\frac{1}{2}} - \sum_{u \in {\bar{\mathcal{R}}_w}} M_{wu}x_{u}^{k+\frac{1}{2}} \| \\\leq & \|x_v^{k+\frac{1}{2}} - \tilde{x}^{k+\frac{1}{2}} \| + \| \sum_{u \in {\bar{\mathcal{R}}_w}} M_{wu}( \tilde{x}^{k+\frac{1}{2}} - x_{u}^{k+\frac{1}{2}})\| \nonumber \\  \leq & 2\max_{w \in \mathcal{R}} \|x_w^{k+\frac{1}{2}} - \tilde{x}^{k+\frac{1}{2}}\|.\nonumber
		\end{align}
		Combining \eqref{proof:thm1-5}, \eqref{proof:thm1-6} and \eqref{proof:thm1-7}, we have
		\begin{align}\label{proof:thm1-9}
			&\|\nabla f(\tilde{x}^k) + \frac{1}{\gamma} (\tilde{x}^{k+1} - \tilde{x}^k)\| \\ \leq& \beta\xi + L \max_{w \in \mathcal{R}} \|x_w^k - \tilde{x}^k \| + \frac{2\rho}{\gamma} \max_{w \in \mathcal{R}} \|x_w^{k+\frac{1}{2}} - \tilde{x}^{k+ \frac{1}{2}}\|. \nonumber
		\end{align}
		
		With Assumption \ref{assump: bounded heterogeneity}, it holds that
		\begin{align}\label{proof:thm1-10}
			&\|x_w^{k+\frac{1}{2}} - \tilde{x}^{k+\frac{1}{2}}\| \\
			=& \|(x_w^k - \tilde{x}^k) - \gamma\cdot(\nabla f_w(x_w^k) -  \sum_{v \in {\mathcal{R}}} p_v \nabla f_v(x_v^k))\| \nonumber \\
			\leq & \|x_w^k - \tilde{x}^k\| + \gamma \|\nabla f_w(x_w^k) -  \sum_{v \in {\mathcal{R}}} p_v \nabla f_v(x_v^k)\| \nonumber \\
			=& \|x_w^k - \tilde{x}^k\| + \gamma \|(1 - p_w) \nabla f_w(x_w^k) - \sum_{v \neq w, v \in {\mathcal{R}}} p_v \nabla f_v(x_v^k)\|\nonumber \\
			= & \|x_w^k - \tilde{x}^k\| + \gamma \|(1 - p_w) \nabla f_w(x_w^k) - (1 - p_w) \nabla f(x_w^k) \nonumber\\
			& - \sum_{v \neq w, v \in {\mathcal{R}}} p_v \nabla f_v(x_v^k) +  \sum_{v \neq w, v \in {\mathcal{R}}} p_v \nabla f(x_v^k) \nonumber \\
			& + (1 - p_w) \nabla f(x_w^k) - (1 - p_w) \nabla f(\tilde{x}^k) \nonumber\\
			& - \sum_{v \neq w, v \in {\mathcal{R}}} p_v \nabla f(x_v^k) +  \sum_{v \neq w, v \in {\mathcal{R}}} p_v \nabla f(\tilde{x}^k) \| \nonumber \\
			\leq& \| x_{w}^k - \tilde{x}^k\| + \gamma \cdot \| (1 - p_{w}) \left( \nabla f_{w}(x_{w}^k) - \nabla f(x_w^k) \right) \nonumber\\& - \sum_{v \ne w,  v\in \mathcal{R}} p_v \left( \nabla f_v(x_v^k) - \nabla f(x_v^k) \right) \| \nonumber \\
			& + \gamma \cdot \| (1 - p_{w}) \left( \nabla f(x_{w}^k) - \nabla f(\tilde{x}^k) \right) \nonumber \\&- \sum_{v \ne \omega, v \in \mathcal{R}} p_v \left( \nabla f(x_v^k) - \nabla f(\tilde{x}^k) \right)\| \nonumber \\
			\leq & \max_{w \in \mathcal{R}} \|x_w^k - \tilde{x}^k\| \nonumber \\ &+ \gamma (1 - p_w + \sum_{v\neq w, v\in \mathcal{R}} p_v) (\xi + L \max_{w \in \mathcal{R}} \|x_w^k - \tilde{x}^k\|) \nonumber\\
			\leq & (1 + 2\gamma L) \max_{w \in \mathcal{R}} \|x_w^k - \tilde{x}^k\| + 2\gamma \xi. \nonumber
		\end{align}
		Therefore, we obtain
		\begin{align}\label{proof:thm1-11}
			\max_{w \in \mathcal{R}} \|x_w^{k+\frac{1}{2}} - \tilde{x}^{k+\frac{1}{2}}\| \leq  (1 + 2\gamma L) \max_{w \in \mathcal{R}} \|x_w^k - \tilde{x}^k\| + 2\gamma \xi.
		\end{align}
		Substituting \eqref{proof:thm1-11} into \eqref{proof:thm1-9} results in
		\begin{align}\label{proof:thm1-12}
			&\|\nabla f(\tilde{x}^k) + \frac{1}{\gamma} (\tilde{x}^{k+1} - \tilde{x}^k)\| \\
			\leq& \beta\xi + L \max_{w \in \mathcal{R}} \|x_w^k - \tilde{x}^k \|\nonumber \\& + \frac{2\rho}{\gamma} \left[(1 + 2\gamma L) \max_{w \in \mathcal{R}} \|x_w^k - \tilde{x}^k\| + 2\gamma \xi\right] \nonumber \\
			=& ((1+4\rho)L + \frac{2\rho}{\gamma}) \max_{w \in \mathcal{R}} \|x_w^k - \tilde{x}^k\| + (\beta + 4\rho) \xi. \nonumber
		\end{align}
		Plugging \eqref{proof:thm1-12} back to \eqref{proof:thm1-3}, we have
		\begin{align}\label{proof:thm1-13}
			&\|\nabla f(\tilde{x}^k)\|^2 \\ \leq &\frac{2 (f(\tilde{x}^k) - f(\tilde{x}^{k+1}))}{\gamma} \nonumber \\&+ \left[((1+4\rho)L + \frac{2\rho}{\gamma}) \max_{w \in \mathcal{R}} \|x_w^k - \tilde{x}^k\| + (\beta + 4\rho) \xi\right]^2. \nonumber
		\end{align}

		
		Stepping from \eqref{proof:thm1-13}, since
		\begin{align}\label{proof:thm1-14}
			&\|\nabla f(\bar{x}^k)\|^2\\
			\leq & 2\|\nabla f(\bar{x}^k) - \nabla f(\tilde{x}^k)\|^2 + 2\|\nabla f(\tilde{x}^k)\|^2 \nonumber \\ \leq & 2 L^2 \|\bar{x}^k - \tilde{x}^k\|^2 + 2\|\nabla f(\tilde{x}^k)\|^2 \nonumber\\
			\leq & 2 L^2 \max_{w \in \mathcal{R}} \|x_w^k - \tilde{x}^k\|^2 + 2\|\nabla f(\tilde{x}^k)\|^2, \nonumber
		\end{align}
		we further have
		\begin{align}\label{proof:thm1-15}
			&\|\nabla f(\bar{x}^k)\|^2 \\ \leq &
			\frac{4 (f(\tilde{x}^k) - f(\tilde{x}^{k+1}))}{\gamma} + 2 L^2 \max_{w \in \mathcal{R}} \|x_w^k - \tilde{x}^k\|^2\nonumber \\& + 2\left[((1+4\rho)L + \frac{2\rho}{\gamma}) \max_{w \in \mathcal{R}} \|x_w^k - \tilde{x}^k\| + (\beta + 4\rho) \xi\right]^2 . \nonumber
		\end{align}
		
		Now we already know that $\|\nabla f(\bar{x}^k)\|$ is bounded by $\max_{w \in \mathcal{R}} \|x_w^k - \tilde{x}^k\|$, in addition to other terms. Next, we recursively bound this term. Note that
		\begin{align}\label{proof:thm1-16}
			&\|x_w^{k+1} - \tilde{x}^{k+1}\| \\\leq & \|x_w^{k+1} - \sum_{v \in {\bar{\mathcal{R}}_w}} M_{wv} x_v^{k+\frac{1}{2}}\| \nonumber\\& + \|\sum_{v \in {\bar{\mathcal{R}}_w}} M_{wv} x_v^{k+\frac{1}{2}} - \tilde{x}^{k+\frac{1}{2}}\| + \|\tilde{x}^{k+1} - \tilde{x}^{k+\frac{1}{2}}\|. \nonumber
		\end{align}
		Below, we handle the terms on the right-hand side of \eqref{proof:thm1-16} one by one.
		
		For the first term on the right-hand side of \eqref{proof:thm1-16}, since $x_w^{k+1} = \text{RAgg}( \{x_v^{k+\frac{1}{2}}:v \in {\bar{\mathcal{N}}_w}\})$, according to \eqref{eq: rho-robust aggregartor}, we know
		\begin{align}\label{proof:thm1-17}
			& \|x_w^{k+1} - \sum_{v \in {\bar{\mathcal{R}}_w}} M_{wv} x_v^{k+\frac{1}{2}}\| \\ \leq
			&\rho \cdot \max_{v \in {\bar{\mathcal{R}}_w}} \|x_v^{k+\frac{1}{2}} - \sum_{u \in {\bar{\mathcal{R}}_w}} M_{wu} x_u^{k+\frac{1}{2}}\| \nonumber \\
			\leq & 2\rho \cdot \max_{w \in \mathcal{R}} \|x_w^{k+\frac{1}{2}} - \tilde{x}^{k+\frac{1}{2}}\|, \nonumber
		\end{align}
		where the last inequality is due to \eqref{proof:thm1-9}. For the second term on the right-hand side of \eqref{proof:thm1-16}, we have
		\begin{align}\label{proof:thm1-18}
			&\|\sum_{v \in  {\bar{\mathcal{R}}_w}} M_{wv} x_v^{k+\frac{1}{2}} - \tilde{x}^{k+\frac{1}{2}}\|\\
			=& \|\sum_{v \in \mathcal{R}} (M_{wv} - p_v) x_v^{k+\frac{1}{2}}\| \nonumber \\
			=&\|\sum_{v \in \mathcal{R}} (M_{wv} - p_v) (x_v^{k+\frac{1}{2}} - \tilde{x}^{k+\frac{1}{2}})\| \nonumber\\
			\leq & \sum_{v \in \mathcal{R}} |M_{wv} - p_v|\| x_v^{k+\frac{1}{2}} - \tilde{x}^{k+\frac{1}{2}}\| \nonumber \\
			\leq & \max_{w \in \mathcal{R}}\sum_{v \in \mathcal{R}} |M_{wv} - p_v| \cdot \max_{v \in \mathcal{R}}\| x_v^{k+\frac{1}{2}} - \tilde{x}^{k+\frac{1}{2}}\| \nonumber \\
			= & \lambda \cdot \max_{v \in \mathcal{R}}\| x_v^{k+\frac{1}{2}} - \tilde{x}^{k+\frac{1}{2}}\|. \nonumber
		\end{align}
		For the third term on the right-hand side of \eqref{proof:thm1-16}, we have
		\begin{align}\label{proof:thm1-19}
			&\|\tilde{x}^{k+1} - \tilde{x}^{k+\frac{1}{2}}\| \\
			=& \|\sum_{w \in \mathcal{R}}p_w x_w^{k+1} - \sum_{w \in \mathcal{R}} p_w x_w^{k+\frac{1}{2}}\| \nonumber\\
						=& \|\sum_{w \in \mathcal{R}}p_w x_w^{k+1} - \sum_{w, v \in \mathcal{R}} p_w M_{wv} x_w^{k+\frac{1}{2}}\| \nonumber \\
									=& \|\sum_{w \in \mathcal{R}}p_w (x_w^{k+1} - \sum_{v \in \mathcal{R}} M_{wv} x_w^{k+\frac{1}{2}})\| \nonumber \\
												\leq& \sum_{w \in \mathcal{R}}p_w\| x_w^{k+1} - \sum_{v \in \mathcal{R}}  M_{wv} x_w^{k+\frac{1}{2}}\| \nonumber \\
			\leq &  2\rho \cdot \max_{w \in \mathcal{R}} \|x_w^{k+\frac{1}{2}} - \tilde{x}^{k+\frac{1}{2}}\|, \nonumber
		\end{align}
		where the last inequality is due to \eqref{proof:thm1-17}. Substituting \eqref{proof:thm1-17}, \eqref{proof:thm1-18} and \eqref{proof:thm1-19} into \eqref{proof:thm1-16}, we have
		\begin{align}\label{proof:thm1-20}
			\hspace{-1em}\max_{w \in \mathcal{R}}\|x_w^{k+1} - \tilde{x}^{k+1}\| \leq (4\rho + \lambda) \max_{w \in \mathcal{R}} \|x_w^{k+\frac{1}{2}} - \tilde{x}^{k+\frac{1}{2}}\|.
		\end{align}
		
		Let $\omega \triangleq 1 - (4\rho + \lambda)$. Since $\rho < \frac{1 - \lambda}{4}$ and $\rho, \lambda \geq 0$, we have $\omega \in (0,1]$ and
		\begin{align}\label{proof:thm1-21}
			&\max_{w \in \mathcal{R}}\|x_w^{k+1} - \tilde{x}^{k+1}\| \\ &\leq (1 - \omega) \max_{w \in \mathcal{R}} \|x_w^{k+\frac{1}{2}} - \tilde{x}^{k+\frac{1}{2}}\| \nonumber \\ &\leq (1 - \omega) (1 + 2\gamma L) \max_{w \in \mathcal{R}} \|x_w^k - \tilde{x}^k\| + 2(1 - \omega )\gamma \xi. \nonumber
		\end{align}
		Since $\gamma \leq \frac{\omega}{4(1 - \omega)L}$, we have $1 + 2\gamma L \leq \frac{2 - \omega}{2(1 - \omega)}$ and
		\begin{align}\label{proof:thm1-22}
			&\max_{w \in \mathcal{R}} \|x_w^{k+1} - \tilde{x}^{k+1}\| \\
			\leq& (1 - \frac{\omega}{2}) \max_{w \in \mathcal{R}} \|x_w^k - \tilde{x}^k\| + 2(1 - \omega)\gamma \xi \nonumber\\
			\leq& (1 - \frac{\omega}{2})^{k+1} \max_{w \in \mathcal{R}} \|x_w^0 - \tilde{x}^0\| + 2(1 - \omega) \gamma \xi \cdot \sum_{k'=0}^k(1 - \frac{\omega}{2})^{k'} \nonumber \\
			\leq &(1 - \frac{\omega}{2})^{k+1} \max_{w \in \mathcal{R}} \|x_w^0  - \tilde{x}^0\| \nonumber \\&+ 2(1 - \omega) \gamma \xi \cdot \frac{1 - (1 - \frac{\omega}{2})^{k+1}}{\frac{\omega}{2}} \nonumber \\
			\leq& (1 - \frac{\omega}{2})^{k+1} \max_{w \in \mathcal{R}} \|x_w^0  - \tilde{x}^0\| + \frac{4(1 - \omega) \gamma \xi }{\omega}. \nonumber
		\end{align}
		Since $x_w^0 = x_v^0$ for any $w , v \in \mathcal{R}$, we have $x_w^0 = \tilde{x}^0$ and thus
		\begin{align}\label{proof:thm1-23}
			\max_{w \in \mathcal{R}} \|x_w^{k} - \tilde{x}^k\| \leq \frac{4(1 - \omega)\gamma \xi}{\omega}.
		\end{align}
		
		Substituting \eqref{proof:thm1-23} into \eqref{proof:thm1-15}, we have
		\begin{align}\label{proof:thm1-24}
			&\|\nabla f(\bar{x}^k)\|^2 \\ \leq &
			\frac{4 (f(\tilde{x}^k) - f(\tilde{x}^{k+1}))}{\gamma} \nonumber \\& + 2\left[((1+4\rho)L + \frac{2\rho}{\gamma}) \frac{4(1 - \omega)\gamma \xi}{\omega}  + (\beta + 4\rho) \xi\right]^2 \nonumber \\& + 2 L^2 (\frac{4(1 - \omega)\gamma \xi}{\omega})^2 \nonumber \\
        \leq &
			\frac{4 (f(\tilde{x}^k) - f(\tilde{x}^{k+1}))}{\gamma} \nonumber \\&+ 2\left[ \frac{4(1+4\rho)L(1 - \omega)\gamma\xi}{\omega} + \frac{8(1 - \omega)\rho\xi}{\omega} + (\beta + 4\rho) \xi\right]^2 \nonumber \\& + \frac{32L^2(1 - \omega)^2\gamma^2 \xi^2}{\omega^2} \nonumber \\
            \leq &
			\frac{4 (f(\tilde{x}^k) - f(\tilde{x}^{k+1}))}{\gamma} +  \frac{64(1+4\rho)^2L^2(1 - \omega)^2\gamma^2\xi^2}{\omega^2} \nonumber \\& + \frac{256(1 - \omega)^2\rho^2\xi^2}{\omega^2} + 4\beta^2\xi^2 + 64\rho^2\xi^2 \nonumber\\& + \frac{32L^2(1 - \omega)^2\gamma^2 \xi^2}{\omega^2} \nonumber\\ \leq &
			\frac{4 (f(\tilde{x}^k) - f(\tilde{x}^{k+1}))}{\gamma} +  \frac{1024(1+\rho)^2L^2(1 - \omega)^2\gamma^2\xi^2}{\omega^2} \nonumber \\& + \frac{256\rho^2\xi^2}{\omega^2} + 4\beta^2\xi^2. \nonumber
		\end{align}
		Thus, with Assumption \ref{assump: lower boundedness}, we have
		\begin{align}
			&\frac{1}{K} \sum_{k=0}^{K-1} \|\nabla f(\bar{x}^k)\|^2 \\ \leq &\frac{2 (f(x^0) - f^*)}{\gamma K} +  \frac{1024(1+\rho)^2L^2(1 - \omega)^2\gamma^2\xi^2}{\omega^2} \nonumber \\& + \frac{256\rho^2\xi^2}{\omega^2} + 4\beta^2\xi^2 \nonumber \\
			= & \frac{2 (f(x^0) - f^*)}{\sqrt{K}} +  \frac{1024(1+\rho)^2L^2(1 - \omega)^2\xi^2}{K\omega^2} \nonumber \\& + \frac{256\rho^2\xi^2}{\omega^2} + 4\beta^2\xi^2 \nonumber \\
			=& O (\frac{f(x^0) - f^*}{\sqrt{K}}) + O(\frac{(1 +\rho)^2 L^2 (\lambda + \rho)^2 \xi^2}{K(1 - \lambda - 4\rho)^2}) \nonumber \\&+ O((\frac{\rho^2}{(1 - \lambda - 4\rho)^2} + \beta^2)\xi^2), \nonumber
		\end{align}
		where the first equality is due to $\gamma = \frac{1}{\sqrt{K}}$.
		
		For the consensus error, we have
		\begin{align}
			&\max_{w \in \mathcal{R}} \|x_w^k - \bar{x}^k\|^2 \\
			\leq
			& 2\max_{w \in \mathcal{R}} \|x_w^k - \tilde{x}^k\|^2 + 2 \|\frac{1}{R} \sum_{w' \in \mathcal{R}} x_{w'}^k - \tilde{x}^k\|^2 \nonumber\\
			\leq& 4\max_{w \in \mathcal{R}} \|x_w^k - \tilde{x}^k\|^2 \nonumber\\
			\leq& \frac{64(1 - \omega)^2\gamma^2 \xi^2}{\omega^2} \nonumber\\
			=& O(\frac{(\lambda + \rho)^2 \xi^2}{K(1-\lambda - 4\rho)^2}), \nonumber
		\end{align}
		which completes the proof.
	\end{proof}
	
	Theorem \ref{thm: RAgg} is inspired by the theoretical derivation in \cite{wu2023byzantine}, but with refined analysis and a simplified learning error.
	In \cite{wu2023byzantine}, the learning error is $O((1+ \frac{\hat{\lambda} + \rho \sqrt{R}}{(1- \hat{\lambda} - 8\rho\sqrt{R})^3}) (\rho^2 R + \hat{\beta}^2) \xi^2)$, in which $\hat{\lambda} \triangleq \|(I - \frac{1}{R}\bm{1}_R \bm{1}_R^{\top})M\|^2$ and $\hat{\beta} \triangleq \frac{1}{\sqrt{R}} \|M^{\top} \bm{1}_R - \bm{1}_R\|$. In contrast, our learning error is $O((\frac{\rho^2}{(1-\lambda-4\rho)^2} + \beta^2) \xi^2)$. Note that both $\lambda$ and $\hat{\lambda}$ characterize the sparsity level of the regular network, while both $\beta$ and $\hat{\beta}$ quantify the non-doubly stochasticity of the virtual mixing matrix $M$. The refinement stems from fully utilizing the row-stochasticity of the virtual mixing matrix $M$. Specifically, inspired by earlier works on decentralized optimization with row-stochastic combination weights \cite{chen2015on,sayed2014adaptation,vlaski2021distributed}, with the help of the Perron vector of $M$, we analyze the consensus to the weighted centroid $\tilde{x}^k = \sum_{w \in \mathcal{R}} p_w x_w^k$, which is better suited for analyzing decentralized optimization with a row-stochastic mixing matrix. Instead, \cite{wu2023byzantine} analyze the consensus to the uniform average $\bar{x}^k = \frac{1}{R} \sum_{w \in \mathcal{R}} x_w^k$.

	\section*{Additional Experiments}

	\begin{figure}
		\centering
		\includegraphics[scale=0.35]{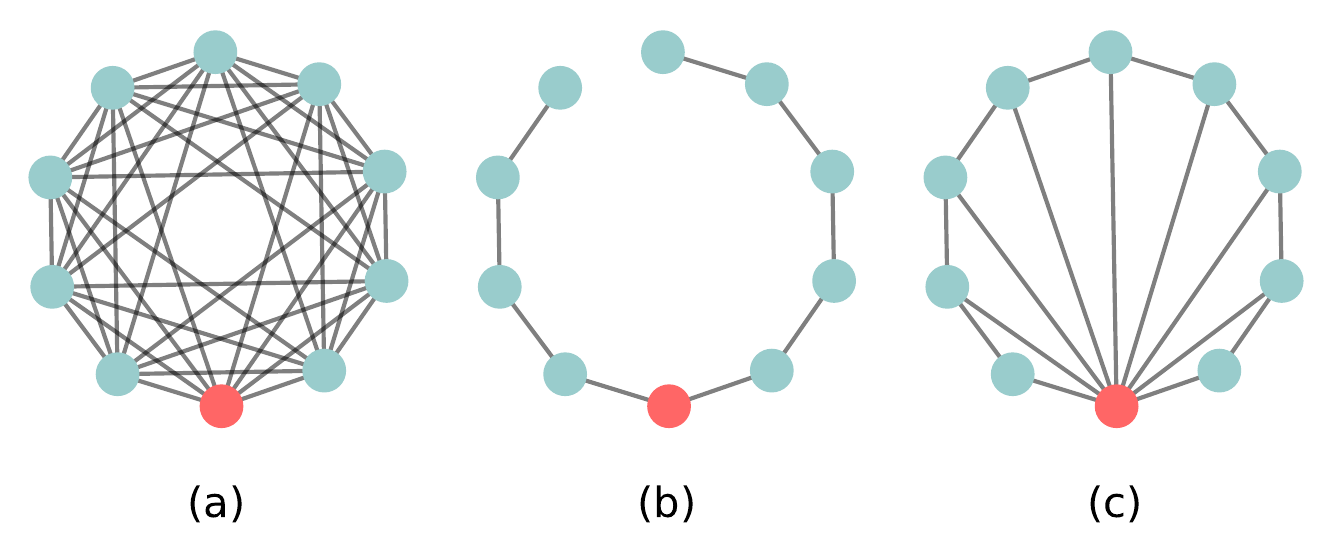}
		\caption{(a) Two-castle graph. (b) Line graph. (c) Fan graph. The blue points and red points represent the regular agents and the poisoned agents, respectively.}
		\label{fig:graph_topology}
	\end{figure}

		
		\begin{figure}
			\centering
			\includegraphics[scale=0.12]{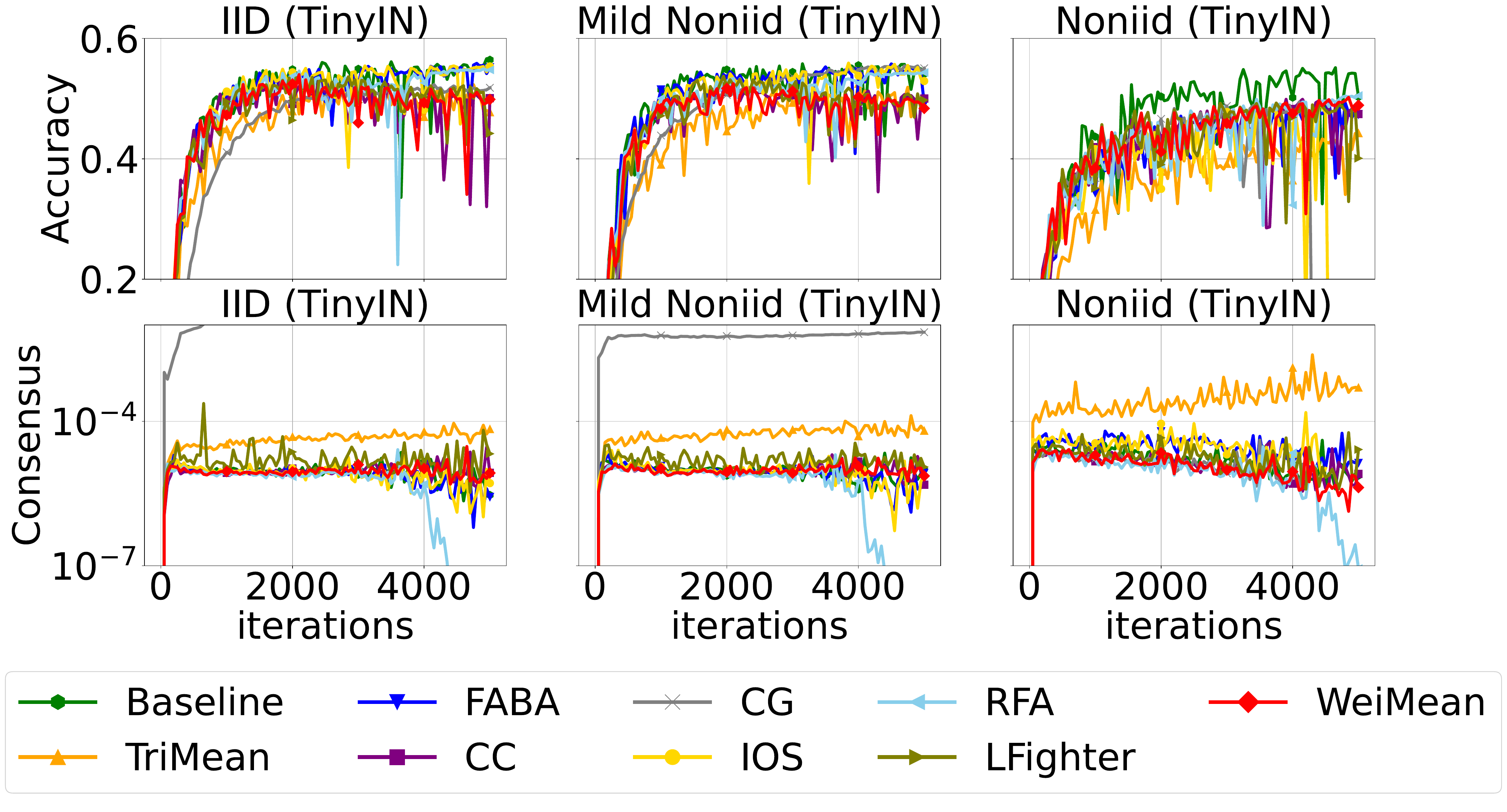}
			\caption{Classification accuracies and consensus errors of ResNet34 trained on Tiny-ImageNet (TinyIN) in the two-castle graph.}
			\label{fig:ResNet34_tinyimagenet_twocastle_label_flipping}
		\end{figure}
		
		\begin{figure}
			\centering
			\includegraphics[scale=0.14]{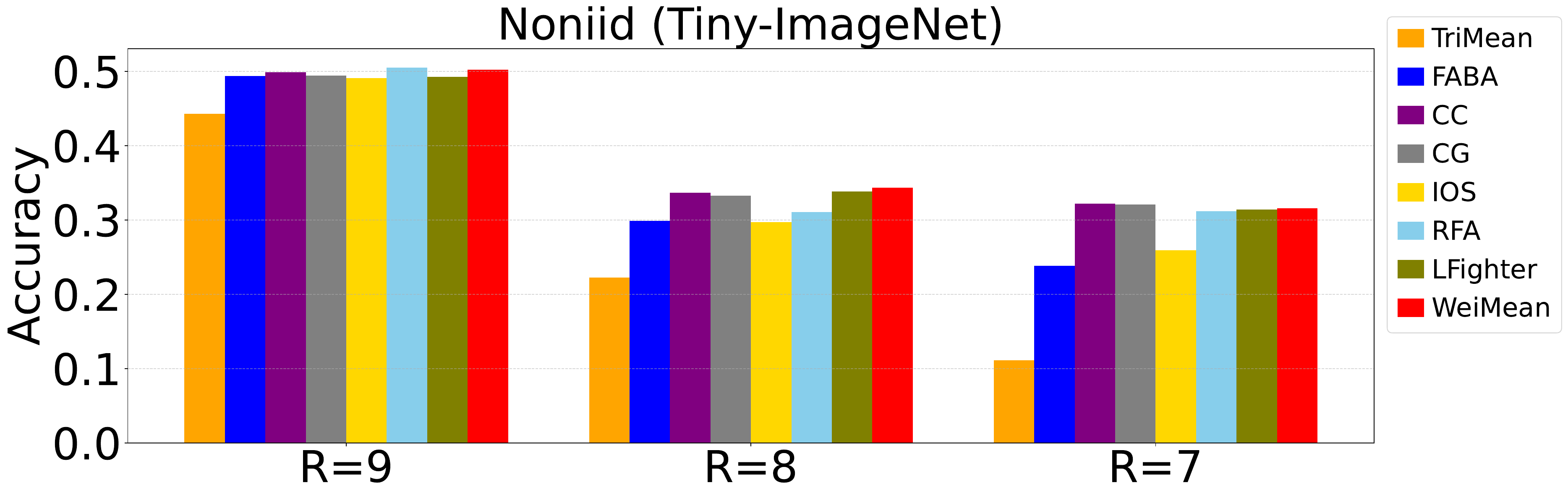}
			\caption{Classification accuracies and consensus errors of ResNet34 trained on Tiny-ImageNet in the two-castle graph, with the number of regular agents $R \in \{9, 8, 7\}$ and $W=10$, under the non-i.i.d. setting}
			\label{fig:ResNet34_tinyimagenet_twocastle_label_flipping_ablation_regular_number}
		\end{figure}
		
		\begin{figure}
			\centering
			\includegraphics[scale=0.14]{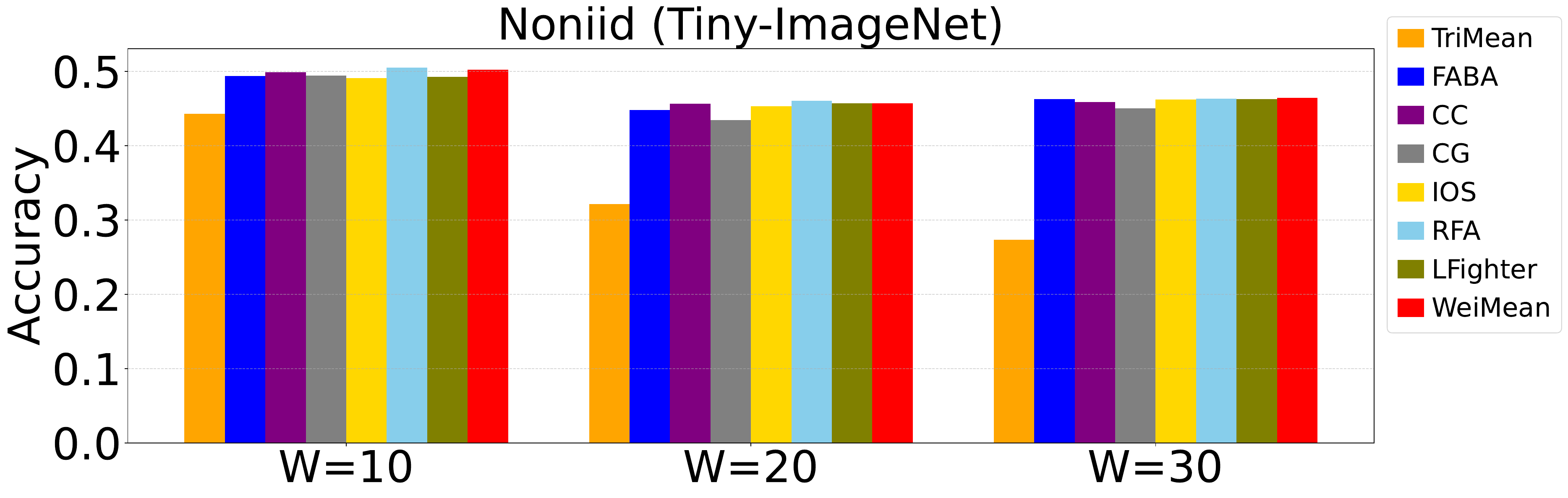}
			\caption{Classification accuracies and consensus errors of ResNet34 trained on Tiny-ImageNet in the two-castle graph, with the number of agents $W \in \{10, 20, 30\}$ and one poisoned agent, under the non-i.i.d. setting}
			\label{fig:ResNet34_tinyimagenet_twocastle_label_flipping_ablation_number}
		\end{figure}
		
		To further validate our theoretical findings, we conduct numerical experiments in broader settings. We train a ResNet34 model on Tiny-ImageNet dataset under i.i.d., mild non-i.i.d., and non-i.i.d. data partitions. The network topology is the two-castle graph shown in Figure \ref{fig:graph_topology}(a). The label poisoning attacks and compared aggregators follow the same setup as in Section \ref{sec: numerical experiments}. We set the step size to $\gamma^k = \gamma = 0.03$ and use a batch size of 32 for all aggregators.	
		
		In the two-castle graph, as discussed in Section \ref{sec: numerical experiments}, the global contamination rate $\delta = \tfrac{1}{10}$ is smaller than the local contamination rate $\delta_{\max} = \tfrac{1}{9}$. As shown in Figure \ref{fig:ResNet34_tinyimagenet_twocastle_label_flipping}, the performance of the weighted mean aggregator is worse than that of the robust aggregators in the i.i.d. and mild non-i.i.d. cases, but in the non-i.i.d. case it is among the top two, slightly worse than RFA. These empirical results validate our theoretical finding that the weighted mean aggregator outperforms the robust aggregators in sufficiently heterogeneous settings when the global contamination rate is smaller than the local contamination rate.
		
		To further verify the robustness of the weighted mean aggregator under different scalability settings of regular agents and the overall network, we vary the number of regular agents from $R=9$ to $R=8$ and $R=7$ in the two-castle graph with $W=10$ agents. In addition, we vary the total number of agents from $W=10$ to $W=20$ and $W=30$ in the two-castle graph with one poisoned agent. In all cases, the theoretical condition that the global contamination rate is smaller than the local contamination rate is satisfied. We report the accuracies of the weighted mean aggregator and  robust aggregators under non-i.i.d. data distribution in Figures \ref{fig:ResNet34_tinyimagenet_twocastle_label_flipping_ablation_regular_number} and \ref{fig:ResNet34_tinyimagenet_twocastle_label_flipping_ablation_number}. The results show that the weighted mean aggregator outperforms most robust aggregators and consistently ranks among the top three across all global contamination rates and scalability settings, demonstrating its superiority under the non-i.i.d. setting when the global contamination rate is smaller than the local contamination rate.

		\newpage
			\section*{Analysis of Majority-Dominant Aggregators}
		Within this section, we provide a theoretical analysis showing that several state-of-the-art robust aggregators, including TriMean, FABA, and IOS, satisfy Definition~\ref{def: majority-dominant} and are therefore majority-dominant aggregators. During the analysis, we encounter two special cases, namely the CC and CG aggregators, which do not possess the majority-dominant property. In particular, we construct a counterexample demonstrating that CC and CG may fail to output the same vector even when the regular inputs are identical and form a majority, provided that their clipping thresholds are chosen in a practical manner.

		For notational clarity, we define $\mathcal{H}_w \triangleq \bar{\mathcal{R}}_w  \cap \mathcal{R}$ as the set of regular inputs of agent $w \in \mathcal{W}$, with $H_w = |\mathcal{H}_w|$. Similarly, we define $\mathcal{P}_w \triangleq \bar{\mathcal{N}}_w \setminus \mathcal{H}_w$ as the set of poisoned inputs of agent $w \in \mathcal{W}$, with $P_w = |\mathcal{P}_w|$. Since the regular inputs form a majority, it follows that $P_w < H_w$. Moreover, we denote $\bar{y}_{w,R} \triangleq \frac{1}{H_w} \sum_{v \in \mathcal{H}_w} y_v$ and $\bar{y}_{w,P} \triangleq \frac{1}{P_w} \sum_{v \in \mathcal{P}_w} y_v$ as the averages of the regular and poisoned inputs of agent $w \in \mathcal{W}$, respectively.
		
		Note that the robust aggregators TriMean, FABA, and IOS require an estimate of the number of poisoned inputs to perform the aggregation step. Let $Q_w$ denote the estimated number of poisoned inputs for agent $w$. For simplicity, we set $Q_w = P_w$ in the following proofs. Following the same proof strategy, the arguments readily extend to the well-specified case where $Q_w \geq P_w$.

		\subsection{Trimmed Mean (TriMean)}
		TriMean is an aggregator that removes the largest $P_w$ and the smallest $P_w$ elements from each coordinate of the input set $\{y_v: v \in \bar{\mathcal{N}}_w \}$ for any agent $w \in \mathcal{W}$, and then computes the average of the remaining elements. Specifically, for the $i$-th coordinate, let $\pi_i$ denote a permutation on $\{1, \ldots, \bar{N}_w\}$ that sorts $\{[y_v]_i: v \in \bar{\mathcal{N}}_w\}$ in non-decreasing order, i.e., $[y_{\pi_i(1)}]_i \leq [y_{\pi_i(2)}]_i \leq \cdots \leq [y_{\pi_i(\bar{N}_w)}]_i$, where $[\cdot]_i$ denotes the $i$-th coordinate of a vector. The output is given by
		\begin{align}
			&[\text{TriMean}(\{y_v: v \in \bar{\mathcal{N}}_w\})]_i \\
			= &\frac{1}{\bar{N}_w - 2P_w} \sum_{j \in [P_w + 1, H_w]} [y_{\pi_i(j)}]_i. \nonumber
		\end{align}
		
		It has been established in \cite{guerraoui2024byzantine, zheng2025can} that TriMean is a majority-dominant aggregator. For the sake of completeness and to maintain a self-contained presentation, we provide a formal proof below.
		
		\begin{lemma}\label{lemma: majority-dominant trimean}
			TriMean is a majority-dominant aggregator.
		\end{lemma}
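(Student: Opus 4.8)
The plan is to verify Definition~\ref{def: majority-dominant} coordinate by coordinate. Fix an agent $w \in \mathcal{W}$ and suppose $y_v = z$ for all $v \in \mathcal{H}_w$ with $H_w > \bar{N}_w/2$, which is equivalent to $P_w < H_w$; recall that $\bar{N}_w = H_w + P_w$ and that we take $Q_w = P_w$. Fix a coordinate $i$ and partition the poisoned inputs according to how their $i$-th entries compare with $[z]_i$: let $a$, $b$, $c$ be the numbers of indices $v \in \mathcal{P}_w$ with $[y_v]_i < [z]_i$, $[y_v]_i = [z]_i$, $[y_v]_i > [z]_i$ respectively, so $a + b + c = P_w$. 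Then in any non-decreasing sorting $\pi_i$ of $\{[y_v]_i : v \in \bar{\mathcal{N}}_w\}$, the first $a$ positions carry values strictly below $[z]_i$, positions $a+1, \ldots, a + H_w + b$ all carry exactly $[z]_i$ (the $H_w$ regular copies together with the $b$ tied poisoned ones), and the last $c$ positions carry values strictly above $[z]_i$.

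The next step is to observe that TriMean with $Q_w = P_w$ discards positions $1, \ldots, P_w$ and $\bar{N}_w - P_w + 1, \ldots, \bar{N}_w$ and averages the entries in positions $P_w + 1, \ldots, \bar{N}_w - P_w = H_w$. This index window is non-empty precisely because of the majority condition $P_w < H_w$, and I want to show it lies entirely inside the block $[a+1, a+H_w+b]$ of positions equal to $[z]_i$. The lower inclusion $P_w + 1 \ge a + 1$ holds because $a \le P_w$ (there are only $P_w$ poisoned inputs in total), and the upper inclusion $H_w \le a + H_w + b$ is immediate. Hence every retained entry equals $[z]_i$, so $[\text{TriMean}(\{y_v : v \in \bar{\mathcal{N}}_w\})]_i = [z]_i$; since $i$ is arbitrary, $\text{TriMean}(\{y_v : v \in \bar{\mathcal{N}}_w\}) = z$, which is exactly \eqref{eq: majority-dominant}.

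There is no serious analytic obstacle here; the only thing that needs care is the bookkeeping of tied entries---making sure that poisoned coordinates equal to $[z]_i$ are accounted for, so that the trimmed window never reaches past the $[z]_i$-block on either side---and confirming that the window is non-empty, which is exactly where the strict-majority hypothesis $H_w > \bar{N}_w/2$ enters. Finally, I would remark that the same argument covers the well-specified case $Q_w \ge P_w$: the averaging window becomes $[Q_w + 1, \bar{N}_w - Q_w]$, the inclusions $Q_w + 1 \ge a + 1$ and $\bar{N}_w - Q_w = H_w + P_w - Q_w \le a + H_w + b$ still hold since $Q_w \ge P_w \ge a$ and $P_w - Q_w \le 0 \le a + b$, and non-emptiness requires $2Q_w < \bar{N}_w$, which holds whenever the estimate $Q_w$ is consistent with $\delta_{\max} < \tfrac12$.
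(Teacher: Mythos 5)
Your proof is correct and follows essentially the same route as the paper's: a coordinate-wise argument partitioning the inputs by comparison with $[z]_i$ and showing the retained window $[P_w+1, H_w]$ lies entirely inside the block of entries equal to $[z]_i$, with the strict-majority condition guaranteeing non-emptiness. Your explicit tie counting via $a,b,c$ and the remark on the well-specified case $Q_w \ge P_w$ merely make explicit what the paper states or defers, so no substantive difference.
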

		
		\begin{proof}
			Since all regular inputs are identical to vector $z$, for any agent $w \in \mathcal{W}$ and any  coordinate $i$, the set $\bar{\mathcal{N}}_w$ can be partitioned into three subsets $\mathcal{A}$, $\mathcal{B}$ and $\mathcal{C}$, formally defined as:
			\begin{align*}
				&\mathcal{A} \triangleq \{v \mid v \in \bar{\mathcal{N}}_w, [y_v]_i < [z]_i\},\\
				&\mathcal{B} \triangleq \{v \mid v \in \bar{\mathcal{N}}_w, [y_v]_i = [z]_i\},\\
				&\mathcal{C} \triangleq \{v \mid v \in \bar{\mathcal{N}}_w, [y_v]_i > [z]_i\}.
			\end{align*}
			The cardinalities of these sets satisfy $|\mathcal{A}| \le P_w$, $|\mathcal{B}| \ge H_w = \bar{N}_w - P_w$ and $|\mathcal{C}| \le P_w$. Let $\mathcal{D}$ denote the set of neighboring agents whose value are not filtered by TriMean, i.e.,
			\begin{align*}
				\mathcal{D} \triangleq \{v \mid v \in \bar{\mathcal{N}}_w, [y_{\pi_i(P_w + 1)}]_i \leq [y_v]_i \leq [y_{\pi_i(H_w)}]_i\}.
			\end{align*}
			Since $|A| \leq P_w$ and $|C| \leq P_w$, it follows that  $[y_{\pi_i(P_w + 1)}]_i = [y_{\pi_i(H_w)}]_i = [z]_i$ and therefore $\mathcal{D} \subset \mathcal{B}$. Consequently, for every agent $v \in \mathcal{D}$, the $i$-th coordinate of its vector satisfies $[y_v]_i = [z]_i$. Hence,
			\begin{align}
				[\text{TriMean}(\{y_v: v \in \bar{\mathcal{N}}_w\})]_i =  \frac{1}{\bar{N}_w - 2P_w} \sum_{v \in \mathcal{D}} [y_v]_i = [z]_i.
			\end{align}
			Since this equality holds for every coordinate $i$, we obtain
			\begin{align}
				\text{TriMean}(\{y_v: v \in \bar{\mathcal{N}}_w\}) = z
			\end{align}
			which completes the proof.
		\end{proof}

		\subsection{FABA}
		FABA is an aggregator that iteratively removes the vector with the farthest Euclidean distance from the average of the current set of inputs, and then computes the average of the remaining vectors after $P_w$ iterations. Formally, let $\mathcal{U}_w^{(t)}$ denote the set of neighboring agents that remain after the $t$-th iteration for agent $w \in \mathcal{W}$. The procedure is initialized with $\mathcal{U}_w^{(0)} = \bar{\mathcal{N}}_w$. At iteration $t$, FABA computes the average of the vectors in $\mathcal{U}_w^{(t)}$ and discards the vector  farthest from this average, thereby forming $\mathcal{U}_w^{(t+1)}$. After $P_w$ iterations, FABA yields $\mathcal{U}_w^{(P_w)}$ consisting of $H_w$ agents, and outputs
		\begin{align}
			\text{FABA}\big(\{y_v: v\in \bar{\mathcal{N}}_w \}\big)
			= \frac{1}{H_w} \sum_{v \in \mathcal{U}_w^{(P_w)}} y_v .
		\end{align}
		
		It has been established in \cite{zheng2025can} that FABA is a majority-dominant aggregator. For the sake of completeness and to ensure a self-contained presentation, we provide a formal proof of the majority-dominant property of FABA below.

		\begin{lemma}\label{lemma: majority-dominant faba}
			FABA is a majority-dominant aggregator.
		\end{lemma}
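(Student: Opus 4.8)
The plan is to prove the claim by induction on the FABA iteration index $t$, showing that no copy of $z$ is ever discarded. Concretely, I would establish the invariant that for every $t \in \{0, 1, \ldots, P_w\}$ the surviving set $\mathcal{U}_w^{(t)}$ contains all $H_w$ regular inputs (each equal to $z$) together with exactly $P_w - t$ poisoned inputs, so that $|\mathcal{U}_w^{(t)}| = \bar{N}_w - t$. The base case $t = 0$ is immediate from $\mathcal{U}_w^{(0)} = \bar{\mathcal{N}}_w$. For the inductive step, assume the invariant at iteration $t \le P_w - 1$, write $n_t \triangleq \bar{N}_w - t$, let $\mathcal{P}_w^{(t)}$ be the $P_w - t$ surviving poisoned inputs with average $\bar{y}_{w,P}^{(t)}$, and let $\mu_t$ be the current centroid. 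A direct computation using the $H_w$ copies of $z$ gives $\mu_t - z = \frac{P_w - t}{n_t}\big(\bar{y}_{w,P}^{(t)} - z\big)$ and $\bar{y}_{w,P}^{(t)} - \mu_t = \frac{H_w}{n_t}\big(\bar{y}_{w,P}^{(t)} - z\big)$.

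The crux is to argue that FABA removes a poisoned input at iteration $t$. Since the regular inputs form a strict majority, the invariant guarantees $H_w > P_w \ge P_w - t$, hence $\|\bar{y}_{w,P}^{(t)} - \mu_t\| = \frac{H_w}{n_t}\|\bar{y}_{w,P}^{(t)} - z\| > \frac{P_w - t}{n_t}\|\bar{y}_{w,P}^{(t)} - z\| = \|z - \mu_t\|$ whenever $\bar{y}_{w,P}^{(t)} \ne z$. Because $\bar{y}_{w,P}^{(t)}$ is a convex combination of the surviving poisoned inputs, convexity of the norm yields $\max_{v \in \mathcal{P}_w^{(t)}} \|y_v - \mu_t\| \ge \|\bar{y}_{w,P}^{(t)} - \mu_t\| > \|z - \mu_t\|$. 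Thus the vector farthest from $\mu_t$ is strictly farther than every copy of $z$, so it is a poisoned input, and the invariant propagates to $t + 1$. The degenerate subcase $\bar{y}_{w,P}^{(t)} = z$ must be handled separately: then $\mu_t = z$, and either all surviving inputs equal $z$, in which case the output is trivially $z$ regardless of which vector is removed, or some poisoned input lies at positive distance from $z$ and is therefore removed; in both situations the invariant is preserved.

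Finally, the invariant at $t = P_w$ states that $\mathcal{U}_w^{(P_w)}$ consists exactly of the $H_w$ copies of $z$, so $\text{FABA}\big(\{y_v : v \in \bar{\mathcal{N}}_w\}\big) = \frac{1}{H_w}\sum_{v \in \mathcal{U}_w^{(P_w)}} y_v = z$, which is the desired conclusion; the boundary case $P_w = 0$ is immediate. I expect the main obstacle to be the careful treatment of ties together with the degenerate case $\bar{y}_{w,P}^{(t)} = z$: the argument must ensure that even when several vectors attain the maximal distance from $\mu_t$, none of them is a copy of $z$, which is precisely what the strict inequality above provides; one should also confirm that the choice $Q_w = P_w$ (and, by the same reasoning, the well-specified case $Q_w \ge P_w$, where any further removals only delete copies of $z$ while leaving $\mu_t = z$) does not disturb the counting in the invariant.
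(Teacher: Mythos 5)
Your proposal is correct and follows essentially the same route as the paper: compare each regular input's distance to the current centroid, $\frac{P_w - t}{\bar{N}_w - t}\|\bar{y}_{w,P}^{(t)} - z\|$, with the poisoned average's distance, $\frac{H_w}{\bar{N}_w - t}\|\bar{y}_{w,P}^{(t)} - z\|$, use $H_w > P_w \geq P_w - t$ and convexity to conclude the farthest vector is poisoned, and iterate so that $\mathcal{U}_w^{(P_w)} = \mathcal{H}_w$. Your explicit induction and your separate treatment of the degenerate case $\bar{y}_{w,P}^{(t)} = z$ (which the paper's strict inequality silently assumes away and covers only with ``similar to the above derivation'') make the argument slightly more careful than the paper's, but the underlying idea is identical.
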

		
		\begin{proof}
			Since all regular inputs are identical to  vector $z$, we have $\bar{y}_{w, R} = z$. Further, observe that
			\begin{align}
				\frac{1}{\bar{N}_w} \sum_{v \in \mathcal{U}_w^{(0)}} y_v =  \frac{H_w}{\bar{N}_w} \cdot z+ \frac{P_w}{\bar{N}_w} \cdot \bar{y}_{w, P}.
			\end{align}
			For the regular neighbor $v \in \mathcal{H}_w$, in iteration $0$, we have
			\begin{align}
				&\|y_v - \frac{1}{\bar{N}_w} \sum_{v' \in \mathcal{U}_w^{(0)}} y_{v'}\| \\
				= &\|z - (\frac{H_w}{\bar{N}_w} \cdot z + \frac{P_w}{\bar{N}_w} \cdot \bar{y}_{w, P})\| \nonumber\\
				= & \frac{P_w}{\bar{N}_w}\|z - \bar{y}_{w, P} \| \nonumber\\
				< & \frac{H_w}{\bar{N}_w} \|z - \bar{y}_{w, P}\| \nonumber \\
				=& \|\bar{y}_{w, P} - (\frac{H_w}{\bar{N}_w} \cdot z + \frac{P_w}{\bar{N}_w} \cdot \bar{y}_{w, P})\| \nonumber\\
				=& \|\bar{y}_{w, P} - \frac{1}{\bar{N}_w} \sum_{v' \in \mathcal{U}_w^{(0)}} y_{v'}\| \nonumber\\
				=& \|\frac{1}{P_w} \sum_{v' \in \mathcal{P}_w} y_{v'} - \frac{1}{\bar{N}_w} \sum_{v'' \in \mathcal{U}_w^{(0)}} y_{v''}\| \nonumber\\
				\leq &\frac{1}{P_w} \sum_{v' \in \mathcal{P}_w} \|y_{v'} - \frac{1}{\bar{N}_w} \sum_{v'' \in \mathcal{U}_w^{(0)}} y_{v''}\| \nonumber\\
				\leq & \max_{v' \in \mathcal{P}_w} \|y_{v'} - \frac{1}{\bar{N}_w} \sum_{v'' \in \mathcal{U}_w^{(0)}} y_{v''}\|, \nonumber
			\end{align}
			where the first inequality comes from that the regular inputs form a majority such that $P_w < H_w$.
			
			Therefore, in iteration 0, FABA will discard a poisoned neighboring agent $v' \in \mathcal{P}_w$. Similar to the above derivation, we have that in any iteration $t < P_w$, FABA will discard a poisoned neighboring agent $v \in \mathcal{P}_w$. Thus, after $P_w$  iterations, $\mathcal{U}_w^{(P_w)} = \mathcal{H}_w$. Therefore,
			\begin{align}
				\text{FABA}(\{y_v: v\in \bar{\mathcal{N}}_w \}) = \frac{1}{H_w} \sum _{v \in \mathcal{H}_w} y_v = z
			\end{align}
			which completes the proof.
		\end{proof}

		\subsection{IOS}
		IOS can be interpreted as the decentralized variant of FABA, where the standard average is replaced by a weighted average. Specifically, let $\mathcal{U}_w^{(t)}$ denote the set of neighboring agents that remain after the $t$-th iteration for agent $w \in \mathcal{W}$. The procedure is initialized with $\mathcal{U}_w^{(0)} = \bar{\mathcal{N}}_w$. At iteration $t$, IOS computes the weighted average of the vectors in $\mathcal{U}_w^{(t)}$ (weighted by the mixing matrix $E$), and discards the vector farthest from this weighted average, thereby forming $\mathcal{U}_w^{(t+1)}$. After $P_w$ iterations, IOS yields $\mathcal{U}_w^{(P_w)}$ consisting of $H_w$ agents, and outputs
		\begin{align}
			\text{IOS}\big(\{y_v: v\in \bar{\mathcal{N}}_w \}\big)
			= \frac{\sum_{v \in \mathcal{U}_w^{(P_w)}}E_{wv} y_v}{\sum_{v \in \mathcal{U}_w^{(P_w)}}E_{wv}}.
		\end{align}
		
		Note that in the original version of IOS, the local vector $y_w$ of agent $w$ is never removed during any iteration, so that $w \in \mathcal{U}_w^{(t)}$ for all $t \leq P_w$. Here, we consider a slight variant in which the local vector may also be removed during the iterations, since poisoned agents also employ IOS for aggregation and their local vectors cannot be fully trusted.
		
		The doubly stochastic mixing matrix $E$, used in the IOS procedure, is commonly constructed using either the Metropolis-Hastings weight (MH-weight) or the Equal-weight scheme \cite{he2022byzantine}, which are formally defined as follows:
		\begin{align}
			\hspace{-1.5em} \text{MH-weight: } E_{wv} &= \left\{
			\begin{aligned}\label{eq: Metropolis-Hastings}
				&\frac{1}{\max\{N_w, N_v\} \!+ \!1}, && v \in \mathcal{N}_w, \\
				& 1 - \sum_{l \in \mathcal{N}_w} E_{wl}, && w = v, \\
				&0, && \text{Otherwise},
			\end{aligned}
			\right. \\
			\hspace{-1.5em} \text{Equal-weight: }  E_{wv}& = \left\{
			\begin{aligned}\label{eq: Equal-weight}
				&\frac{1}{d_{\max} + 1}, && v \in \mathcal{N}_w, \\
				& 1 - \frac{N_w}{d_{\max} + 1}, && w = v, \\
				&0, && \text{Otherwise}.
			\end{aligned}
			\right.
		\end{align}
		where $d_{\max} \triangleq \max\{N_w: w \in \mathcal{W}\}$ is the maximum degree of agents in the network.
		
		Below, we show that IOS is a majority-dominant aggregator provided that, for any agent $w \in \mathcal{W}$, the total weight assigned to its poisoned inputs is less than $\frac{1}{2}$.
		
		\begin{lemma}\label{lemma: majority-dominant ios}
			If, for any agent $w \in \mathcal{W}$, the total weight of its poisoned inputs satisfies $\sum_{v \in \mathcal{P}_w} E_{wv} < \frac{1}{2}$, then IOS is a majority-dominant aggregator.
		\end{lemma}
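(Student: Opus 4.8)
The plan is to run the same inductive scheme used for FABA in Lemma~\ref{lemma: majority-dominant faba}, but to track the \emph{weighted} average that IOS forms at each round and to use the hypothesis $\sum_{v \in \mathcal{P}_w} E_{wv} < \tfrac12$ in place of the cardinality bound $P_w < H_w$. Fix an agent $w \in \mathcal{W}$ and suppose all regular inputs equal a common vector $z$, so $\bar{y}_{w,R} = z$ and $H_w > P_w$. For each round $t$ write $\mathcal{H}_w^{(t)} \triangleq \mathcal{U}_w^{(t)} \cap \mathcal{H}_w$, $\mathcal{P}_w^{(t)} \triangleq \mathcal{U}_w^{(t)} \cap \mathcal{P}_w$, and let $\hat{y}^{(t)} \triangleq \big(\sum_{v \in \mathcal{U}_w^{(t)}} E_{wv} y_v\big)/\big(\sum_{v \in \mathcal{U}_w^{(t)}} E_{wv}\big)$ be the IOS centroid of the surviving inputs. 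I will prove by induction on $t = 0, 1, \ldots, P_w - 1$ that $\mathcal{H}_w^{(t)} = \mathcal{H}_w$ and that IOS discards a poisoned input at round $t$, so that $|\mathcal{P}_w^{(t+1)}| = |\mathcal{P}_w^{(t)}| - 1$. Since $\mathcal{U}_w^{(0)} = \bar{\mathcal{N}}_w$ contains all $H_w$ regular and all $P_w$ poisoned inputs, this forces $\mathcal{U}_w^{(P_w)} = \mathcal{H}_w$, and then $\text{IOS}(\{y_v : v \in \bar{\mathcal{N}}_w\}) = \big(\sum_{v \in \mathcal{H}_w} E_{wv} z\big)/\big(\sum_{v \in \mathcal{H}_w} E_{wv}\big) = z$, which is the claim.

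The inductive step reduces to showing that whenever $\mathcal{H}_w^{(t)} = \mathcal{H}_w$ and $\mathcal{P}_w^{(t)} \neq \emptyset$, a vector farthest from $\hat{y}^{(t)}$ in $\mathcal{U}_w^{(t)}$ is a poisoned one. Set $\alpha^{(t)} \triangleq \big(\sum_{v \in \mathcal{H}_w} E_{wv}\big)/\big(\sum_{v \in \mathcal{H}_w} E_{wv} + \sum_{v \in \mathcal{P}_w^{(t)}} E_{wv}\big)$ and let $\bar{y}_{w,P}^{(t)} \triangleq \big(\sum_{v \in \mathcal{P}_w^{(t)}} E_{wv} y_v\big)/\big(\sum_{v \in \mathcal{P}_w^{(t)}} E_{wv}\big)$, so $\hat{y}^{(t)} = \alpha^{(t)} z + (1 - \alpha^{(t)}) \bar{y}_{w,P}^{(t)}$. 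Because $E$ is doubly stochastic and $E_{wv} \neq 0 \iff v \in \bar{\mathcal{N}}_w$, we have $\sum_{v \in \bar{\mathcal{N}}_w} E_{wv} = 1$, hence $\sum_{v \in \mathcal{H}_w} E_{wv} = 1 - \sum_{v \in \mathcal{P}_w} E_{wv} > \tfrac12 > \sum_{v \in \mathcal{P}_w} E_{wv} \geq \sum_{v \in \mathcal{P}_w^{(t)}} E_{wv}$, which yields $\alpha^{(t)} > \tfrac12$. Every regular input satisfies $\|y_v - \hat{y}^{(t)}\| = \|z - \hat{y}^{(t)}\| = (1 - \alpha^{(t)})\,\|z - \bar{y}_{w,P}^{(t)}\|$, while $\bar{y}_{w,P}^{(t)} - \hat{y}^{(t)}$ is a convex combination of the vectors $y_{v'} - \hat{y}^{(t)}$, $v' \in \mathcal{P}_w^{(t)}$, so $\max_{v' \in \mathcal{P}_w^{(t)}} \|y_{v'} - \hat{y}^{(t)}\| \geq \|\bar{y}_{w,P}^{(t)} - \hat{y}^{(t)}\| = \alpha^{(t)}\,\|z - \bar{y}_{w,P}^{(t)}\|$. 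If $\|z - \bar{y}_{w,P}^{(t)}\| > 0$, the strict inequality $\alpha^{(t)} > 1 - \alpha^{(t)}$ makes some poisoned input strictly farthest, so IOS removes a poisoned one; if $\|z - \bar{y}_{w,P}^{(t)}\| = 0$ then $\hat{y}^{(t)} = z$, and either some poisoned input differs from $z$ and is strictly farthest (again a poisoned removal) or all surviving inputs already equal $z$, in which case all later rounds only delete copies of $z$ and the final output is trivially $z$.

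The only delicate point I anticipate is the bookkeeping around ties and the degenerate situation where the surviving poisoned inputs cancel to $z$; the case split on $\|z - \bar{y}_{w,P}^{(t)}\|$ above disposes of it, so I do not expect a genuine obstacle. It is worth noting, although not required for the statement, that the MH-weight and Equal-weight constructions of $E$ satisfy the hypothesis whenever each agent's fraction of poisoned inputs stays below one half: a poisoned neighbor $v$ of $w$ carries weight at most $\tfrac{1}{N_w + 1}$ in both schemes, and there are at most $\delta_{\max}\,\bar{N}_w = \delta_{\max}(N_w+1)$ such neighbors for a regular $w$, giving $\sum_{v \in \mathcal{P}_w} E_{wv} \le \delta_{\max} < \tfrac12$; this is the same regime (cf.\ condition (c) in the proof of Theorem~\ref{thm: lower bound}) in which a $(\rho, M)$-robust aggregator can exist at all, and the well-specified over-estimate $Q_w \ge P_w$ is handled identically since extra rounds after round $P_w$ only discard copies of $z$.
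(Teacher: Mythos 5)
Your proof is correct and follows essentially the same route as the paper's: compare each regular input's distance to the surviving weighted centroid against the farthest poisoned input's distance, use $\sum_{v \in \mathcal{P}_w} E_{wv} < \tfrac12 < \sum_{v \in \mathcal{H}_w} E_{wv}$ to get the strict separation, and iterate until only regular inputs remain so the output is $z$. Your explicit case split on $\|z - \bar{y}_{w,P}^{(t)}\| = 0$ is a small but welcome tightening, since the paper's strict inequality tacitly assumes the poisoned weighted average differs from $z$.
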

		
		\begin{proof}
			Denoting $z' \triangleq \frac{\sum_{v \in \mathcal{P}_w} E_{wv} y_{v}}{\sum_{v \in \mathcal{P}_w} E_{wv}}$, for any agent $w \in \mathcal{W}$, we have that
			\begin{align}
				\sum_{v \in \mathcal{U}_w^{(0)}} E_{wv} y_v = \sum_{v \in \mathcal{H}_w} E_{wv} \cdot z + \sum_{v' \in \mathcal{P}_w} E_{wv'} \cdot  z'
			\end{align}
			For the regular neighbor $v \in \mathcal{H}_w$, in iteration $0$, we have
			\begin{align}
				&\|y_v - \sum_{v' \in \mathcal{U}_w^{(0)}} E_{wv'} y_{v'}\| \\
				= &\|z - ( \sum_{v' \in \mathcal{H}_w} E_{wv'} \cdot z + \sum_{v'' \in \mathcal{P}_w} E_{wv''} \cdot  z')\| \nonumber\\
				= & \sum_{v' \in \mathcal{P}_w} E_{wv'}\|z - z' \| \nonumber\\
				< & \sum_{v' \in \mathcal{H}_w} E_{wv'} \|z - z'\| \nonumber \\
				=& \|z' -  ( \sum_{v' \in \mathcal{H}_w} E_{wv'} \cdot z + \sum_{v'' \in \mathcal{P}_w} E_{wv''} \cdot  z')\| \nonumber\\
				=& \|z' - \sum_{v' \in \mathcal{U}_w^{(0)}} E_{wv'} y_{v'}\| \nonumber\\
				=& \| \frac{\sum_{v' \in \mathcal{P}_w} E_{wv'} y_{v'}}{\sum_{v' \in \mathcal{P}_w} E_{wv'}} -  \sum_{v''' \in \mathcal{U}_w^{(0)}} E_{wv'''} y_{v'''}\| \nonumber\\
				\leq &\sum_{v' \in \mathcal{P}_w}\frac{E_{wv'}}{\sum_{v'' \in \mathcal{P}_w} E_{wv''}}  \|y_{v'} -  \sum_{v''' \in \mathcal{U}_w^{(0)}} E_{wv'''} y_{v'''}\| \nonumber\\
				\leq & \max_{v' \in \mathcal{P}_w} \|y_{v'} -  \sum_{v'' \in \mathcal{U}_w^{(0)}} E_{wv''} y_{v''}\| \nonumber
			\end{align}
			where the second equality is due to the doubly stochasticity of the mixing matrix $E$, resulting in $\sum_{v \in \mathcal{H}_w } E_{wv} + \sum_{v \in \mathcal{P}_w} E_{wv} = 1$, and the first inequality comes from  $\sum_{v \in \mathcal{P}_w} E_{wv} < \frac{1}{2}< \sum_{v \in \mathcal{H}_w} E_{wv}$.
			
			Therefore, in iteration 0, IOS will discard a poisoned neighbor $v \in \mathcal{P}_w$. Similar to the above derivation, we have that in any iteration $t < P_w$, IOS will discard a poisoned neighbor $v \in \mathcal{P}_w$. Thus, after $P_w$  iterations, $\mathcal{U}_w^{(P_w)} = \mathcal{H}_w$. Therefore,
			\begin{align}
				\text{IOS}\big(\{y_v: v\in \bar{\mathcal{N}}_w \}\big)
				= \frac{\sum_{v \in \mathcal{H}_w}E_{wv} \cdot z}{\sum_{v \in \mathcal{H}_w}E_{wv}} = z.
			\end{align}
			which completes the proof.
		\end{proof}
		
		Note that the condition $\sum_{v \in \mathcal{P}_w} E_{wv} < \frac{1}{2}$ for all $w \in \mathcal{W}$ can be satisfied in practical applications when the regular inputs form a majority. For instance, if the mixing matrix $E$ is constructed using either the MH-weight or the Equal-weight schemes, then when the network topology is a complete graph, for any agent $w \in \mathcal{W}$, we have $E_{wv} = \frac{1}{\bar{N}_w}$, and thus
		\begin{align}
			\sum_{v \in \mathcal{P}_w} E_{wv} = \frac{P_w}{\bar{N}_w} < \frac{1}{2}.
		\end{align}
		If the network topology is not complete, for a regular agent $w \in \mathcal{R}$, we have for the MH-weight mixing matrix
		\begin{align}
			&\sum_{v \in \mathcal{P}_w} E_{wv} = \sum_{v \in \mathcal{P}_w} \frac{1}{\max\{N_w, N_v\} + 1} \\ &\leq \sum_{v \in \mathcal{P}_w} \frac{1}{\bar{N}_w} = \frac{P_w}{\bar{N}_w} < \frac{1}{2}, \nonumber
		\end{align}
		and for the Equal-weight mixing matrix
		\begin{align}
			&\sum_{v \in \mathcal{P}_w} E_{wv} = \sum_{v \in \mathcal{P}_w} \frac{1}{d_{\max} + 1} \\ &\leq \sum_{v \in \mathcal{P}_w} \frac{1}{\bar{N}_w} = \frac{P_w}{\bar{N}_w} < \frac{1}{2}. \nonumber
		\end{align}
		Therefore, the condition on the total weight of the poisoned neighbors is realistic when the regular inputs constitute a majority.
		
		\subsection{Centered Clipping (CC)}
		CC is an aggregator that iteratively clips the vectors from the neighboring agents. For any agent $w \in \mathcal{W}$, CC starts from some vector $s_w^0$. At iteration $t$, the update rule of CC can be formulated as
		\begin{align}
			s_w^{t+1} = s_w^t + \frac{1}{\bar{N}_w} \sum_{v \in \bar{\mathcal{N}}_w} \text{CLIP}(y_v - s_w^t, \tau_w),
		\end{align}
		where
		\begin{align}\label{eq: clipping}
			&\text{CLIP}(y_v - s_w^t, \tau_w) \\
			= &\left\{
			\begin{aligned}
				&y_v - s_w^t, && \|y_v - s_w^t\| \leq \tau_w, \\
				&\frac{\tau_w}{||y_v - s_w^t||} (y_v - s_w^t), && \|y_v - s_w^t\| > \tau_w,
			\end{aligned}
			\right. \nonumber
		\end{align}
		and $\tau_w \geq 0$ is the clipping threshold. After $T$ iterations, CC outputs the last vector as
		\begin{align}
			\text{CC}(\{y_v : v \in \bar{\mathcal{N}}_w\}) = s_w^T.
		\end{align}
		
		Note that in practical implementations of CC, the starting point is typically chosen as $s_w^0 = y_w$ for any agent $w \in \mathcal{W}$, since the local model $y_w$ generally provides a good initialization. In addition, the clipping threshold $\tau_w$ is nonzero in practice, i.e., $\tau_w \neq 0$ for any agent $w \in \mathcal{W}$. This is because zero clipping would preclude communication, preventing the agent from incorporating information from its neighbors and resulting in poor generalization of the local model to other agents' data. Below, we show that one-step CC ($T = 1$), which is recommended to use in practice\cite{karimireddy2021learning}, is \textit{not} a majority-dominant aggregator when a practical starting point and clipping threshold are used.

		\begin{lemma} \label{lemma: majority-dominant cc}
			One-step CC is not a majority-dominant aggregator when the starting point and clipping threshold satisfy $s_w^0 = y_w$ and $\tau_w \neq 0$ for all $w \in \mathcal{W}$.
		\end{lemma}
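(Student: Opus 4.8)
The plan is to refute majority-dominance of one-step CC by an explicit counterexample, choosing the inputs so that the clipped contribution of a single poisoned input survives and pushes the output away from $z$. Concretely, I would fix a regular agent $w \in \mathcal{R}$, set $z = \bm{0} \in \R^D$ without loss of generality, and endow $w$ with exactly two neighbors besides itself, so that $\bar{\mathcal{N}}_w = \{w, v_1, v_2\}$ and $\bar{N}_w = 3$. Let $v_1$ be a regular neighbor with $y_{v_1} = z$ and let $v_2$ be a poisoned neighbor with $y_{v_2} = u$ for an arbitrary nonzero $u \in \R^D$. Since $w$ is regular, $y_w = z$; hence the regular inputs are $\{w, v_1\}$, all equal to $z$, and $|\bar{\mathcal{R}}_w \cap \mathcal{R}| = 2 > \tfrac{3}{2} = \tfrac{\bar{N}_w}{2}$, i.e., they form a strict majority. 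By Definition~\ref{def: majority-dominant}, a majority-dominant aggregator must then return $z$.

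Next I would evaluate one-step CC with the practical initialization $s_w^0 = y_w = z = \bm{0}$ and a fixed threshold $\tau_w \neq 0$. Both regular inputs give $y_w - s_w^0 = y_{v_1} - s_w^0 = \bm{0}$, so $\text{CLIP}(\bm{0}, \tau_w) = \bm{0}$ irrespective of $\tau_w$. The poisoned input gives $\text{CLIP}(u, \tau_w)$, which equals $u$ when $\|u\| \le \tau_w$ and $\tfrac{\tau_w}{\|u\|}u$ when $\|u\| > \tau_w$; since $\tau_w \neq 0$ and $u \neq \bm{0}$, in either case it is a nonzero vector. Therefore
\begin{align*}
	\text{CC}(\{y_v : v \in \bar{\mathcal{N}}_w\}) = s_w^1 = s_w^0 + \frac{1}{3}\sum_{v \in \bar{\mathcal{N}}_w} \text{CLIP}(y_v - s_w^0, \tau_w) = \tfrac{1}{3}\,\text{CLIP}(u, \tau_w) \neq \bm{0} = z,
\end{align*}
so the output is not $z$, contradicting Definition~\ref{def: majority-dominant}. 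Hence one-step CC is not majority-dominant under these (practical) choices of $s_w^0$ and $\tau_w$.

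There is essentially no analytical obstacle here; the argument is an elementary computation, so the only care needed is in the bookkeeping and in articulating the role of the hypotheses. I would make explicit that (i) the $\bar{N}_w = 3$ configuration is admissible in the paper's model because self-loops are always contained in $\bar{\mathcal{N}}_w$, (ii) the hypothesis $\tau_w \neq 0$ is indispensable, since with $\tau_w = 0$ every clip returns $\bm{0}$ and CC trivially outputs $z$, and (iii) the hypothesis $s_w^0 = y_w$ is what forces the clipping to be centered at $z$ for a regular $w$, which is precisely why the poisoned input is not cancelled. Finally, I would note in passing that the same computation goes through verbatim for a poisoned agent $w$ (where $y_w$, and hence $s_w^0$, is unconstrained) and, more generally, whenever at least one poisoned input differs from $z$, so the phenomenon is insensitive to the exact instance chosen; an analogous clipping-based construction will handle the CG case.
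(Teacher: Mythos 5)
Your proof is correct and takes essentially the same approach as the paper: a counterexample in which all regular inputs equal $z$ and a poisoned input sits elsewhere, so the clipped regular terms vanish while the nonzero clipped poisoned term $\text{CLIP}(u,\tau_w)$ shifts the one-step CC output away from $z$, contradicting Definition~\ref{def: majority-dominant}. The only cosmetic difference is that you handle the two clipping regimes ($\|u\|\le\tau_w$ and $\|u\|>\tau_w$) in one stroke by noting the clipped vector is nonzero in either case, whereas the paper argues by contradiction in two separate cases ($\tau_w \ge \|z-a\|$ and $0<\tau_w<\|z-a\|$).
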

		
		\begin{proof}
			We provide a counterexample to show that CC is not a majority-dominant aggregator when $s_w^0 = y_w$ and $\tau_w \neq 0$ for all $w \in \mathcal{W}$. Since, for any agent $w \in \mathcal{W}$, all its regular inputs are identical to the vector $z$, we have $y_v = z$ for all $v \in \mathcal{H}_w$. Moreover, we let all poisoned inputs be identical to a vector $a \neq z$, i.e., $y_v = a \neq z$ for all $v \in \mathcal{P}_w$. Consequently, for any agent $w \in \mathcal{W}$, $$\{y_v : v \in \bar{\cal{N}}_w\} = \{\underbrace{z, \cdots, z}_{v\in\mathcal{H}_w}, \underbrace{a, \cdots, a}_{v \in \mathcal{P}_w}\}.$$ Below, we consider two cases, namely $\tau_w \geq \|z - a\|$ and $0 < \tau_w < \|z - a\|$, to show that there exists no clipping threshold $\tau_w$ for CC under which it satisfies Definition~\ref{def: majority-dominant}.
			
			In the first case where $\tau_w \geq \|z - a\|$, if CC were majority-dominant, the output of any regular agent $w \in \mathcal{R}$ would satisfy
			\begin{equation}
				\begin{aligned}
					& z = s_w^0 + \frac{1}{\bar{N}_w} \sum_{v \in \bar{\mathcal{N}}_w} \mathrm{CLIP}(y_v - s_w^0, \tau_w) \\
					\Rightarrow& z = z + \frac{1}{\bar{N}_w} \sum_{v \in \mathcal{P}_w} (a - z) \\
					\Rightarrow& z = a,
				\end{aligned}
			\end{equation}
			which contradicts the condition that $a \neq z$.
			
			In the second case where $0 < \tau_w < \|z - a\|$, if CC were majority-dominant, the output of any regular agent $w \in \mathcal{R}$ would satisfy
			\begin{equation}
				\begin{aligned}
					&z  =  s_w^0 + \frac{1}{\bar{N}_w} \sum_{v \in \bar{\mathcal{N}}_w} \text{CLIP}(y_v - s_w^0, \tau_w) \\
					\Rightarrow& z= z + \frac{1}{\bar{N}_w} \sum_{v \in \mathcal{P}_w} \text{CLIP}(a - z, \tau_w) \\
					\Rightarrow& 0 = \frac{P_w}{\bar{N}_w} \cdot \frac{\tau_w}{\|a - z\|} (a - z)
				\end{aligned}
			\end{equation}
			which holds only if $\tau_w = 0$ or $a = z$. This contradicts the conditions $a \neq z$ and $\tau_w \neq 0$.
			
			Thus, no meaningful clipping threshold $\tau_w \neq 0$ exists for any agent $w \in \mathcal{W}$ that allows the CC aggregator to satisfy the majority-dominance property. This completes the proof.		
		\end{proof}
		
		
		\subsection{Clipped Gossip (CG)}
		CG is the decentralized variant of CC, which also performs iterative clipping of the vectors. Specifically, the output of CG is given by
		\begin{align}
			& \text{CG}(\{y_v: v \in \bar{\mathcal{N}}_w\})  \\
			= & y_w + \sum_{v \in \bar{\mathcal{N}}_w} E_{wv} \cdot \text{CLIP}(y_v - y_w, \tau_w), \notag
		\end{align}
		where $E$ is the doubly-stochastic mixing matrix and $\text{CLIP}(\cdot)$ is defined in \eqref{eq: clipping}.
		
		Similar to the above proof of CC, we next show that CG is \textit{not} a majority-dominant aggregator, when we choose a practical clipping threshold, i.e. $\tau_w \neq 0, \forall w \in \mathcal{W}$.
		
		\begin{lemma}\label{lemma: majority-dominant cg}
			CG is not a majority-dominant aggregator when the clipping threshold satisfies $\tau_w \neq 0$ for all $w \in \mathcal{W}$.
		\end{lemma}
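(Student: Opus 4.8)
The plan is to construct a counterexample, directly paralleling the proof of Lemma~\ref{lemma: majority-dominant cc} for CC, exploiting the fact that the clipping map $\text{CLIP}(\cdot,\tau_w)$ sends a nonzero vector to zero only when $\tau_w=0$. First I would fix a regular agent $w\in\mathcal{R}$ having at least one poisoned neighbor (such an agent exists whenever the network is connected and $\mathcal{W}\setminus\mathcal{R}\neq\emptyset$), and choose the input configuration so that the regular inputs of $w$ form a strict majority, i.e. $H_w>P_w$ with $P_w\ge 1$. I set $y_v=z$ for every $v\in\mathcal{H}_w$ and $y_v=a$ for every $v\in\mathcal{P}_w$, with $a\neq z$; in particular $y_w=z$.

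Next I would evaluate the CG output at $w$. For every regular neighbor $v\in\mathcal{H}_w$ we have $\text{CLIP}(y_v-y_w,\tau_w)=\text{CLIP}(z-z,\tau_w)=0$, while for every poisoned neighbor $v\in\mathcal{P}_w$ we have $\text{CLIP}(y_v-y_w,\tau_w)=\text{CLIP}(a-z,\tau_w)$. Hence the update rule collapses to
\begin{align*}
\text{CG}(\{y_v:v\in\bar{\mathcal{N}}_w\})
= z + \Big(\sum_{v\in\mathcal{P}_w}E_{wv}\Big)\,\text{CLIP}(a-z,\tau_w).
\end{align*}
Since $E_{wv}>0$ for every neighbor $v$ and $P_w\ge 1$, the coefficient $\sum_{v\in\mathcal{P}_w}E_{wv}$ is strictly positive, so the CG output equals $z$ if and only if $\text{CLIP}(a-z,\tau_w)=0$.

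Finally, as in the CC proof, I would split into two cases on the clipping threshold. If $\tau_w\ge\|a-z\|$, then $\text{CLIP}(a-z,\tau_w)=a-z\neq 0$. If $0<\tau_w<\|a-z\|$, then $\text{CLIP}(a-z,\tau_w)=\frac{\tau_w}{\|a-z\|}(a-z)$, which is nonzero because $\tau_w\neq 0$ and $a\neq z$. In either case $\text{CG}(\{y_v:v\in\bar{\mathcal{N}}_w\})\neq z$, contradicting \eqref{eq: majority-dominant} in Definition~\ref{def: majority-dominant}; hence CG is not majority-dominant whenever $\tau_w\neq 0$ for all $w\in\mathcal{W}$.

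I do not expect a genuine obstacle here, since the argument is essentially the CC argument lifted to the gossip form. The only points that require a bit of care are (i) checking that a suitable network and input configuration actually exist --- namely a regular agent that has a poisoned neighbor while its regular inputs still form a strict majority, which holds for instance on the same complete-type constructions used elsewhere in this section --- and (ii) accounting for the weighted rather than uniform average, which changes nothing because every mixing weight on an edge is strictly positive, so the poisoned contribution cannot be cancelled.
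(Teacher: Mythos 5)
Your proposal is correct and is essentially the paper's argument: the paper proves this lemma by invoking the same counterexample used for CC (identical regular inputs $z$, identical poisoned inputs $a\neq z$, with the two cases on $\tau_w$), and you have simply written out the details, including the correct observation that the strictly positive mixing weights $E_{wv}$ prevent the poisoned contribution from cancelling. No gap; this matches the paper's (abbreviated) proof.
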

		
		\begin{proof}
			Since CG follows a similar aggregation step as CC, we can employ the same counterexample presented above to demonstrate that CG is not a majority-dominant aggregator. As the proof mirrors that of CC, we omit the details, thereby completing the proof.
		\end{proof}

\end{document}